\documentclass{article}

\PassOptionsToPackage{authoryear, round}{natbib}

\usepackage[final]{neurips_2022}

\usepackage[utf8]{inputenc} %
\usepackage[T1]{fontenc}    %
\usepackage{hyperref}       %
\usepackage{url}            %
\usepackage{booktabs}       %
\usepackage{amsfonts}       %
\usepackage{nicefrac}       %
\usepackage{microtype}      %
\usepackage[usenames,dvipsnames]{xcolor}         %

\usepackage{amsmath}
\usepackage{amssymb}
\usepackage{amsthm}
\usepackage{mathtools}
\usepackage{dsfont}
\usepackage{mathrsfs}

\usepackage{cleveref}
\usepackage{thm-restate}  %

\usepackage{apptools}
\AtAppendix{\counterwithin{lemma}{section}}

\newcommand{\StateSpace}{\mathcal{S}}
\newcommand{\ActionSpace}{\mathcal{A}}
\newcommand{\StateSpaceSize}{S}
\newcommand{\ActionSpaceSize}{A}
\newcommand{\TransitionModel}{P}

\newcommand{\reward}{r}
\newcommand{\Horizon}{H}
\newcommand{\MDP}{\mathcal{M}}

\newcommand{\EstMDP}{\widehat{\MDP}}
\newcommand{\EstTransitionModel}{\widehat{\TransitionModel}}

\newcommand{\MDPdef}{\MDP \coloneqq (\StateSpace, \ActionSpace, \TransitionModel, \Horizon, \state_0)}
\newcommand{\MDPnoR}{MDP\ensuremath{\setminus}R\xspace}

\newcommand{\state}{s}
\newcommand{\action}{a}
\newcommand{\timestep}{h}

\newcommand{\Policy}{\pi}
\newcommand{\ExpertPolicy}{\Policy^{E}}

\newcommand{\GoodEvent}{\mathscr{E}}

\newcommand{\identity}{I}

\newcommand{\Qfun}[3]{Q_{#1}^{#2,#3}}
\newcommand{\Valfun}[3]{V_{#1}^{#2, #3}}
\newcommand{\EstValfun}[3]{\hat{V}_{#1}^{#2, #3}}

\newcommand{\TrueRew}{\reward}

\newcommand{\EstPol}{\hat{\Policy}}
\newcommand{\reals}{\mathbb{R}}

\newcommand{\Occupancy}[4]{\eta_{#1,#2}^{#3,#4}}

\newcommand{\indicator}[1]{\mathds{1}_{\{#1\}}}

\newcommand{\Rmax}{R_{\mathrm{max}}}
\renewcommand{\Pr}{\mathrm{Pr}}

\newcommand{\rewardci}{C}
\newcommand{\policyci}{\hat{\Pi}}

\newcommand{\irlalg}{\mathscr{A}}

\newcommand{\argmin}{\mathrm{argmin}}

\newcommand{\ExactFeasibleSet}{\mathcal{R}_{\mathfrak{B}}}
\newcommand{\RecoveredFeasibleSet}{\mathcal{R}_{\hat{\mathfrak{B}}}}
\newcommand{\RecoveredFeasibleSetIter}{\mathcal{R}_{\hat{\mathfrak{B}}_\episode}}
\newcommand{\RecoveredFeasibleSetIterPrime}{\mathcal{R}_{\hat{\mathfrak{B}}_{\episode'}}}

\newcommand{\episode}{k}

\newtheorem{theorem}{Theorem}
\newtheorem{definition}[theorem]{Definition}

\newtheorem{lemma}[theorem]{Lemma}
\newtheorem{corollary}[theorem]{Corollary}

\usepackage{algorithm}
\usepackage{algpseudocode}

\usepackage[toc,page,header]{appendix}
\usepackage{minitoc}

\newcommand{\specialcell}[2][c]{%
  \begin{tabular}[#1]{@{}l@{}}#2\end{tabular}}

\usepackage{wrapfig}

\usepackage{subfigure}

\definecolor{mygreen}{rgb}{0.0, 0.5, 0.0}

\usepackage{xspace}

\newcommand{\AlgNameShort}{AceIRL\xspace}
\newcommand{\AlgNameDef}{\textbf{Ac}tive \textbf{e}xploration for \textbf{I}nverse \textbf{R}einforcement \textbf{L}earning (\AlgNameShort)\xspace}

\newcommand{\codelink}{\url{https://github.com/lasgroup/aceirl}}

\title{Active Exploration for \\ Inverse Reinforcement Learning}

\author{%
    David Lindner \\
    Department of Computer Science \\
    ETH Zurich \\
    \texttt{david.lindner@inf.ethz.ch}
    \And
    Andreas Krause \\
    Department of Computer Science \\
    ETH Zurich \\
    \texttt{krausea@ethz.ch}
    \AND
    Giorgia Ramponi \\
    ETH AI Center \\
    \texttt{giorgia.ramponi@ai.ethz.ch}
}

\begin{document}

\maketitle

\begin{abstract}

\looseness -1 Inverse Reinforcement Learning (IRL) is a powerful paradigm for inferring a reward function from expert demonstrations.
Many IRL algorithms require a known transition model and sometimes even a known expert policy, or they at least require access to a generative model. However, these assumptions are too strong for many real-world applications, where the environment can be accessed only through sequential interaction.
We propose a novel IRL algorithm: \AlgNameDef, which actively explores an unknown environment and expert policy to quickly learn the expert's reward function and identify a good policy. \AlgNameShort uses previous observations to construct confidence intervals that capture plausible reward functions and find exploration policies that focus on the most informative regions of the environment.
\AlgNameShort is the first approach to active IRL with sample-complexity bounds that does not require a generative model of the environment. \AlgNameShort matches the sample complexity of active IRL with a generative model in the worst case. Additionally, we establish a problem-dependent bound that relates the sample complexity of \AlgNameShort to the suboptimality gap of a given IRL problem.
We empirically evaluate \AlgNameShort in simulations and find that it significantly outperforms more naive exploration strategies.
\end{abstract}

\section{Introduction}
\label{sec:introduction}
Reinforcement Learning \citep[RL;][]{sutton2018reinforcement} has achieved impressive results, from playing video games \citep{mnih2015human} to solving robotic control problems \citep{haarnoja2018learning}. However, in many applications, it is challenging to design a reward function that robustly describes the desired task \citep{amodei2016concrete,hendrycks2021unsolved}.
Instead of using an explicit reward function, Inverse Reinforcement Learning \cite[IRL;][]{ng2000algorithms} seeks to recover the reward by observing an \textit{expert}, e.g., an human who already knows how to perform a task. However, most existing IRL algorithms assume that the transition model, and in some cases, the expert's policy, are {\em known}. In many real-world applications, this is not given, and the agent needs to estimate the transition dynamics and the expert policy from samples.
\Cref{fig:motivating_example} shows an illustrative example where the agent can choose between different paths that have different properties, e.g., walking speeds, and lead to different goals. 
The agent has to explore the environment and query the expert policy in order to infer the expert's reward function.

IRL with sample-based estimation was only recently analyzed formally by \citet{metelli2021provably}. They decompose the error on the reward into a contribution from estimating the transition model and estimating the expert
policy. Based on this, \citet{metelli2021provably} propose an efficient sampling strategy to recover a good reward function. However, they assume a \emph{generative model} of the environment, i.e., the agent can query the transition dynamics for arbitrary states and actions. In practice, this assumption is unrealistic. The agent in \Cref{fig:motivating_example} starts in the middle and cannot learn about the properties of any path without actually \emph{exploring the environment}.

In this work, we consider IRL with unknown transition dynamics and expert policy and focus on exploring the environment in order to recover the expert's reward function efficiently. To the best of our knowledge, we present the first paper providing sample complexity guarantees for the active IRL problem without access to a generative model.

Our main contributions are:
\begin{itemize}
    \item We propose the active IRL problem in a finite-horizon, undiscounted Markov Decision Process (MDP) and characterize necessary and sufficient conditions for solving it (\Cref{sec:problem-definition}).
    \item We analyze how the estimation errors of the transition model and the expert policy contribute to the estimation error of the reward function, extending prior work to the finite-horizon setting. We provide a novel analysis of how this error affects the performance of the policy, which optimizes the recovered reward function (\Cref{sec:feasible-rewards}).
    \item We propose a novel algorithm, \AlgNameDef, which actively explores the environment and the expert policy to infer a good reward function. In each iteration, \AlgNameShort constructs an exploratory policy based on the estimation error of the recovered reward function (\Cref{sec:aceirl}).
    \item \looseness -1 We consider two different exploration strategies for \AlgNameShort. The first, more straightforward, strategy provides a sample complexity similar to the algorithm proposed by \citet{metelli2021provably}, which has access to a generative model (\Cref{sec:stopping-condition}). The second strategy takes the expected reduction in uncertainty into account (\Cref{sec:stopping-condition-dep}). This yields a tighter, problem-dependent sample complexity bound at the cost of solving a convex optimization problem in each iteration (\Cref{sec:theoretical_results}).
    \item We evaluate \AlgNameShort empirically in simulated environments and demonstrate that it achieves significantly better performance than more naive exploration strategies (\Cref{sec:experiments}). We provide code to reproduce out experiments at \codelink.
\end{itemize}

The proofs of all results presented in the main paper can be found in \Cref{app:theory}.

\begin{figure}
    \centering
    \includegraphics[width=0.8\linewidth]{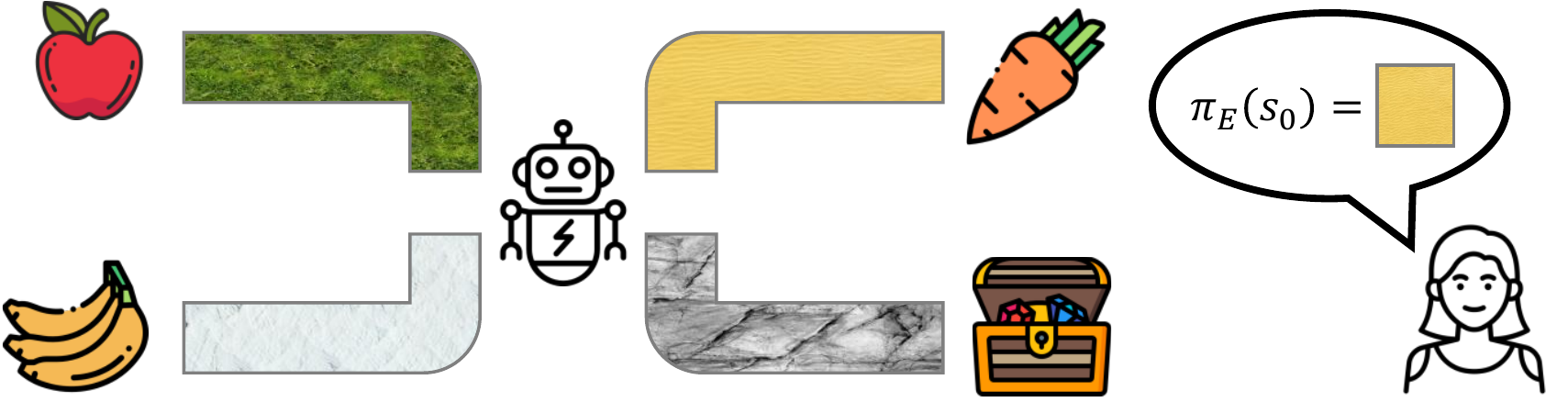}
    \caption{An illustrative example of Active IRL. The agent can choose between four paths that lead to different objects. It can get action recommendations from an expert but does not know about the properties of the different paths (the transition dynamics) or the value of different items (the reward function). The agent's goal is to infer which reward functions explain the expert's recommendations. Only observing the expert actions is not enough to do that. Rather, the agent has to explore the environment and learn about the dynamics. The human might prefer to find the treasure over the carrot but still recommend the yellow path because  the treasure is very difficult to reach. To explore efficiently, the agent has to combine its uncertainty about the expert policy with its uncertainty about the environment to choose where to explore. \AlgNameShort implements an exploration strategy that aims to infer which reward functions are consistent with the expert's recommendations as quickly as possible. We present experiments on a version of this environment in \Cref{sec:experiments}.}
    \label{fig:motivating_example}
\end{figure}

\section{Related Work}
\label{sec:rel-works}

Most IRL algorithms assume that the underlying transition model is known \citep{ratliff2006maximum,ziebart2008maximum,ramachandran2007bayesian,levine2011nonlinear}. However, the transition model usually needs to be estimated from samples, which induces an error in the recovered reward function that most papers do not study. \citet{metelli2021provably} analyze this error and the sample complexity of IRL in a tabular setting with a generative model. They propose an algorithm focused on transferring the learned reward function to a fully known target environment.
\citet{dexter2021inverse} provides a similar analysis in continuous state spaces and discrete action spaces, but they still require a generative model of the environment.
In contrast, we {\em do not} assume access to a generative model and thus need to tackle the exploration problem in IRL.

Some prior work studies active learning algorithms for IRL in a Bayesian framework but without theoretical guarantees.
\citet{lopes2009active} propose an active learning algorithm for IRL that estimates a posterior distribution over reward functions from demonstrations, requiring a prior distribution and full knowledge of the environment dynamics.  Relatedly, \citet{cohn2011comparing} consider a Bayesian IRL setting with a semi-autonomous agent that asks an expert for advice if it is uncertain about the reward.
\citet{brown2018risk} empirically study active IRL in several safety-critical environments, selecting queries using value at risk. \citet{kulick2013active} consider active learning for a robotic manipulation task, asking a human expert for advice in situations with the highest predictive uncertainty. 
Similarly, \citet{losey2018including} propose a method to learn uncertainty estimates from human corrections in a robotics context. All of these papers assume a Bayesian framework and do not provide theoretical guarantees. In contrast, our setup does not require a prior over reward functions, and we provide theoretical sample complexity guarantees for our algorithm.

A separate line of work studies sample complexity in \emph{imitation learning} where the goal is to imitate an expert policy rather than infer a reward function \citep{rajaraman2020toward,xu2020error}. In particular, \citet{abbeel2005exploration} also focus on exploration and propose to use the expert policy to explore relevant regions, whereas \citet{shani2022online} use an upper-confidence approach to exploration. Our setting is different because we focus on IRL instead of imitation learning, and we aim to explore to infer a reward function learn most effectively.

\section{Preliminaries}
\label{sec:preliminaries}

Let us first introduce necessary background and notation that we use throughout the paper.

\looseness -1 \textbf{Markov decision process.} A finite-horizon (or episodic) Markov Decision Process without reward function (\MDPnoR) is a tuple $\MDPdef$, where $\StateSpace$ is the finite state space of size $\StateSpaceSize$; $\ActionSpace$ is the  finite action space of size $\ActionSpaceSize$; $\TransitionModel: \StateSpace \times \ActionSpace \to \Delta_{\StateSpace}$ is the transition model; $\Horizon$ is the horizon and $\state_0$ is the initial state. \footnote{We can model any initial state distribution as a single initial state by modifying the transitions.} In other words, a finite-horizon \MDPnoR is a finite-horizon MDP \citep{puterman2014markov} without the reward function. We describe an agent's behaviour with a (possible stochastic) policy $\pi \in \StateSpace\times[\Horizon] \to \Delta_{\ActionSpace}$.

\looseness -1 \textbf{Reward function.} A reward function $\TrueRew: \StateSpace \times \ActionSpace \times [\Horizon] \rightarrow [0,\Rmax]$ maps state-action-time step triplets to a reward. Given an \MDPnoR $\MDP$ and a reward function $\reward$, we denote the resulting MDP by $\MDP\cup \reward$.

\textbf{Value functions and optimality conditions.} We define the \textit{Q-function} $\Qfun{\MDP\cup\reward}{\Policy}{\timestep}(\state, \action)$ and \textit{value-function} $\Valfun{\MDP\cup\reward}{\Policy}{\timestep}(\state)$ of a policy $\Policy$ in the MDP $\MDP\cup\reward$ at time step $\timestep$, state $\state$ and action $\action$ as:

\resizebox{.95\linewidth}{!}{
  \begin{minipage}{\linewidth}
\begin{equation*}
    \Qfun{\MDP\cup\reward}{\Policy}{\timestep}(\state, \action) = \reward_\timestep(\state,\action) + \sum_{\state^\prime, \action^\prime} \Policy_{\timestep+1}(\action'|\state') \TransitionModel(\state^\prime|\state,\action) \Qfun{\MDP\cup\reward}{\Policy}{\timestep+1}(\state^\prime, \action^\prime); ~~ \Valfun{\MDP\cup\reward}{\Policy}{\timestep}(\state) = \sum_{\action} \Policy_\timestep(\action|\state) \Qfun{\MDP\cup\reward}{\Policy}{\timestep}(\state, \action)
\end{equation*}
\end{minipage}}

We define the \textit{advantage function} $A^{\Policy,\timestep}_{\MDP \cup \reward}(\state,\action)$ as $A^{\Policy,\timestep}_{\MDP\cup\reward}(\state,\action) = \Qfun{\MDP\cup\reward}{\Policy}{\timestep}(\state, \action) - \Valfun{\MDP\cup\reward}{\Policy}{\timestep}(\state)$. A policy $\Policy$ is optimal if $A^{\Policy,\timestep}_{\MDP\cup\reward}(\state,\action)$ is $\le 0$ for each time step $\timestep \in [\Horizon]$, state $\state\in \StateSpace$, action $\action \in \ActionSpace$.  We denote the set of optimal policies for the MDP $\MDP \cup \reward$ with $\Pi^*_{\MDP \cup \reward}$.

\textbf{State-visitation frequencies.} 
We define $\Occupancy{\MDP}{\Policy}{\timestep}{\timestep'}(\state|\state_0)$ as the probability of being in state $\state$ at time $\timestep' \geq \timestep$ following policy $\Policy$ in \MDPnoR $\MDP$ starting in state $\state_0$ at time $\timestep$. We can compute it recursively:
\[
\Occupancy{\MDP}{\Policy}{\timestep}{\timestep}(\state'|\state) \coloneqq \indicator{\state'=\state} ~\text{ and }~
\Occupancy{\MDP}{\Policy}{\timestep}{\timestep'+1}(\state'|\state)
\coloneqq \sum_{\state'',\tilde{\action}} \TransitionModel(\state' | \state'', \tilde{\action}) \Policy_{\timestep'}(\tilde{\action}|\state'') \Occupancy{\MDP}{\Policy}{\timestep}{\timestep'}(\state''|\state).
\]
We can define the visitation frequencies for state-action pairs analogously (see \Cref{app:simulation_lemma}).

\section{Active Learning for Inverse Reinforcement Learning (Active IRL)}
\label{sec:active-learning}

In this section, we first introduce the Active Inverse Reinforcement Learning problem with and without a generative model (\Cref{sec:problem-definition}). Then, we define the feasible reward set for finite-horizon MDPs (\Cref{sec:feasible-rewards}) and characterize the error propagation on the reward function and the value function (\Cref{sec:error-propagation}), extending results by \citet{metelli2021provably} to the finite horizon setting.

\subsection{Problem Definition}
\label{sec:problem-definition}
Our goal is to design an exploration strategy to construct a dataset of demonstrations $\mathcal{D}$ such that an arbitrary IRL algorithm can recover a \textit{good} reward function from it. To be agnostic to the choice of IRL algorithm, we consider the set of all feasible reward functions for a specific expert policy. Formally, we consider IRL problems $(\MDP, \ExpertPolicy)$ consisting of an \MDPnoR and an expert policy $\ExpertPolicy$, and we define the feasible reward set as follows.
\begin{definition}[Feasible Reward Set]\label{def:feasible_reward_set}
A reward function $\reward$ is \emph{feasible} for an IRL problem $(\MDP, \ExpertPolicy)$, if and only if the expert policy $\ExpertPolicy$ is optimal in $\MDP\cup\reward$. We call the set of all feasible reward functions $\mathcal{R}_{\MDP\cup\ExpertPolicy}$ the \emph{feasible reward set}. If we estimate the transition model and expert policy from samples, we refer to the \emph{recovered feasible set} $\RecoveredFeasibleSet = \mathcal{R}_{\EstMDP\cup\hat{\Policy}^E}$ in contrast to the \emph{exact feasible set} $\ExactFeasibleSet = \mathcal{R}_{\MDP\cup\ExpertPolicy}$.
\end{definition}
Now, we can formalize the goal of Active IRL as finding a exploration strategy that satisfies the following PAC optimality criterion.
\begin{restatable}[Optimality Criterion]{definition}{OptimalityCriterion}\label{def:correct}
Let $\ExactFeasibleSet$ be the exact feasible set and $\RecoveredFeasibleSet$ be the feasible set recovered after observing $n \geq 0$ samples collected from $\MDP$ and $\ExpertPolicy$. We say that an algorithm for Active IRL is \emph{$(\epsilon,\delta,n)$-correct} if after $n$ iterations with probability at least $1-\delta$ it holds that:
\begin{align*}
    \inf_{\hat{\reward} \in \RecoveredFeasibleSet} \sup_{\hat{\Policy}^* \in \Pi^*_{\EstMDP\cup\hat{\reward}}} \max_{\action} \Bigl| \Qfun{\MDP \cup \reward}{\Policy^*}{0}(\state_0,\action) - \Qfun{\MDP \cup \reward}{\hat{\Policy}^*}{0}(\state_0,\action) \Bigr| &\leq \epsilon \quad \text{for each } \reward \in \ExactFeasibleSet, \\
    \inf_{\reward \in \ExactFeasibleSet} \sup_{\Policy^* \in \Pi^*_{\MDP\cup\reward}} \max_{\action} \Bigl| \Qfun{\MDP \cup \reward}{\Policy^*}{0}(\state_0,\action) - \Qfun{\MDP \cup \reward}{\hat{\Policy}^*}{0}(\state_0,\action) \Bigr| &\leq \epsilon \quad \text{for each } \hat{\reward} \in \RecoveredFeasibleSet,
\end{align*}
where $\Policy^*$ is an optimal policy in $\MDP\cup\reward$ and $\hat{\Policy}^*$ is an optimal policy in $\EstMDP \cup \hat{\reward}$.
\end{restatable}
\looseness -1 The first condition states that for each reward in the exact feasible set, the best reward we could estimate in the recovered feasible set has a low error everywhere. This condition is a type of ``recall'': every possible true reward function needs to be captured by the recovered feasible set. The second condition ensures that there is a possible true reward function with a low error for every possible recovered reward function. This avoids an unnecessarily large recovered feasible set. This condition is a type of ``precision'': if we recover a reward function, it has to be close to a possible true reward function.
Note, that \citet{metelli2021provably} consider a similar optimality criterion  in their Definition 5.1. However, they consider a known target environment; hence, our \Cref{def:correct} is a stronger requirement.

\subsection{Feasible Rewards in Finite-horizon MDPs}
\label{sec:feasible-rewards}

\looseness -1 \citet{ng2000algorithms} characterize the feasible reward set implicitly in the infinite horizon setting, whereas \citet{metelli2021provably} characterize it explicitly. Here, we provide similar results for a finite horizon.

\begin{restatable}[Feasible Reward Set Implicit]{lemma}{LemmaIfh}\label{thm:ifh}\label{lemma:ifh}
A reward function $\reward$ is feasible if and only if for all $\state,\action,\timestep$ it holds that:
$A^{\Policy,\timestep}_{\MDP\cup\reward}(\state,\action) = 0$ if $\ExpertPolicy_\timestep(\action|\state) \ge 0$ and $A^{\Policy,\timestep}_{\MDP\cup\reward}(\state,\action) \leq 0$ if $\ExpertPolicy_\timestep(\action|\state) = 0$.
Moreover, if the second inequality is strict, $\ExpertPolicy$ is uniquely optimal, i.e., $\Pi^*_{\MDP\cup\reward} = \{\ExpertPolicy\}$.
\end{restatable}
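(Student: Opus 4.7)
The plan is to prove the two directions of the equivalence using the standard Bellman-type characterization of optimality in finite-horizon MDPs, and then handle uniqueness with a separate strict-inequality argument. First I would restate the advantage condition more conveniently: the two conditions together say exactly that $V_{\mathcal{M}\cup r}^{\pi^E,h}(s) = \max_{a} Q_{\mathcal{M}\cup r}^{\pi^E,h}(s,a)$ at every $(s,h)$, because $V^{\pi^E,h}(s)$ is a convex combination of the $Q^{\pi^E,h}(s,a)$ weighted by $\pi^E_h(\cdot|s)$, and the two advantage constraints force all supported actions to achieve this common maximum while all unsupported actions sit at or below it.

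For the forward implication (feasibility $\Rightarrow$ advantage conditions) I would fix an $(s,h)$ and, for each action $a$, consider the policy $\tilde{\pi}$ that plays $a$ at $(s,h)$ and follows $\pi^E$ everywhere else. Optimality of $\pi^E$ gives $V^{\pi^E,h}_{\mathcal{M}\cup r}(s) \ge V^{\tilde{\pi},h}_{\mathcal{M}\cup r}(s) = Q^{\pi^E,h}_{\mathcal{M}\cup r}(s,a)$, so $A^{\pi^E,h}\le 0$ for every $a$. Writing $V^{\pi^E,h}(s)$ as $\sum_a \pi^E_h(a|s) Q^{\pi^E,h}(s,a)$ then forces the advantage to vanish on the support of $\pi^E_h(\cdot|s)$.

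For the reverse implication I would argue by backward induction on $h$, showing $V^{\pi^E,h}_{\mathcal{M}\cup r}(s) \ge V^{\pi,h}_{\mathcal{M}\cup r}(s)$ for every state $s$ and every competing policy $\pi$. The base case $h=H+1$ is trivial. For the inductive step, expand $V^{\pi,h}(s)$ one step, use the inductive hypothesis to replace the continuation values $V^{\pi,h+1}(s')$ by $V^{\pi^E,h+1}(s')$ (which upper-bounds them), recognize the resulting expression as $\sum_a \pi_h(a|s)\,Q^{\pi^E,h}(s,a)$, and bound it above by $\max_a Q^{\pi^E,h}(s,a) = V^{\pi^E,h}(s)$ using the restated advantage condition.

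For the uniqueness claim, the same backward-induction chain shows that every inequality is saturated if and only if $\pi$ puts all its mass on actions achieving the maximum of $Q^{\pi^E,h}(s,\cdot)$. If $A^{\pi^E,h}(s,a)<0$ strictly whenever $\pi^E_h(a|s)=0$, then the maximizing actions are precisely the support of $\pi^E_h(\cdot|s)$, and a careful look at where the bound can be tight forces any other optimal $\pi$ to agree with $\pi^E$ on every state reached with positive probability, hence $\Pi^*_{\mathcal{M}\cup r} = \{\pi^E\}$. The main obstacle is being careful in this last step: one needs to argue that a deviating $\pi$ actually \emph{reaches} a state where it picks an unsupported action with positive occupancy, so the strict inequality propagates to a strict drop in $V^{\pi,0}(s_0)$; handling policies that deviate only at unreachable states (which are vacuously still optimal) may require restricting the uniqueness statement to the set of states visited under $\pi^E$, matching the convention used elsewhere in the paper.
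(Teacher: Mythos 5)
Your proof is correct, but it is a genuinely different (and far more substantive) argument than the one in the paper. The paper's entire proof is the single line ``The result follows directly from \Cref{def:feasible_reward_set}'': since the Preliminaries \emph{define} a policy to be optimal precisely when $A^{\Policy,\timestep}_{\MDP\cup\reward}(\state,\action)\le 0$ for all $\state,\action,\timestep$, the lemma is essentially a restatement of the definition plus the observation that the convex combination $\Valfun{\MDP\cup\reward}{\ExpertPolicy}{\timestep}(\state)=\sum_\action \ExpertPolicy_\timestep(\action|\state)\Qfun{\MDP\cup\reward}{\ExpertPolicy}{\timestep}(\state,\action)$ forces the advantage to vanish on the support of $\ExpertPolicy_\timestep(\cdot|\state)$. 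You instead take optimality to mean value dominance over all competing policies and prove the equivalence from scratch: one-step deviations for the forward direction, backward induction (the standard verification argument) for the reverse. What your route buys is self-containedness and independence from the paper's particular convention for ``optimal''; what the paper's route buys is brevity, at the cost of leaving the equivalence between the advantage criterion and value dominance implicit. Two small points: the statement's condition ``$\ExpertPolicy_\timestep(\action|\state)\ge 0$'' is surely a typo for ``$>0$'', which you silently and correctly repair by speaking of supported actions; and on uniqueness you rightly flag the reachability caveat (which the paper ignores), though note that even on reachable states the strict inequality only pins down the \emph{support} of an optimal policy --- if $\ExpertPolicy_\timestep(\cdot|\state)$ is supported on two actions of equal value, any reweighting of them is also optimal, so $\Pi^*_{\MDP\cup\reward}=\{\ExpertPolicy\}$ holds literally only for deterministic (or support-unique) experts, another imprecision inherited from the statement rather than a flaw in your argument.
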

\begin{restatable}[Feasible Reward Set Explicit]{lemma}{LemmaEfh}\label{lemma:efh}
A reward function $\reward$ is feasible if and only if there exists an $\{A_\timestep \in \reals_{\ge 0}^{\StateSpace\times\ActionSpace}\}_{\timestep \in [\Horizon]}$ and $\{V_\timestep \in \reals^{\StateSpace}\}_{\timestep\in[\Horizon]}$ such that for all $\state,\action,\timestep$ it holds that:
\begin{align*}
\reward_\timestep(\state, \action) = \textcolor{BlueViolet}{-A_\timestep(\state, \action)\indicator{\ExpertPolicy_\timestep(\action|\state) = 0}} + \textcolor{mygreen}{ V_\timestep(\state) + \sum_{\state'} \TransitionModel(\state'|\state,\action)V_{\timestep+1}(\state')}
\end{align*}
\end{restatable}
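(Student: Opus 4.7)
The approach is to prove both directions by combining the implicit characterization from Lemma \ref{lemma:ifh} with the Bellman equations for the expert policy $\ExpertPolicy$ in $\MDP\cup\reward$.

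For necessity ($\Rightarrow$), suppose $\reward$ is feasible. The natural witnesses are $V_\timestep(\state) := \Valfun{\MDP\cup\reward}{\ExpertPolicy}{\timestep}(\state)$ and $A_\timestep(\state,\action) := -\Advfun{\MDP\cup\reward}{\ExpertPolicy}{\timestep}(\state,\action)$, up to whatever sign convention is needed to match the stated formula. By Lemma \ref{lemma:ifh} the advantage satisfies $\Advfun{\MDP\cup\reward}{\ExpertPolicy}{\timestep} \le 0$, so $A_\timestep \ge 0$, and moreover $A_\timestep(\state,\action) = 0$ whenever $\ExpertPolicy_\timestep(\action|\state) > 0$, which makes the indicator $\indicator{\ExpertPolicy_\timestep(\action|\state) = 0}$ lossless. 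The stated identity then reduces to rearranging the Bellman relation $\Qfun{\MDP\cup\reward}{\ExpertPolicy}{\timestep}(\state,\action) = \reward_\timestep(\state,\action) + \sum_{\state'}\TransitionModel(\state'|\state,\action)\Valfun{\MDP\cup\reward}{\ExpertPolicy}{\timestep+1}(\state')$ together with $\Advfun{\MDP\cup\reward}{\ExpertPolicy}{\timestep} = \Qfun{\MDP\cup\reward}{\ExpertPolicy}{\timestep} - \Valfun{\MDP\cup\reward}{\ExpertPolicy}{\timestep}$, rewritten in terms of $V_\timestep$ and $A_\timestep$.

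For sufficiency ($\Leftarrow$), take any $\{V_\timestep\}_\timestep$ and $\{A_\timestep \ge 0\}_\timestep$ for which the stated formula defines $\reward$. I would prove by backward induction on $\timestep$, with boundary $V_{\Horizon+1}\equiv 0 \equiv \Valfun{\MDP\cup\reward}{\ExpertPolicy}{\Horizon+1}$, that $\Valfun{\MDP\cup\reward}{\ExpertPolicy}{\timestep}(\state) = V_\timestep(\state)$ (modulo the same sign convention). The inductive step plugs the explicit form of $\reward_\timestep$ into the Bellman equation for $\Qfun{\MDP\cup\reward}{\ExpertPolicy}{\timestep}(\state,\action)$ and applies the inductive hypothesis to the $V_{\timestep+1}$ terms, producing an expression of the form $V_\timestep(\state) - A_\timestep(\state,\action)\indicator{\ExpertPolicy_\timestep(\action|\state) = 0}$; averaging against $\ExpertPolicy_\timestep(\cdot|\state)$ annihilates the indicator term (since surviving summands have $\ExpertPolicy_\timestep(\action|\state) = 0$) and closes the induction. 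Consequently $\Advfun{\MDP\cup\reward}{\ExpertPolicy}{\timestep}(\state,\action) = -A_\timestep(\state,\action)\indicator{\ExpertPolicy_\timestep(\action|\state) = 0}$, which vanishes on the support of $\ExpertPolicy_\timestep$ and is non-positive off-support, so Lemma \ref{lemma:ifh} certifies $\reward$ as feasible.

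The main obstacle I anticipate is purely sign bookkeeping: the stated formula carries $+\sum_{\state'}\TransitionModel(\state'|\state,\action)V_{\timestep+1}(\state')$, whereas rearranging the standard Bellman identity naturally produces a $-\sum_{\state'}\TransitionModel(\state'|\state,\action)\Valfun{\MDP\cup\reward}{\ExpertPolicy}{\timestep+1}(\state')$, so one must fix once and for all whether $V_\timestep$ stands for $+\Valfun{\MDP\cup\reward}{\ExpertPolicy}{\timestep}$ or its negation when translating between the explicit and implicit characterizations. After that convention is pinned down, the rest is routine algebra on the Bellman equation and the indicator, and the conceptual content is entirely supplied by Lemma \ref{lemma:ifh}.
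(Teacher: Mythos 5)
Your proof is correct and follows essentially the same route as the paper: both directions rest on \Cref{lemma:ifh} plus the Bellman equation with witnesses $V_\timestep = \Valfun{\MDP\cup\reward}{\ExpertPolicy}{\timestep}$ and $A_\timestep = -\Advfun{\MDP\cup\reward}{\ExpertPolicy}{\timestep}$ (the paper merely factors the $Q$-function characterization into an intermediate lemma, and your backward induction for the converse makes explicit a step the paper leaves implicit). The sign discrepancy you flag is real and not just your bookkeeping: the paper's own proof writes $\reward_\timestep(\state,\action) = \Qfun{\MDP\cup\reward}{\ExpertPolicy}{\timestep}(\state,\action) - \sum_{\state'}\TransitionModel(\state'|\state,\action)V_{\timestep+1}(\state')$ on one line and then $\cdots + \sum_{\state'}\TransitionModel(\state'|\state,\action)V_{\timestep+1}(\state')$ on the next, so the stated formula should carry a minus sign on the transition term (matching the infinite-horizon form $r = -A\indicator{\cdot} + (E-\gamma P)V$ of Metelli et al.), and fixing the convention as you describe is exactly the right repair.
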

Here, the \textcolor{BlueViolet}{\textbf{first term}} ensures there is an advantage function for $\ExpertPolicy$ and it is $0$ for actions the expert takes and $A_h(s,a)$ for actions the expert does not take. The \textcolor{mygreen}{\textbf{second term}} corresponds to reward-shaping by the value function.

\subsection{Error Propagation}
\label{sec:error-propagation}
Next, we study the error propagation of estimating the transition model $\TransitionModel$ with $\EstTransitionModel$ and the expert policy $\ExpertPolicy$ with $\hat{\Policy}^E$. In particular, we bound the estimation error on the reward as a function of the estimation errors of $\EstTransitionModel$ and $\hat{\Policy}^E$, extending a result by \citet{metelli2021provably} to the finite horizon setting.
\begin{restatable}[Error Propagation]{theorem}{ThmError}\label{thm:error_propagation}
Let $(\MDP, \ExpertPolicy)$ and $(\widehat{\MDP}, \widehat{\Policy}^E)$ be two IRL problems. Then, for any $\reward \in \mathcal{R}_{(\MDP, \ExpertPolicy)}$ there exists $\widehat{\reward} \in \widehat{\mathcal{R}}_{(\widehat{\MDP}, \widehat{\Policy}^E)}$ such that:
\begin{align*}
|\reward_\timestep(\state,\action) - \widehat{\reward}_\timestep(\state,\action)| \le A_\timestep(\state,\action)|\ExpertPolicy_\timestep(\action|\state) - \widehat{\Policy}^E_\timestep(\action|\state)| + \sum_{\state'} V_{\timestep+1}(\state') |\TransitionModel(\state'|\state,\action)-\widehat{\TransitionModel}(\state'|\state,\action)|
\end{align*}
and we can bound $V_\timestep \le (\Horizon-\timestep) \Rmax$ and $A_\timestep \le (\Horizon-\timestep) \Rmax$.
\end{restatable}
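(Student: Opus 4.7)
The plan is to parameterize the given feasible reward using Lemma~\ref{lemma:efh}, build $\hat r$ by substituting the hatted transitions and expert policy into the same parameterization, verify feasibility via the converse of Lemma~\ref{lemma:efh}, and bound the pointwise reward gap term by term.

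First I would invoke the forward direction of Lemma~\ref{lemma:efh} on $r \in \ExactFeasibleSet$ to obtain witnesses $\{A_\timestep \in \reals_{\ge 0}^{\StateSpace \times \ActionSpace}\}$ and $\{V_\timestep \in \reals^{\StateSpace}\}$ with
\[
  r_\timestep(\state,\action) = -A_\timestep(\state,\action)\,\indicator{\ExpertPolicy_\timestep(\action|\state)=0} + V_\timestep(\state) + \sum_{\state'} \TransitionModel(\state'|\state,\action)\, V_{\timestep+1}(\state').
\]
Because the decomposition has slack, I would fix the canonical choice in which $V_\timestep$ agrees with the expert value $\Valfun{\MDP\cup r}{\ExpertPolicy}{\timestep}$ (with signs consistent with the lemma) and $A_\timestep(\state,\action) = -\Advfun{\MDP\cup r}{\ExpertPolicy}{\timestep}(\state,\action) \ge 0$ on the complement of the support and $0$ on the support. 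A short backward induction on $\timestep$, using $r\in[0,\Rmax]$, then delivers the envelope bounds $V_\timestep, A_\timestep \le (\Horizon-\timestep)\Rmax$ stated at the end of the theorem.

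Next I would define the candidate $\hat r$ by keeping the same $A_\timestep$ and $V_\timestep$ but swapping in the hatted quantities:
\[
  \hat r_\timestep(\state,\action) = -A_\timestep(\state,\action)\,\indicator{\widehat{\Policy}^E_\timestep(\action|\state)=0} + V_\timestep(\state) + \sum_{\state'} \widehat{\TransitionModel}(\state'|\state,\action)\, V_{\timestep+1}(\state').
\]
Since $A_\timestep \ge 0$ and $V_\timestep \in \reals^{\StateSpace}$ are unchanged, the converse direction of Lemma~\ref{lemma:efh} applied to $(\widehat{\MDP}, \widehat{\Policy}^E)$ immediately certifies $\hat r \in \RecoveredFeasibleSet$. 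Subtracting the two displays and applying the triangle inequality then yields
\[
  |r_\timestep(\state,\action) - \hat r_\timestep(\state,\action)| \le A_\timestep(\state,\action)\,\bigl|\indicator{\ExpertPolicy_\timestep(\action|\state)=0} - \indicator{\widehat{\Policy}^E_\timestep(\action|\state)=0}\bigr| + \sum_{\state'} V_{\timestep+1}(\state')\,|\TransitionModel(\state'|\state,\action) - \widehat{\TransitionModel}(\state'|\state,\action)|,
\]
which already matches the transition-model term of the theorem exactly.

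The main obstacle is converting the indicator gap into the policy gap $|\ExpertPolicy_\timestep(\action|\state) - \widehat{\Policy}^E_\timestep(\action|\state)|$ required by the theorem, since the pointwise inequality $|\indicator{x=0} - \indicator{y=0}| \le |x-y|$ fails for stochastic policies. I would resolve this by refining the construction on the mixed coordinates (where exactly one of $\ExpertPolicy_\timestep(\action|\state)$ and $\widehat{\Policy}^E_\timestep(\action|\state)$ vanishes): redistribute the witness $A_\timestep$ so that its contribution on those coordinates is scaled by the actual policy probabilities instead of the hard indicator, paralleling the infinite-horizon device of \citet{metelli2021provably}. This forces the residual to be absorbed exactly as $A_\timestep(\state,\action)|\ExpertPolicy_\timestep(\action|\state) - \widehat{\Policy}^E_\timestep(\action|\state)|$ while preserving feasibility (the new $A_\timestep$ is still non-negative and the new $V_\timestep$ still lives in the prescribed envelope). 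Combining this refined bound with the transition term and the envelopes $A_\timestep, V_\timestep \le (\Horizon-\timestep)\Rmax$ then closes the proof.
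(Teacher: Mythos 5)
Your construction is the same as the paper's: parameterize $\reward$ via \Cref{lemma:efh}, reuse the identical witnesses $A_\timestep, V_\timestep$ with the hatted transition model and expert policy to define $\widehat{\reward}$, certify $\widehat{\reward}\in\RecoveredFeasibleSet$ by the converse direction of \Cref{lemma:efh}, and subtract to obtain the bound with the indicator gap $\bigl|\indicator{\ExpertPolicy_\timestep(\action|\state)=0}-\indicator{\widehat{\Policy}^E_\timestep(\action|\state)=0}\bigr|$ in place of $|\ExpertPolicy_\timestep(\action|\state)-\widehat{\Policy}^E_\timestep(\action|\state)|$. Up to that point everything is fine, and your derivation of the envelopes $V_\timestep, A_\timestep \le (\Horizon-\timestep)\Rmax$ from the canonical witnesses is a detail the paper omits but that is correct.

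The gap is in your final step. You correctly identify that $|\indicator{x=0}-\indicator{y=0}|\le|x-y|$ fails for stochastic policies, but the fix you propose --- rescaling the contribution of $A_\timestep$ on the mixed coordinates by the actual policy probabilities --- is not a valid construction as described: a term of the form $-A_\timestep(\state,\action)\bigl(1-\widehat{\Policy}^E_\timestep(\action|\state)\bigr)$ is nonzero on the support of $\widehat{\Policy}^E$ and depends on the action, so it cannot be absorbed into the value-shaping term $V_\timestep(\state)+\sum_{\state'}\EstTransitionModel(\state'|\state,\action)V_{\timestep+1}(\state')$ of \Cref{lemma:efh}, and the resulting $\widehat{\reward}$ is no longer certified to lie in $\RecoveredFeasibleSet$. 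The paper instead keeps the hard-indicator construction of $\widehat{\reward}$ (preserving feasibility) and converts the indicator gap to the policy gap by applying the pointwise inequality $\indicator{x=0}\le 1-x$ to both indicators before taking absolute values; note that when the expert policy is deterministic the two indicators equal $1-\ExpertPolicy_\timestep(\action|\state)$ and $1-\widehat{\Policy}^E_\timestep(\action|\state)$ exactly and the step is immediate. So you have located the one delicate point of the proof, but you have not closed it: either carry out the paper's indicator-to-probability inequality explicitly, or give a concrete redistributed witness together with a verification that it still satisfies the hypotheses of \Cref{lemma:efh} for $(\widehat{\MDP},\widehat{\Policy}^E)$.
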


In IRL, we cannot hope to recover the expert's reward function perfectly.
Instead, we aim to estimate a reward function that leads to an optimal policy with performance close to the expert's policy under the (unknown) real reward function. For example, suppose a specific state $\state$ is difficult to reach in the environment. In that case, the error on the reward function $\reward(\state,\cdot)$ will not impact the performance of the induced policy much. Formally, we are interested in studying the error propagation to the optimal value function. The next lemma will be crucial for analyzing this.
\begin{restatable}{lemma}{SimLemError}
\label{simulation_lemma_same_reward_different_policies}
Let $\MDP$ be an \MDPnoR, $\reward, \hat{\reward}$ two reward functions with optimal policies $\Policy^*, \hat{\Policy}^*$. Then,
\resizebox{\linewidth}{!}{
  \begin{minipage}{\linewidth}
\begin{align*}
\Qfun{\MDP\cup\reward}{\Policy^*}{\timestep}(\state, \action) - \Qfun{\MDP\cup\reward}{\hat{\Policy}^*}{\timestep}(\state, \action)
&\leq \sum_{\timestep'=\timestep}^{\Horizon} \sum_{\state', \action'} \left(\Occupancy{\MDP}{\Policy^*}{\timestep}{\timestep'}(\state', \action' | \state, \action) - \Occupancy{\MDP}{\hat{\Policy}^*}{\timestep}{\timestep'}(\state', \action' | \state, \action)\right) \left( \reward_{\timestep'}(\state', \action') - \hat{\reward}_{\timestep'}(\state', \action') \right)
\end{align*}
\end{minipage}}
\end{restatable}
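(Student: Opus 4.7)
The plan is to use a standard add-and-subtract trick that exploits the optimality of the two policies for their respective reward functions, then reduce everything to a same-policy, different-reward simulation identity.

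First I would observe that since $\hat{\Policy}^*$ is optimal in $\MDP\cup\hat{\reward}$, we have $\Qfun{\MDP\cup\hat{\reward}}{\hat{\Policy}^*}{\timestep}(\state,\action) \ge \Qfun{\MDP\cup\hat{\reward}}{\Policy^*}{\timestep}(\state,\action)$, so adding the non-negative quantity $\Qfun{\MDP\cup\hat{\reward}}{\hat{\Policy}^*}{\timestep}(\state,\action) - \Qfun{\MDP\cup\hat{\reward}}{\Policy^*}{\timestep}(\state,\action)$ to the LHS only increases it. After this addition, I would regroup terms by policy rather than by reward, obtaining
\begin{equation*}
\Qfun{\MDP\cup\reward}{\Policy^*}{\timestep}(\state,\action) - \Qfun{\MDP\cup\reward}{\hat{\Policy}^*}{\timestep}(\state,\action) \;\le\; \bigl(\Qfun{\MDP\cup\reward}{\Policy^*}{\timestep} - \Qfun{\MDP\cup\hat{\reward}}{\Policy^*}{\timestep}\bigr)(\state,\action) - \bigl(\Qfun{\MDP\cup\reward}{\hat{\Policy}^*}{\timestep} - \Qfun{\MDP\cup\hat{\reward}}{\hat{\Policy}^*}{\timestep}\bigr)(\state,\action).
\end{equation*}
Each of the two bracketed terms is a difference in $Q$-functions under a \emph{fixed} policy between two rewards in the \emph{same} dynamics $\MDP$.

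Next I would invoke the standard single-policy simulation identity (which the paper derives in the appendix as \Cref{app:simulation_lemma} and is an immediate telescoping using the Bellman equation): for any policy $\Policy$ and rewards $\reward,\hat{\reward}$,
\begin{equation*}
\Qfun{\MDP\cup\reward}{\Policy}{\timestep}(\state,\action) - \Qfun{\MDP\cup\hat{\reward}}{\Policy}{\timestep}(\state,\action) = \sum_{\timestep'=\timestep}^{\Horizon} \sum_{\state',\action'} \Occupancy{\MDP}{\Policy}{\timestep}{\timestep'}(\state',\action'|\state,\action)\bigl(\reward_{\timestep'}(\state',\action') - \hat{\reward}_{\timestep'}(\state',\action')\bigr),
\end{equation*}
where $\Occupancy{\MDP}{\Policy}{\timestep}{\timestep'}(\state',\action'|\state,\action)$ is the state-action visitation frequency as defined in \Cref{sec:preliminaries}. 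Applying this once with $\Policy=\Policy^*$ and once with $\Policy=\hat{\Policy}^*$ and subtracting yields exactly the claimed right-hand side.

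The only subtlety to check carefully is the initial condition at $\timestep'=\timestep$: the occupancy definition gives $\Occupancy{\MDP}{\Policy}{\timestep}{\timestep}(\state',\action'|\state,\action) = \indicator{\state'=\state,\action'=\action}$ for every policy, so the two occupancy-weighted reward differences at time $\timestep$ cancel when subtracted; the remainder of the telescoping then proceeds cleanly because the Bellman recursion for $Q^{\Policy}_{\MDP\cup\reward}$ factors through the same transition kernel $\TransitionModel$ for both reward functions. No step here is genuinely delicate — the only ``trick'' is the optimality-based upper bound at the start — so I do not expect a significant obstacle beyond being careful with the indexing of $\Occupancy{\MDP}{\Policy}{\timestep}{\timestep'}$ and with the action conditioning, which follows the state-action analogue of the recursion given in the preliminaries.
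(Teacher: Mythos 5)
Your proposal is correct and follows essentially the same route as the paper: both arguments use only the single inequality $\Qfun{\MDP\cup\hat{\reward}}{\hat{\Policy}^*}{\timestep}(\state,\action) \ge \Qfun{\MDP\cup\hat{\reward}}{\Policy^*}{\timestep}(\state,\action)$ coming from the optimality of $\hat{\Policy}^*$ for $\hat{\reward}$, and then reduce the bound to two same-policy, different-reward occupancy identities (the paper packages the first of these inside its Lemma \ref{simulation_lemma_1} rather than regrouping up front, but the algebra is identical). Your remark that the occupancy at $\timestep'=\timestep$ is policy-independent matches the paper's Definition \ref{def:occupancy} and causes no issue.
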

\label{sec:error-propagation-value-function}
\looseness -1 By combining this lemma with \Cref{thm:error_propagation}, we can decompose the error in the value function and Q-function into the error in estimating the transition model and the error in estimating the expert policy.

\section{Recovering Feasible Rewards with a Generative Model}
\label{sec:sample-complexity-generative}

As a warmup, let us first study the sample complexity of a simple \textit{uniform sampling} strategy with access to the generative model of $\MDP$. We assume we can query a generative model about a state-action pair $(\state,\action)$ to receive a next state $\state^\prime \sim \TransitionModel(\cdot|\state,\action)$ and an expert action $\action_E \sim \pi^E(\cdot|\state)$. This allows us to introduce key ideas and serves as a baseline to compare later results to. We adapt the infinite-horizon results by \citet{metelli2021provably} to the finite-horizon setting, and our stronger PAC requirement in \Cref{def:correct}. We first discuss how we can estimate the transition model and the policy (\Cref{sec:estimation}) before stating the sample complexity of the uniform sampling strategy (\Cref{sec:sample-complexity-generative-sec}).

\subsection{Estimating Transition Model and Expert Policy}
\label{sec:estimation}
In each iteration $\episode$, let $n_\episode^\timestep(\state,\action,\state^\prime)$ be the number of times we observed the transitions $(\state,\action,\state)$ at time $\timestep$ up to iteration $\episode$. Also, we define $n_\episode^\timestep(\state,\action) = \sum_{\state^\prime} n_\episode^\timestep(\state,\action,\state^\prime)$, and $n_\episode^\timestep(\state) = \sum_{\action} n_\episode^\timestep(\state,\action)$. Then we can estimate the transition model and expert policy by
\[
\EstTransitionModel_\episode(\state^\prime|\state,\action) = \frac{\sum_{\timestep=1}^\Horizon n_\episode^\timestep(\state,\action,\state^\prime)}{\max(1, \sum_{\timestep=1}^\Horizon n_\episode^\timestep(\state,\action))} \qquad\qquad
\hat{\Policy}^E_{\episode,\timestep}(\action|\state) = \frac{n_\episode^\timestep(\state,\action)}{\max(1, n_\episode^\timestep(\state))}.
\]
\looseness -1 In \Cref{app:uniform_sampling} we derive Hoeffding's confidence intervals for the transition model and the expert policy. Combining these with \Cref{thm:error_propagation}, we can compute the uncertainty on the recovered reward as:
\[
\rewardci^\timestep_\episode(\state, \action) = (\Horizon-\timestep) \Rmax \min \left(1, 2 \sqrt{\frac{2 \ell^\timestep_\episode(\state, \action)}{{n^\timestep_\episode}(\state,\action)}} \right),
\]
where $\ell^\timestep_\episode(\state, \action) =  \log\left(24 \StateSpaceSize \ActionSpaceSize \Horizon ({n^\timestep_\episode}(\state, \action))^2 / \delta \right)$. We can show that for any pair of reward functions $\reward\in\ExactFeasibleSet$ and $\hat{\reward}\in\RecoveredFeasibleSet$, the difference $|\reward_\timestep(\state,\action)-\hat{\reward}_{\episode,\timestep}(\state,\action)| \leq \rewardci^\timestep_\episode(\state,\action)$. This uncertainty estimate will be a key component in all of our theoretical analysis.

\subsection{Uniform Sampling Strategy}
\label{sec:sample-complexity-generative-sec}
In each iteration $\episode$, the \textit{uniform sampling} strategy allocates $n_{\max}$ samples uniformly over $[\Horizon] \times \StateSpace \times \ActionSpace$. It estimates the reward uncertainty and stops as soon as $\Horizon \max_{\timestep,\state,\action} \rewardci_\episode^\timestep(\state,\action) \le \epsilon$.
The next theorem characterizes the sample complexity of uniform sampling with a generative model.
\begin{theorem}[Sample Complexity of Uniform Sampling IRL]
The uniform sampling strategy fulfills \Cref{def:correct} with a number of samples upper bounded by:
\[
n \leq \tilde{\mathcal{O}} \Bigl( \Horizon^5 \Rmax^2 \StateSpaceSize \ActionSpaceSize / \epsilon^2 \Bigr),
\]
where $\mathcal{O}$ suppresses logarithmic terms.
\end{theorem}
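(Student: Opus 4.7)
The plan is to reduce the two inequalities in \Cref{def:correct} to a uniform bound on the per-triplet reward uncertainty $\rewardci_\episode^\timestep(\state,\action)$, and then count how many uniform samples are needed to drive that uncertainty below the prescribed threshold. The overall sample complexity should then follow from multiplying the per-cell sample requirement by the $\Horizon\StateSpaceSize\ActionSpaceSize$ cells that uniform sampling allocates to.

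First I would define the good event $\GoodEvent$ on which the Hoeffding confidence intervals for $\EstTransitionModel_\episode$ and $\hat\Policy^E_\episode$ (derived in \Cref{app:uniform_sampling}) hold simultaneously for every $(\state,\action,\timestep)$ and every iteration $\episode$; the $(n^\timestep_\episode)^2$ inside $\ell^\timestep_\episode$ supplies the anytime union bound so that $\Pr(\GoodEvent)\ge 1-\delta$. The deterministic consequence on $\GoodEvent$, obtained by plugging the individual confidence intervals into \Cref{thm:error_propagation}, is that for every $\reward\in\ExactFeasibleSet$ there exists $\hat\reward\in\RecoveredFeasibleSetIter$ with $|\reward_\timestep(\state,\action)-\hat\reward_\timestep(\state,\action)|\le \rewardci_\episode^\timestep(\state,\action)$ uniformly in $(\state,\action,\timestep)$, and symmetrically for every $\hat\reward\in\RecoveredFeasibleSetIter$.

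Next I would convert this per-step reward error into a $Q$-function error using \Cref{simulation_lemma_same_reward_different_policies}. Because the two state-action occupancies appearing in that lemma are probability distributions, their signed difference has total variation at most $2$ per time step, so the double sum collapses to $2\sum_{\timestep=0}^{\Horizon-1}\max_{\state,\action}|\reward_\timestep(\state,\action)-\hat\reward_\timestep(\state,\action)|\le 2\Horizon\max_{\timestep,\state,\action}\rewardci_\episode^\timestep(\state,\action)$. The stopping rule $\Horizon\max\rewardci_\episode^\timestep\le\epsilon$ therefore implies both halves of \Cref{def:correct}, up to an absolute constant that can be absorbed into $\epsilon$. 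For the sample count, uniform sampling gives $n_\episode^\timestep(\state,\action)\approx n/(\Horizon\StateSpaceSize\ActionSpaceSize)$ after a total budget of $n$; setting $(\Horizon-\timestep)\Rmax\cdot 2\sqrt{2\ell_\episode^\timestep/n_\episode^\timestep(\state,\action)}\le\epsilon/\Horizon$ and solving yields $n_\episode^\timestep(\state,\action)=\tilde{\mathcal{O}}(\Horizon^4\Rmax^2/\epsilon^2)$ per cell, and hence $n=\tilde{\mathcal{O}}(\Horizon^5\Rmax^2\StateSpaceSize\ActionSpaceSize/\epsilon^2)$ in total.

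The hard part will be the reduction in the second step: \Cref{simulation_lemma_same_reward_different_policies} as stated compares two policies that are each optimal for different rewards but in the \emph{same} underlying MDP, whereas our $\hat\Policy^*$ is optimal in $\EstMDP\cup\hat\reward$ rather than in $\MDP\cup\hat\reward$. I would handle this by inserting an intermediate policy optimal in $\MDP\cup\hat\reward$, invoking the simulation lemma on the $\MDP\cup\reward$-vs-$\MDP\cup\hat\reward$ gap, and controlling the residual $\MDP$-vs-$\EstMDP$ mismatch via the transition-model component that is already baked into $\rewardci_\episode^\timestep$ through \Cref{thm:error_propagation}. Once this decomposition is written out carefully, the remaining arguments reduce to routine bookkeeping.
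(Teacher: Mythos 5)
Your proposal follows essentially the same route as the paper: the good event and Hoeffding intervals yield the reward confidence width $\rewardci_\episode^\timestep$, \Cref{simulation_lemma_same_reward_different_policies} converts it into a $2\Horizon\max_{\timestep,\state,\action}\rewardci_\episode^\timestep$ bound on the $Q$-gap so that the stopping rule implies \Cref{def:correct}, and uniform allocation over the $\Horizon\StateSpaceSize\ActionSpaceSize$ cells gives $n=\tilde{\mathcal{O}}(\Horizon^5\Rmax^2\StateSpaceSize\ActionSpaceSize/\epsilon^2)$. The one place you are more careful than the paper is precisely the ``hard part'' you flag: the paper's \Cref{cor:uniform_sampling_stopping} applies \Cref{simulation_lemma_same_reward_different_policies} directly even though $\hat{\Policy}^*$ is optimal in $\EstMDP\cup\hat{\reward}$ rather than in $\MDP\cup\hat{\reward}$, whereas your intermediate-policy decomposition (which mirrors what the paper does later in \Cref{aceirl_correctness}) closes that gap at the cost only of an extra absolute constant in the stopping threshold.
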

This sample complexity bound appears slightly worse than the one in \citet{metelli2021provably}, who find $(1-\gamma)^{-4}$ which would translate to $\Horizon^4$. This is, however, due to the fact that we consider reward functions that can depend on the timestep $\timestep$. If we assume the reward function does not depend on $\timestep$, we gain a factor of $\Horizon$, obtaining the same result as \citet{metelli2021provably}.

\section{Active Exploration for Inverse Reinforcement Learning}
\label{sec:aceirl}

Let us now turn to our original problem: recovering the expert's reward function in an unknown environment {\em without} a generative model. This problem is harder since we need to create an exploration strategy to acquire the desired information about the environment. We now propose a novel sample-efficient exploration algorithm for IRL that we call \AlgNameDef. The algorithm takes inspiration from recent works on reward-free exploration \citep{kaufmann2021adaptive} and exploration strategies in RL \citep{auer2008near}. We divide the explanation of the algorithm into two parts. First, we introduce a simplified version of the algorithm, which comes with a problem independent sample complexity result (\Cref{sec:stopping-condition}). Next, we introduce the full algorithm, which considers the expected reduction of uncertainty in the next iteration to improve exploration and maintains a confidence set of plausibly optimal policies to focus on the most relevant regions (\Cref{sec:stopping-condition-dep}). The full algorithm provides a tighter, problem-dependent sample complexity bound (\Cref{sec:theoretical_results}). \Cref{alg:aceirl} contains pseudo-code of \AlgNameShort, and \Cref{app:theory} contains the detailed theoretical analysis including proofs of all results discussed here.

\begin{algorithm}[t]
\caption{\AlgNameShort algorithm for IRL in an unknown environment.}
\label{alg:aceirl}
\begin{algorithmic}[1]
    \State \textbf{Input:} significance $\delta \in (0,1)$, target accuracy $\epsilon$, IRL algorithm $\irlalg$, number of episodes $N_E$
    \State Initialize $\episode \gets 0$, \hspace{0.2em} $\epsilon_0 \gets \Horizon/10$
    \While{$\epsilon_\episode > \epsilon/4$}
        \State Solve (convex) optimization problem (\ref{eq:policy_optimization}) to obtain $\Policy_\episode$
        \State Explore with policy $\Policy_\episode$ for $N_E$ episodes, observing transitions and expert actions
        \State $\episode \gets \episode + 1$
        \State Update $\EstTransitionModel_{\episode}$, $\hat{\Policy}_{\episode}$, $\rewardci_\episode^\timestep$, and $\hat{\reward}_\episode \gets \irlalg(\RecoveredFeasibleSet)$
        \State Update accuracy $\epsilon_\episode \gets \max_\action \hat{E}_\episode^0(\state_0, \action)$
    \EndWhile
    \State \Return Estimated reward function $\hat{\reward}_\episode$
\end{algorithmic}
\end{algorithm}

\subsection{Uncertainty-based Exploration for IRL}
\label{sec:stopping-condition}

The first idea of \AlgNameShort is similar to reward-free UCRL~\citep{kaufmann2021adaptive}. Our goal is to fulfill the PAC requirement in \Cref{def:correct}. Hence, we start from an upper bound on the estimation error between the performance of the optimal policy $\hat{\Policy}^*$ for a reward $\hat{\reward}\in\RecoveredFeasibleSet$ in the recovered feasible set and the optimal policy $\Policy^*$ for a reward function $\reward \in \ExactFeasibleSet$ in the true MDP $\MDP$. We will then use this upper bound to drive the exploration.
For each timestep $\timestep$ and iteration $\episode$, we define the error:
\begin{equation}\label{eq:error}
\hat{e}_{\episode}^{\timestep}(\state,\action; \Policy^*, \hat{\Policy}^*) = \Bigl| \Qfun{\MDP\cup\reward}{\Policy^*}{\timestep}(\state, \action) - \Qfun{\MDP\cup\reward}{\hat{\Policy}^*}{\timestep}(\state, \action) \Bigr|.
\end{equation}
We can define an upper bound on these errors recursively with $C^\Horizon_\episode(\state,\action) = 0$ and
\begin{equation}
\label{eq:error1}
    E_\episode^\timestep(\state, \action) = \min\Bigl((\Horizon-\timestep)\Rmax, \rewardci_\episode^\timestep(\state, \action) + \sum_{\state^\prime} \EstTransitionModel(\state^\prime | \state, \action) \max_{\action^\prime \in \ActionSpace}  E_\episode^{\timestep+1}(\state^\prime, \action^\prime) \Bigr). \tag{EB1}
\end{equation}
It is straightforward to show that $\hat{e}_{\episode}^{\timestep}(\state,\action; \Policy^*, \hat{\Policy}^*) \leq E_\episode^\timestep(\state, \action)$ for any two policies $\Policy^*, \hat{\Policy}^*$.
Using this error bound, we can introduce a simplified version of \AlgNameShort that explores greedily with respect to $E_\episode^\timestep(\state, \action)$. We call this algorithm ``\AlgNameShort Greedy''. Note that this is equivalent to solving the RL problem defined by $\MDP \cup \rewardci^\timestep_\episode$; hence, we can use any RL solver to find the exploration policy in practice.
If we explore with this greedy policy, we can stop if:
\begin{align}
\label{eq:stopping-condition-simple}
    4 \max_\action E_\episode^0(\state_0, \action) \leq \epsilon. \tag{SP1}
\end{align}
We can show that when this stopping condition holds, the solution fulfills the PAC requirement \ref{def:correct}. Furthermore, we show in \Cref{app:sample-complexity} that \AlgNameShort Greedy achieves a sample complexity on order $\tilde{\mathcal{O}} \left( \Horizon^5 \Rmax^2 \StateSpaceSize \ActionSpaceSize / \epsilon^2 \right)$, which matches the sample complexity of uniform sampling \emph{with a generative model}. This is already a strong result implying that we do not need a generative model to achieve a good sample complexity for IRL. However, it turns out we can improve the algorithm further.

\subsection{Problem Dependent Exploration}
\label{sec:stopping-condition-dep}

\looseness -1 \AlgNameShort Greedy is limited in two ways: (i) it explores states that have high uncertainty so far, whereas our goal is to reduce uncertainty \emph{in the next iteration}, and (ii) it explores to reduce the uncertainty about all policies, whereas our goal is to reduce the uncertainty primarily about \emph{plausibly optimal} policies. To address these limitations, we propose two modifications that yield the full \AlgNameShort algorithm.

\textbf{Reducing future uncertainty.}
The greedy policy w.r.t. $E_\episode^\timestep$ explores states in which the estimation error on the Q-functions is large. However, note that this is not exactly what we want, namely, to reduce the uncertainty the most. In particular, if we explore for more than one episode before updating the exploration policy, we should choose an exploration policy that considers how the uncertainty will reduce during exploration. Ideally, we would explore with a policy that minimizes $E_{\episode+1}^\timestep$. However, we cannot compute this quantity exactly. Instead, we can approximate it using our current estimate of the transition model. Concretely, if we have an exploration policy $\Policy$, we can estimate the reward uncertainty at the next iteration as:
\[
\hat{\rewardci}^\timestep_{\episode+1}(\state, \action) = (\Horizon-\timestep) \Rmax \min \left(1, 2 \sqrt{\frac{2 \ell^\timestep_\episode(\state, \action)}{{n^\timestep_\episode}(\state,\action) + {\hat{n}^\timestep_\Policy}(\state,\action)}} \right),
\]
where ${\hat{n}^\timestep_\Policy}(\state,\action) = N_E \cdot \Occupancy{\MDP}{\Policy}{0}{\timestep}(\state, \action | \state_0)$ is the expected number of times $\Policy$ visits $(\state,\action)$ at time $\timestep$ and $N_E$ is the number of episodes we will explore with $\Policy$. We can use this estimate to find a policy that minimizes our estimate of $E_\episode^{\timestep+1}$. While our original approach was akin to ``uncertainty sampling'', we now have a better way to measure the ``informativeness'' of choosing an exploration policy. This is a common pattern when designing query strategies in active learning \citep{settles2009active}. Note, that this argument does not rely on the IRL problem and can be used to independently improve algorithms for reward-free exploration (cf. \Cref{app:reward-free-exploration}).

\textbf{Focusing on plausibly optimal policies.}
By exploring greedily w.r.t. $E^\timestep_\episode$, we reduce the estimation error of all policies. However, we are primarily interested in estimating the distance between policies $\Policy^* \in \Pi^*_{\MDP\cup\reward}$ and $\hat{\Policy}^* \in \Pi^*_{\MDP\cup\hat{\reward}}$ with $\reward\in\ExactFeasibleSet$ and $\hat{\reward}\in\RecoveredFeasibleSet$.
Of course, we do not know these sets, so we cannot use them directly to target the exploration. Instead, assume we can construct a set of plausibly optimal policies $\policyci_\episode$ that contains all $\Policy^*$ and $\hat{\Policy}^*_\episode$ with high probability. Then, we can redefine our upper bounds recursively as $\hat{E}_\episode^\Horizon(\state, \action) = 0$ and:
\begin{align}
\label{eq:error2}
    \hat{E}_\episode^\timestep(\state, \action) = \min\Bigl((\Horizon-\timestep)\Rmax, \rewardci_\episode^\timestep(\state, \action) + \sum_{\state'} \EstTransitionModel(\state' | \state, \action) \max_{\Policy\in\policyci_{\episode-1}} \Policy(\action'|\state') \hat{E}_\episode^{\timestep+1}(\state', \action') \Bigr), \tag{EB2}
\end{align}
\looseness -1 In contrast to (\ref{eq:error1}), we maximize over policies in $\policyci_\episode$ rather than all actions. Acting greedily with respect to $\hat{E}_\episode^\timestep(\state, \action)$ is equivalent to finding the optimal policy $\Policy_\episode \in \hat{\Pi}_k$ for the RL problem defined by $\MDP \cup C^h_k$.
To construct the set of plausibly optimal policies, we use an arbitrary IRL algorithm $\irlalg$. We only assume that $\irlalg$ will return a reward function  $\hat{\reward}_\episode\in\RecoveredFeasibleSet$. Then, we can construct a set of plausibly optimal policies as $\policyci_\episode = \{ \Policy | \Valfun{\EstMDP\cup{\hat{\reward}_\episode}}{*}{}(\state_0) - \Valfun{\EstMDP\cup{\hat{\reward}_\episode}}{\Policy}{}(\state_0) \leq 10 \epsilon_\episode \}$. We show in \Cref{app:sample-complexity2} that $\policyci_\episode$ contains both $\Policy^*$ and $\hat{\Policy}^*_\episode$ with high probability. This choice is based on ideas by \citet{zanette2019}.

We can define a stopping condition analogously to (\ref{eq:stopping-condition-simple}):
\begin{align}
\label{eq:stopping-condition}
    4 \max_\action \hat{E}_\episode^0(\state_0, \action) \leq \epsilon. \tag{SP2}
\end{align}
Again, we can prove that if the algorithms stops due to (\ref{eq:stopping-condition}), then $\RecoveredFeasibleSet$ respects \Cref{def:correct}.

\textbf{Implementing \AlgNameShort.}
To implement the full algorithm, we need to solve an optimization problem:
\begin{align}\label{eq:policy_optimization}
\Policy_\episode \in \mathop\argmin_\Policy \max_{\hat{\Policy}\in\policyci_{\episode-1}} \hat{E}_{\episode+1}^0(\state_0, \hat{\Policy}(\state_0)) \tag{ACE}
\end{align}
The solution to this problem is the exploration policy that minimizes the uncertainty at the next iteration about plausibly optimal policies. This combines both modifications we just discussed. This problem might seem difficult to solve at first, but, perhaps surprisingly, it can be formulated as a convex optimization problem solvable with standard techniques (cf. \Cref{app:optimization_problem}).

\subsection{Sample Complexity of \AlgNameShort}\label{sec:theoretical_results}

In this section, we present our main result about the sample complexity of \AlgNameShort. The result is problem-dependent, and, in particular, depends on the advantage function $A_{\MDP\cup\reward}^{*,\timestep}(\state, \action)$, where $\reward$ is the reward function in the exact feasible set $\ExactFeasibleSet$ closest to the
reward function $\hat{r}_k$ which belongs to the estimated feasible set $\RecoveredFeasibleSet$. The advantage function $A_{\MDP\cup\reward}^{*,\timestep}(\state, \action)$ acts similarly to a suboptimality gap: the closer the value of the second best action is to the best action, the harder it is to identify the best action and infer the correct reward function.
\begin{restatable}{theorem}{SampleComplexityProblemDependent}[\AlgNameShort Sample Complexity]
\AlgNameShort returns a ($\epsilon$, $\delta$, $n$)-correct solution with
\[
n \leq \tilde{\mathcal{O}}\left( \min\left[ 
\frac{\Horizon^5 \Rmax^2 \StateSpaceSize \ActionSpaceSize}{\epsilon^2},
\frac{\Horizon^4 \Rmax^2 \StateSpaceSize \ActionSpaceSize \epsilon_{\tau-1}^2}{\min_{\state,\action,\timestep} (A_{\MDP\cup\reward}^{*,\timestep}(\state, \action))^2 \epsilon^2}
\right] \right)
\]
where $\epsilon_{\tau-1}$ depends on the choice of $N_E$, the number of episodes of exploration in each iteration. $A_{\MDP\cup\reward}^{*,\timestep}(\state, \action)$ is the advantage function of $\reward \in \argmin_{\reward\in\ExactFeasibleSet} \max_{\timestep,\state,\action} (\reward_\timestep(\state,\action) - \hat{\reward}_{\episode,\timestep}(\state,\action))$, the reward function from the feasible set $\ExactFeasibleSet$ closest to the estimated reward function $\hat{\reward}_\episode$.
\end{restatable}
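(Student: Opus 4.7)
The plan is to split the proof into three parts: correctness via the stopping rule, the worst-case $\Horizon^5$ bound inherited from AceIRL Greedy, and the problem-dependent refinement driven by the advantage gap. First I would prove, by backward induction on $\timestep$, that whenever both $\Policy^*$ and $\hat{\Policy}^*_\episode$ belong to $\policyci_{\episode-1}$, the recursive quantity $\hat{E}_\episode^\timestep(\state,\action)$ of (\ref{eq:error2}) dominates $\hat{e}_\episode^\timestep(\state,\action;\Policy^*,\hat{\Policy}^*_\episode)$ defined in (\ref{eq:error}). The inductive step combines \Cref{thm:error_propagation} (to bound the pointwise reward gap by $\rewardci_\episode^\timestep$) with \Cref{simulation_lemma_same_reward_different_policies} (to propagate it through the occupancy-difference expansion), exploiting that $\max_{\Policy\in\policyci_{\episode-1}}\sum_{\action'}\Policy(\action'|\state')\hat{E}_\episode^{\timestep+1}(\state',\action')$ upper-bounds both the $\Policy^*$- and $\hat{\Policy}^*_\episode$-weighted versions of $\hat{E}_\episode^{\timestep+1}$. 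In parallel, in the spirit of the Zanette et al.\ analysis, I would verify that $\Policy^*$ and $\hat{\Policy}^*_\episode$ are each $10\epsilon_\episode$-suboptimal under $\EstMDP\cup\hat{\reward}_\episode$ with probability at least $1-\delta$, so they lie in $\policyci_\episode$. Together with the stopping criterion (\ref{eq:stopping-condition}), these facts yield the $(\epsilon,\delta)$-correctness of \Cref{def:correct}.

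For the worst-case bound I would observe that $\hat{E}_\episode^\timestep\le E_\episode^\timestep$ pointwise, because any convex combination indexed by $\Policy\in\policyci_{\episode-1}$ is dominated by the max over all actions used in (\ref{eq:error1}); hence the full AceIRL stops no later than AceIRL Greedy. For the greedy version, the exploration policy is optimistic for the ``bonus MDP'' $\MDP\cup\rewardci_\episode^\timestep$, so a reward-free UCRL-style pigeonhole argument (in the vein of Kaufmann et al.) shows that after $\tilde{\mathcal{O}}(\Horizon^5\Rmax^2\StateSpaceSize\ActionSpaceSize/\epsilon^2)$ samples the Hoeffding term satisfies $\rewardci_\episode^\timestep\le\epsilon/\Horizon$ at every reachable triplet; propagating through (\ref{eq:error1}) yields the first branch of the $\min$.

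For the problem-dependent bound I would exploit the advantage gap. Once $\epsilon_\episode$ has dropped below $\Delta\coloneqq\min_{\state,\action,\timestep}A_{\MDP\cup\reward}^{*,\timestep}(\state,\action)$, only policies whose suboptimality under $\EstMDP\cup\hat{\reward}_\episode$ is $O(\epsilon_\episode)$ remain in $\policyci_\episode$, so the max in (\ref{eq:error2}) effectively collapses to a single concentrated support. The convex programme (\ref{eq:policy_optimization}) can then steer its expected visitation $\hat{n}^\timestep_\Policy$ onto that support, and propagating the tightened bound through the recursion costs one fewer $\Horizon$ factor than the worst-case analysis, which is the source of the $\Horizon^4$ in the second branch. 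The factor $\epsilon_{\tau-1}^2/\Delta^2$ appears as the price of the transient phase, where $\tau$ is the first iteration at which $\policyci_\tau$ collapses and $\epsilon_{\tau-1}$ is the accuracy just before. The main obstacle is formalising this transition: one must show that the optimum of (\ref{eq:policy_optimization}) really does concentrate samples on the support of $\policyci_\episode$ rather than wasting them (requiring a careful inspection of the KKT conditions of the convex programme), and then split the total budget at $\tau$ so that the pre-$\tau$ cost is absorbed into the $\epsilon_{\tau-1}^2$ factor while the post-$\tau$ cost produces the gap-dependent rate. Ensuring the union bound over all $\tau$ iterations stays inside the $\tilde{\mathcal{O}}$ is the final bookkeeping step.
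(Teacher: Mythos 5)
Your first two parts track the paper's own argument closely: correctness is obtained by showing the restricted error bound dominates $4\hat{e}$ for policies in the confidence set (the paper's \Cref{lemma:ucrl_error_estmdp_dependent,lemma:ucrl_error_estreward_dependent} combined with \Cref{cor:policyci_sound}), and the $\Horizon^5$ branch is simply inherited from the greedy analysis, exactly as you propose.

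The problem-dependent branch is where there is a genuine gap. The paper's mechanism is neither a ``collapse'' of $\policyci_\episode$ once $\epsilon_\episode$ drops below the gap, nor any concentration property of the optimum of (\ref{eq:policy_optimization}); no KKT analysis of the convex program appears anywhere. Instead, the proof introduces the intermediate condition $\rewardci^\timestep_\episode(\state,\action)\le -A^{*,\timestep}_{\MDP\cup\tilde{\reward}}(\state,\action)\,\epsilon/(48\,\epsilon_{\episode-1})$ for all $(\state,\action,\timestep)$, and shows it implies the stopping rule by unrolling \Cref{lemma:ucrl_error_real_transitions_dependent} and then invoking \Cref{lemma:sum_of_losses} (the Kakade advantage decomposition): the occupancy-weighted sum of $-A^{*,\timestep}$ along any $\Policy\in\policyci_{\episode-1}$ equals $\Valfun{\MDP\cup\reward}{*}{0}(\state_0)-\Valfun{\MDP\cup\reward}{\Policy}{0}(\state_0)$, which is $O(\epsilon_{\episode-1})$ by \Cref{policyci_lemma_3}, so the $\epsilon_{\episode-1}$ factors cancel and leave $\epsilon/4$. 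This is also where one power of $\Horizon$ is saved: the length-$\Horizon$ occupancy sum is absorbed into a single value difference bounded by the confidence-set radius. The iteration count then follows from the same pseudo-count pigeonhole as in the greedy proof applied to the advantage-scaled threshold, which is precisely where $\epsilon_{\tau-1}^2/\min_{\state,\action,\timestep}(A^{*,\timestep}_{\MDP\cup\reward}(\state,\action))^2$ replaces a factor of $\Horizon^2$. Your reading of $\epsilon_{\tau-1}^2/\Delta^2$ as the ``price of a transient phase'' before the confidence set collapses is therefore not what happens ($\epsilon_{\tau-1}$ is the radius at the final iteration, and the set need never collapse), and the step you yourself flag as the main obstacle --- that the exploration policy provably concentrates samples on the surviving support --- is neither established nor needed. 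Without the advantage decomposition and the radius bound on $\policyci_{\episode-1}$, your outline does not yield the stated $\Horizon^4$ rate.
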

This result is the minimum of two terms. The first term is problem independent and it is achieved both by \AlgNameShort Greedy and the full \AlgNameShort. This bound matches the bound we saw previously with a generative model. Hence, \AlgNameShort achieves the same results without access to the generative model.
Using (\ref{eq:policy_optimization}) can yield a better sample complexity, represented by the second term in the minimum. This bound depends on two main components: the ratio $\epsilon_{\tau-1}/\epsilon$ and the advantage function $A_{\MDP\cup\reward}^{*,\timestep}(\state, \action)$. The ratio depends on the choice of $N_E$, the number of exploration episodes per iteration. If $N_E$ is small, then the $\epsilon$-ratio will be also small. If $N_E$ is large the algorithm will perform similarly to a uniform sampling strategy. \Cref{app:sample-complexity2} provides the full proof of this theorem.

\begin{wrapfigure}{R}{0.45\textwidth}\vspace*{-1em}
\centering
\includegraphics[width=\linewidth]{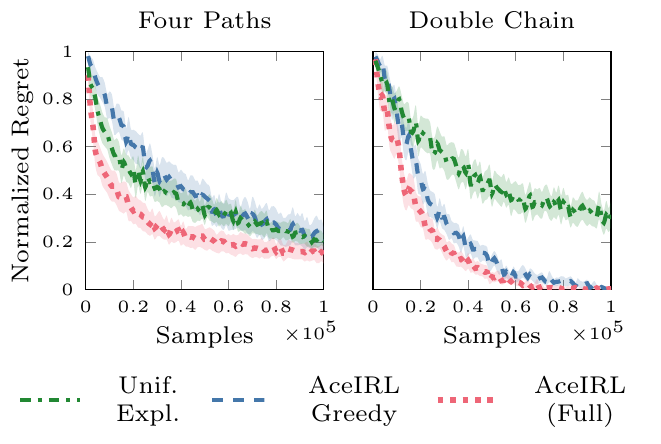}\vspace*{-1.5em}
\caption{Normalized regret (lower is better) of the policy optimizing for the inferred reward in the estimated MDP as a function of the number of samples. The plots show the mean and $95\%$ confidence intervals computed using 50 random seeds. We use $N_E = 50$.}\label{fig:results}\vspace*{-0.5em}
\end{wrapfigure}

\begin{table}[t]
    \newcommand{\result}[2]{$#1 \pm #2$}
    \newcommand{\hresult}[2]{$\mathbf{#1 \pm #2}$}
    \newcommand{\todoresult}[2]{{\color{Red}TODO}}
    \centering
    \resizebox{\linewidth}{!}{\begin{tabular}{lcccccc}
        \toprule
        & \specialcell{Uniform sampling\\(gener. model)} & \specialcell{TRAVEL (gener. model) \\ \citep{metelli2021provably}} & \specialcell{Random \\ Exploration} & \specialcell{\AlgNameShort \\ Greedy} & \specialcell{\AlgNameShort \\ (Full)} \\
        \midrule
        Four Paths (\Cref{fig:motivating_example}) & \result{1900}{71} & & \result{17840}{1886} & & & \\
        \hspace{1em}-- $N_E=50$ & & \result{1560}{76} & & \result{24180}{1747} & \hresult{10780}{1369} \\
        \hspace{1em}-- $N_E=100$ & & \result{2000}{0} & & \result{32760}{2172} & \result{14080}{1603} \\
        \hspace{1em}-- $N_E=200$ & & \result{4000}{0} & & \result{52000}{4057} & \result{16160}{2033} \\
        \midrule
        \specialcell{Double Chain \\ \citep{kaufmann2021adaptive}} & \result{1980}{66} &  & \result{23640}{2195} &  &  \\
        \hspace{1em}-- $N_E=50$ & & \result{1120}{46} &  & \result{16240}{842} & \hresult{11580}{870} \\
        \hspace{1em}-- $N_E=100$ & & \result{2000}{0} & & \result{22200}{1329} & \result{15440}{1031} \\
        \hspace{1em}-- $N_E=200$ & & \result{4000}{0} & & \result{37200}{1664} & \result{20400}{1629} \\
        \midrule
        \citet{metelli2021provably}: & & & & & & \\
        Random MDPs ($N_E=1$) & \result{22}{1} & \result{27}{1} & \hresult{22}{1} & \result{23}{1} & \hresult{21}{1} \\
        Chain ($N_E=1$) & \result{78}{2} & \result{76}{4} & \result{161}{8} & \result{153}{8} & \hresult{142}{9} \\
        Gridworld ($N_E=1$) & \result{43}{2} & \result{35}{2} & \hresult{45}{2} & \hresult{46}{3} & \hresult{48}{2} \\
        \bottomrule
    \end{tabular}}
    \caption{Sample complexity of \AlgNameShort compared to random exploration and methods that use a generative model. We show the number of samples necessary to find a policy with normalized regret less than $0.4$. We report means and standard errors computed over $50$ random seeds each. For each environment, we highlight in \textbf{bold} the method that achieves the best performance without access to a generative model. If multiple methods are within one standard error distance, we highlight all of them. Overall, \AlgNameShort is the most sample efficient method without a generative model if $N_E$ is chosen small enough. In \Cref{app:additional_results}, we show learning curves for all individual experiments.}\label{tab:results}
    \vspace*{-1.5em}
\end{table}

\newpage %

\section{Experiments}\label{sec:experiments}

We perform a series of simulation experiments to evaluate \AlgNameShort. We simulate a (deterministic) expert policy using an underlying true reward function, and compare it to the recovered reward functions.
We provide a code to reproduce all of our experiments at \codelink.

Our main evaluation metric is a \emph{normalized regret}:
\[
\bigl(\Valfun{\MDP\cup\reward}{\Policy^*}{0}(\state_0) - \Valfun{\MDP\cup\reward}{\hat{\Policy}^*}{0}(\state_0)\bigr)  / \bigl(\Valfun{\MDP\cup\reward}{\Policy^*}{0}(\state_0) - \Valfun{\MDP\cup\reward}{\bar{\Policy}^*}{0}(\state_0)\bigr),
\]
where $\Policy^*$ is the optimal policy for $\MDP\cup\reward$, $\hat{\Policy}^*$ is the optimal policy for $\EstMDP\cup\hat{\reward}$, and $\bar{\Policy}^*$ is the worst possible policy for $\reward$, i.e., the optimal policy for $\MDP\cup(-\reward)$.

We introduce the \emph{Four Paths} environment shown in \Cref{fig:motivating_example}, which consists of four chains of states that have different randomly sampled transition probabilities. One path has a goal with reward $1$; all other rewards are $0$. We also evaluate on \emph{Random MDPs} with uniformly sampled transition dynamics and reward functions, the \emph{Double Chain} environment proposed by \citet{kaufmann2021adaptive}, and the \emph{Chain} and \emph{Gridworld} environments proposed by \citet{metelli2021provably}. \Cref{app:environment_details} provides details on the transition dynamics and rewards of all environments. 

\looseness -1 We compare \AlgNameShort and \AlgNameShort Greedy to a uniformly random exploration policy, as a naive exploration strategy. Further, we consider uniform sampling with a generative model as well as TRAVEL \citep{metelli2021provably}, which can be more sample efficient because they do not need to explore the environment. Note that TRAVEL is designed to learn a reward to be transferred to a known target environment. Instead, we use a modified version that uses the estimated MDP as a target.

\looseness -1 \Cref{tab:results} shows the sample efficiency of all algorithms in all environments, measured as the number of samples needed to achieve a regret threshold of $0.4$ (different thresholds yield similar conclusions; cf. \Cref{app:experiment_details}). \AlgNameShort is the most sample efficient exploration strategy without access to a generative model; but the relative differences between the methods depend on the environment. In some cases, \AlgNameShort even performs comparably to methods using a generative model, as the theory predicts.

In the \emph{Four Paths} and \emph{Double Chain} environments, we also vary the $N_E$ parameter. \AlgNameShort performs better for small values at the computational cost of updating the exploration policy more often. If $N_E$ is too large, using \AlgNameShort can be as bad as a uniformly random exploration policy. Increasing $N_E$ hurts the performance of \AlgNameShort Greedy more severely, which does not consider $N_E$ explicitly. \Cref{fig:results} shows the normalized regret as a function of the number of samples in \emph{Four Paths} and \emph{Double Chain}. In both cases \AlgNameShort performs best. However, \AlgNameShort Greedy is worse than random exploration in the \emph{Four Paths} environment. Hence, we find that the problem dependent exploration strategy of the full algorithm significantly improves the sample efficiency.

\section{Conclusion}
\looseness -1 We considered active inverse reinforcement learning (IRL) with unknown transition dynamics and expert policy and introduced \AlgNameShort, an efficient exploration strategy to learn about both the dynamic and the expert policy with the goal of inferring the reward function as efficiently as possible.

\looseness -1 Our approach is a crucial step towards IRL algorithms with theoretical guarantees, but future work is needed to move to more practical settings. In particular, it would be interesting to extend the approach to continuous state and action spaces (e.g. using function approximation), and to obtain an efficient algorithm that does not require solving convex optimization problems. From a theoretical perspective, it would be useful to derive a lower bound on the sample complexity of the IRL problem, to understand if the IRL problem is inherently more difficult than usual RL. Beyond IRL, we believe that some of our methods could be useful for other settings, such as reward-free exploration (cf. \Cref{app:reward-free-exploration}).

\looseness -1 Sample efficient IRL is a promising way to apply RL in situations where there is no well-specified reward function available. Of course, even robust IRL algorithms pose a risk of misuse. But, we are optimistic that these methods will overall lead to safer RL systems that can be used in real applications.

\section*{Acknowledgements}

This project has received funding from the Microsoft Swiss Joint Research Center (Swiss JRC), Google Brain, and from the European Research Council (ERC) under the European Union's Horizon 2020 research and innovation programme grant agreement No 815943. We thank Bhavya Sukhija for valuable feedback on an early draft, and Guy Shacht for pointing out a mistake in an earlier version of this paper.

\newcommand{\ICML}{Proceedings of International Conference on Machine Learning (ICML)}
\newcommand{\RSS}{Proceedings of Robotics: Science and Systems (RSS)}
\newcommand{\NeurIPS}{Advances in Neural Information Processing Systems}
\newcommand{\IJCAI}{Proceedings of International Joint Conferences on Artificial Intelligence}
\newcommand{\ICLR}{International Conference on Learning Representations (ICLR)}
\newcommand{\CoRL}{Conference on Robot Learning (CoRL)}
\newcommand{\UAI}{Uncertainty in Artificial Intelligence}
\newcommand{\AAAI}{AAAI Conference on Artificial Intelligence}
\newcommand{\AISTATS}{International Conference on Artificial Intelligence and Statistics (AISTATS)}

\bibliographystyle{abbrvnat}
\bibliography{biblio}

\begin{thebibliography}{31}
\providecommand{\natexlab}[1]{#1}
\providecommand{\url}[1]{\texttt{#1}}
\expandafter\ifx\csname urlstyle\endcsname\relax
  \providecommand{\doi}[1]{doi: #1}\else
  \providecommand{\doi}{doi: \begingroup \urlstyle{rm}\Url}\fi

\bibitem[Abbeel and Ng(2005)]{abbeel2005exploration}
P.~Abbeel and A.~Y. Ng.
\newblock Exploration and apprenticeship learning in reinforcement learning.
\newblock In \emph{\ICML}, 2005.

\bibitem[Amodei et~al.(2016)Amodei, Olah, Steinhardt, Christiano, Schulman, and
  Man{\'e}]{amodei2016concrete}
D.~Amodei, C.~Olah, J.~Steinhardt, P.~Christiano, J.~Schulman, and D.~Man{\'e}.
\newblock Concrete problems in {AI} safety.
\newblock \emph{arXiv:1606.06565}, 2016.

\bibitem[Auer et~al.(2008)Auer, Jaksch, and Ortner]{auer2008near}
P.~Auer, T.~Jaksch, and R.~Ortner.
\newblock Near-optimal regret bounds for reinforcement learning.
\newblock \emph{\NeurIPS}, 2008.

\bibitem[Brown et~al.(2018)Brown, Cui, and Niekum]{brown2018risk}
D.~S. Brown, Y.~Cui, and S.~Niekum.
\newblock Risk-aware active inverse reinforcement learning.
\newblock In \emph{\CoRL}, 2018.

\bibitem[Cohn et~al.(2011)Cohn, Durfee, and Singh]{cohn2011comparing}
R.~Cohn, E.~Durfee, and S.~Singh.
\newblock Comparing action-query strategies in semi-autonomous agents.
\newblock In \emph{\AAAI}, 2011.

\bibitem[Dexter et~al.(2021)Dexter, Bello, and Honorio]{dexter2021inverse}
G.~Dexter, K.~Bello, and J.~Honorio.
\newblock Inverse reinforcement learning in a continuous state space with
  formal guarantees.
\newblock In \emph{\NeurIPS}, 2021.

\bibitem[Diamond and Boyd(2016)]{diamond2016cvxpy}
S.~Diamond and S.~Boyd.
\newblock {CVXPY}: {A} {P}ython-embedded modeling language for convex
  optimization.
\newblock \emph{Journal of Machine Learning Research}, 17\penalty0
  (83):\penalty0 1--5, 2016.

\bibitem[Haarnoja et~al.(2019)Haarnoja, Ha, Zhou, Tan, Tucker, and
  Levine]{haarnoja2018learning}
T.~Haarnoja, S.~Ha, A.~Zhou, J.~Tan, G.~Tucker, and S.~Levine.
\newblock Learning to walk via deep reinforcement learning.
\newblock In \emph{\RSS}, 2019.

\bibitem[Hendrycks et~al.(2021)Hendrycks, Carlini, Schulman, and
  Steinhardt]{hendrycks2021unsolved}
D.~Hendrycks, N.~Carlini, J.~Schulman, and J.~Steinhardt.
\newblock Unsolved problems in {ML} safety.
\newblock \emph{arXiv:2109.13916}, 2021.

\bibitem[Jaksch et~al.(2010)Jaksch, Ortner, and Auer]{jaksch2010near}
T.~Jaksch, R.~Ortner, and P.~Auer.
\newblock Near-optimal regret bounds for reinforcement learning.
\newblock \emph{Journal of Machine Learning Research}, 11:\penalty0 1563--1600,
  2010.

\bibitem[Jin et~al.(2020)Jin, Krishnamurthy, Simchowitz, and Yu]{jin2020reward}
C.~Jin, A.~Krishnamurthy, M.~Simchowitz, and T.~Yu.
\newblock Reward-free exploration for reinforcement learning.
\newblock In \emph{\ICML}, 2020.

\bibitem[Kakade and Langford(2002)]{kakade2002approximately}
S.~Kakade and J.~Langford.
\newblock Approximately optimal approximate reinforcement learning.
\newblock In \emph{\ICML}, 2002.

\bibitem[Kaufmann et~al.(2021)Kaufmann, M{\'e}nard, Domingues, Jonsson,
  Leurent, and Valko]{kaufmann2021adaptive}
E.~Kaufmann, P.~M{\'e}nard, O.~D. Domingues, A.~Jonsson, E.~Leurent, and
  M.~Valko.
\newblock Adaptive reward-free exploration.
\newblock In \emph{Algorithmic Learning Theory}, 2021.

\bibitem[Kulick et~al.(2013)Kulick, Toussaint, Lang, and
  Lopes]{kulick2013active}
J.~Kulick, M.~Toussaint, T.~Lang, and M.~Lopes.
\newblock Active learning for teaching a robot grounded relational symbols.
\newblock In \emph{\IJCAI}, 2013.

\bibitem[Levine et~al.(2011)Levine, Popovic, and Koltun]{levine2011nonlinear}
S.~Levine, Z.~Popovic, and V.~Koltun.
\newblock Nonlinear inverse reinforcement learning with {G}aussian processes.
\newblock \emph{\NeurIPS}, 2011.

\bibitem[Lopes et~al.(2009)Lopes, Melo, and Montesano]{lopes2009active}
M.~Lopes, F.~Melo, and L.~Montesano.
\newblock Active learning for reward estimation in inverse reinforcement
  learning.
\newblock In \emph{Joint European Conference on Machine Learning and Knowledge
  Discovery in Databases}, 2009.

\bibitem[Losey and O’Malley(2018)]{losey2018including}
D.~P. Losey and M.~K. O’Malley.
\newblock Including uncertainty when learning from human corrections.
\newblock In \emph{\CoRL}, 2018.

\bibitem[Metelli et~al.(2021)Metelli, Ramponi, Concetti, and
  Restelli]{metelli2021provably}
A.~M. Metelli, G.~Ramponi, A.~Concetti, and M.~Restelli.
\newblock Provably efficient learning of transferable rewards.
\newblock In \emph{\ICML}, 2021.

\bibitem[Mnih et~al.(2015)Mnih, Kavukcuoglu, Silver, Rusu, Veness, Bellemare,
  Graves, Riedmiller, Fidjeland, Ostrovski, et~al.]{mnih2015human}
V.~Mnih, K.~Kavukcuoglu, D.~Silver, A.~A. Rusu, J.~Veness, M.~G. Bellemare,
  A.~Graves, M.~Riedmiller, A.~K. Fidjeland, G.~Ostrovski, et~al.
\newblock Human-level control through deep reinforcement learning.
\newblock \emph{nature}, 518\penalty0 (7540):\penalty0 529--533, 2015.

\bibitem[Ng et~al.(2000)Ng, Russell, et~al.]{ng2000algorithms}
A.~Y. Ng, S.~J. Russell, et~al.
\newblock Algorithms for inverse reinforcement learning.
\newblock In \emph{\ICML}, 2000.

\bibitem[O'Donoghue et~al.(2016)O'Donoghue, Chu, Parikh, and Boyd]{scs}
B.~O'Donoghue, E.~Chu, N.~Parikh, and S.~Boyd.
\newblock Conic optimization via operator splitting and homogeneous self-dual
  embedding.
\newblock \emph{Journal of Optimization Theory and Applications}, 169\penalty0
  (3):\penalty0 1042--1068, June 2016.
\newblock URL \url{http://stanford.edu/~boyd/papers/scs.html}.

\bibitem[Puterman(2014)]{puterman2014markov}
M.~L. Puterman.
\newblock \emph{Markov decision processes: discrete stochastic dynamic
  programming}.
\newblock John Wiley \& Sons, 2014.

\bibitem[Rajaraman et~al.(2020)Rajaraman, Yang, Jiao, and
  Ramchandran]{rajaraman2020toward}
N.~Rajaraman, L.~Yang, J.~Jiao, and K.~Ramchandran.
\newblock Toward the fundamental limits of imitation learning.
\newblock In \emph{\NeurIPS}, 2020.

\bibitem[Ramachandran and Amir(2007)]{ramachandran2007bayesian}
D.~Ramachandran and E.~Amir.
\newblock {B}ayesian inverse reinforcement learning.
\newblock In \emph{\IJCAI}, 2007.

\bibitem[Ratliff et~al.(2006)Ratliff, Bagnell, and
  Zinkevich]{ratliff2006maximum}
N.~D. Ratliff, J.~A. Bagnell, and M.~A. Zinkevich.
\newblock Maximum margin planning.
\newblock In \emph{\ICML}, 2006.

\bibitem[Settles(2012)]{settles2009active}
B.~Settles.
\newblock \emph{Active Learning}.
\newblock Morgan \& Claypool Publishers, 2012.

\bibitem[Shani et~al.(2022)Shani, Zahavy, and Mannor]{shani2022online}
L.~Shani, T.~Zahavy, and S.~Mannor.
\newblock Online apprenticeship learning.
\newblock In \emph{\AAAI}, 2022.

\bibitem[Sutton and Barto(2018)]{sutton2018reinforcement}
R.~S. Sutton and A.~G. Barto.
\newblock \emph{Reinforcement learning: An introduction}.
\newblock MIT press, 2018.

\bibitem[Xu et~al.(2020)Xu, Li, and Yu]{xu2020error}
T.~Xu, Z.~Li, and Y.~Yu.
\newblock Error bounds of imitating policies and environments.
\newblock In \emph{\NeurIPS}, 2020.

\bibitem[Zanette et~al.(2019)Zanette, Kochenderfer, and Brunskill]{zanette2019}
A.~Zanette, M.~J. Kochenderfer, and E.~Brunskill.
\newblock Almost horizon-free structure-aware best policy identification with a
  generative model.
\newblock In H.~Wallach, H.~Larochelle, A.~Beygelzimer, F.~d\textquotesingle
  Alch\'{e}-Buc, E.~Fox, and R.~Garnett, editors, \emph{\NeurIPS}, 2019.

\bibitem[Ziebart et~al.(2008)Ziebart, Maas, Bagnell, Dey,
  et~al.]{ziebart2008maximum}
B.~D. Ziebart, A.~L. Maas, J.~A. Bagnell, A.~K. Dey, et~al.
\newblock Maximum entropy inverse reinforcement learning.
\newblock In \emph{\AAAI}, 2008.

\end{thebibliography}

\clearpage
\appendix\small

\setcounter{table}{0}
\renewcommand{\thetable}{\Alph{section}.\arabic{table}}
\setcounter{figure}{0}
\renewcommand{\thefigure}{\Alph{section}.\arabic{figure}}

\setcounter{theorem}{0}
\renewcommand{\thetheorem}{\Alph{section}.\arabic{theorem}}
\renewcommand{\thedefinition}{\Alph{section}.\arabic{theorem}}
\renewcommand{\thelemma}{\Alph{section}.\arabic{theorem}}
\renewcommand{\thecorollary}{\Alph{section}.\arabic{theorem}}

\renewcommand \thepart{}
\renewcommand \partname{}
\doparttoc
\faketableofcontents
\part{Appendix}
\parttoc

\section{Overview of Notation}

In \Cref{tab:notation}, we provide a reference of the notation and symbols used in our paper.

\begin{table}[h]
    \centering
    \caption{Overview of our notation}\label{tab:notation}
    \resizebox{.99\linewidth}{!}{
    \begin{tabular}{ccc}
        \toprule
        Symbol & Name & Signature  \\
        \midrule
        $\MDP$ & Markov decision process without reward (\MDPnoR) & $(\StateSpace, \ActionSpace, \TransitionModel, \Horizon, \state_0)$ \\
        $\StateSpace$ & State space & \\
        $\ActionSpace$ & Action space & \\
        $\TransitionModel$ & Transition model & $\StateSpace \times \ActionSpace \to \Delta_\StateSpace$ \\
        $\Horizon$ & Horizon & $\Horizon \in \mathbb{N}^+$ \\
        $\state_0$ & Initial state & $\state_0\in\StateSpace$ \\
        $\Policy$ & Policy & $\StateSpace\times [\Horizon] \to \Delta_\ActionSpace$ \\
        $\reward$ & Reward function & $\StateSpace\times\ActionSpace\times [\Horizon] \to [0, \Rmax]$, $\Rmax\in\reals^+$ \\
        $\MDP\cup\reward$ & Markov decision process (MDP) & $(\StateSpace, \ActionSpace, \TransitionModel, \Horizon, \state_0, \reward)$ \\
        $\Qfun{\MDP\cup\reward}{\Policy}{\timestep}$ & Q-function (of $\Policy$ in $\MDP\cup\reward$) & $\StateSpace\times\ActionSpace\times [\Horizon] \to \reals$ \\
        $\Valfun{\MDP\cup\reward}{\Policy}{\timestep}$ & Value function (of $\Policy$ in $\MDP\cup\reward$) & $\StateSpace\times [\Horizon] \to \reals$ \\
        $A^{\Policy,\timestep}_{\MDP \cup \reward}$ & Advantage function (of $\Policy$ in $\MDP\cup\reward$) & $\StateSpace\times\ActionSpace\times [\Horizon] \to \reals$ \\
        $\Occupancy{\MDP}{\Policy}{\timestep}{\cdot}(\cdot|\state_0)$ & \specialcell{State-visitation frequency \\ (conditioned on state)} & $[\Horizon] \to \Delta_\StateSpace$ \\
        $\Occupancy{\MDP}{\Policy}{\timestep}{\cdot}(\cdot|\state_0, \action_0)$ & \specialcell{State-visitation frequency \\ (conditioned on state-action)} & $[\Horizon] \to \Delta_\StateSpace$ \\
        $\Occupancy{\MDP}{\Policy}{\timestep}{\cdot}(\cdot,\cdot|\state_0)$ & \specialcell{State-action-visitation frequency \\ (conditioned on state)} & $[\Horizon]\times\StateSpace \to \Delta_\ActionSpace$ \\
        $\Occupancy{\MDP}{\Policy}{\timestep}{\cdot}(\cdot,\cdot|\state_0, \action_0)$ & \specialcell{State-action-visitation frequency \\ (conditioned on state)} & $[\Horizon]\times\StateSpace \to \Delta_\ActionSpace$ \\
        $\mathcal{R}_{\MDP\cup\reward}$ & Feasible set of $\MDP\cup\reward$ & \\
        $\ExactFeasibleSet = \mathcal{R}_{\MDP\cup\ExpertPolicy}$ & Exact feasible set & \\
        $\RecoveredFeasibleSet = \mathcal{R}_{\EstMDP\cup\hat{\Policy}^E}$ & Recovered feasible set & \\
        $\epsilon$ & Target accuracy & $\epsilon \in\reals^+$ \\
        $\delta$ & Significancy & $\delta \in (0,1)$ \\
        $N_E$ & Number of exploration episodes & $N_E \in \mathbb{N}^+$ \\
        \bottomrule
    \end{tabular}
    }
\end{table}

\section{Proofs of Theoretical Results}
\label{app:theory}

\subsection{Simulation Lemmas}
\label{app:simulation_lemma}

In this section, we establish several simulation lemmas that we will use throughout our analysis. Some of the results were already derived in prior work for the infinite horizon setting, e.g., by \citet{zanette2019} and \citet{metelli2021provably}. For completeness, we provide proofs for all results in the finite-horizon setting.

\begin{definition}[Occupancy measures]\label{def:occupancy}
We define $\Occupancy{\MDP}{\Policy}{\timestep}{\timestep'}(\state|\state_0)$ as the probability of being in state $\state$ at timestep $\timestep' \geq \timestep$ following a policy $\Policy$ in \MDPnoR $\MDP$ starting in state $\state_0$ at timestep $\timestep$. We can compute it recursively as:
\begin{align*}
&\Occupancy{\MDP}{\Policy}{\timestep}{\timestep}(\state'|\state) \coloneqq \indicator{\state'=\state} \\
&\Occupancy{\MDP}{\Policy}{\timestep}{\timestep'+1}(\state'|\state)
\coloneqq \sum_{\state'',\tilde{\action}} \TransitionModel(\state' | \state'', \tilde{\action}) \Policy_{\timestep'}(\tilde{\action}|\state'') \Occupancy{\MDP}{\Policy}{\timestep}{\timestep'}(\state''|\state)
\end{align*}

We define the same probability for state-action pairs analogously:
\begin{align*}
&\Occupancy{\MDP}{\Policy}{\timestep}{\timestep'}(\state', \action'|\state, \action) \coloneqq \indicator{\state'=\state, \action'=\action} \\
&\Occupancy{\MDP}{\Policy}{\timestep}{\timestep'+1}(\state', \action'|\state, \action) 
\coloneqq \sum_{\tilde{\state},\tilde{\action}} 
\Policy_{\timestep'}(\action'|\state') \TransitionModel(\state' | \tilde{\state}, \tilde{\action}) \Occupancy{\MDP}{\Policy}{\timestep}{\timestep'}(\tilde{\state}, \tilde{\action}|\state, \action)
\end{align*}
as well as
\begin{align*}
&\Occupancy{\MDP}{\Policy}{\timestep}{\timestep}(\state', \action'|\state) \coloneqq \Policy_\timestep(\action' | \state') \indicator{\state'=\state} \\
&\Occupancy{\MDP}{\Policy}{\timestep}{\timestep'+1}(\state', \action'|\state) 
\coloneqq \sum_{\tilde{\state},\tilde{\action}} 
\Policy_{\timestep'}(\action'|\state') \TransitionModel(\state' | \tilde{\state}, \tilde{\action}) \Occupancy{\MDP}{\Policy}{\timestep}{\timestep'}(\tilde{\state}, \tilde{\action}|\state)
\end{align*}
Because the environment is Markovian, it also holds for $\timestep' > \timestep$ that
\begin{align*}
\Occupancy{\MDP}{\Policy}{\timestep}{\timestep'}(\state'|\state) = \sum_{\tilde{\state}, \action} \Occupancy{\MDP}{\Policy}{\timestep+1}{\timestep'}(\state'|\tilde{\state}) \TransitionModel(\tilde{\state} | \state, \action) \Policy_\timestep(\action | \state)
\end{align*}
and equivalently for state-action pairs.
\end{definition}

\begin{lemma}\label{qfun_occupancy_lemma}
The value function and Q-function of a policy $\Policy$ in an MDP $\MDP \cup \reward$ at timestep $\timestep$ can be expressed as:
\begin{align*}
&\Valfun{\MDP\cup\reward}{\Policy}{\timestep}(\state) = \sum_{\timestep'=\timestep}^{\Horizon} \sum_{\state',\action'} \Occupancy{\MDP}{\Policy}{\timestep}{\timestep'}(\state', \action'|\state) \reward_{\timestep'}(\state',\action') \\
&\Qfun{\MDP\cup\reward}{\Policy}{\timestep}(\state, \action) = \sum_{\timestep'=\timestep}^{\Horizon} \sum_{\state',\action'} \Occupancy{\MDP}{\Policy}{\timestep}{\timestep'}(\state', \action'|\state, \action) \reward_{\timestep'}(\state',\action')
\end{align*}
\end{lemma}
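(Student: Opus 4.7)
The plan is to prove both identities by backward induction on the timestep $\timestep$, going from $\timestep = \Horizon$ down to $0$. The two statements are closely coupled: once the value-function identity is established, the Q-function identity follows from the Bellman relation $\Qfun{\MDP\cup\reward}{\Policy}{\timestep}(\state,\action) = \reward_\timestep(\state,\action) + \sum_{\state''} \TransitionModel(\state''|\state,\action) \Valfun{\MDP\cup\reward}{\Policy}{\timestep+1}(\state'')$, together with the ``shift'' identity for occupancies $\Occupancy{\MDP}{\Policy}{\timestep}{\timestep'}(\state',\action'|\state,\action) = \sum_{\state''} \TransitionModel(\state''|\state,\action) \Occupancy{\MDP}{\Policy}{\timestep+1}{\timestep'}(\state',\action'|\state'')$ for $\timestep' > \timestep$. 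I will therefore focus the induction on the value-function identity and derive the Q-function identity as a corollary.

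For the base case $\timestep = \Horizon$, both sides reduce to $\sum_{\action'} \Policy_\Horizon(\action'|\state)\reward_\Horizon(\state,\action')$: on the left by the definition of $\Valfun{\MDP\cup\reward}{\Policy}{\Horizon}$, and on the right because the only nonvanishing term in the sum over $\timestep'$ is $\timestep' = \Horizon$, for which $\Occupancy{\MDP}{\Policy}{\Horizon}{\Horizon}(\state',\action'|\state) = \Policy_\Horizon(\action'|\state)\indicator{\state'=\state}$ by \Cref{def:occupancy}. For the inductive step, assume the identity holds at $\timestep+1$ for every starting state. Using the recursive definition of $\Valfun{\MDP\cup\reward}{\Policy}{\timestep}$ and the inductive hypothesis applied at each possible next state $\state''$, I would obtain
\begin{align*}
\Valfun{\MDP\cup\reward}{\Policy}{\timestep}(\state)
&= \sum_{\action}\Policy_\timestep(\action|\state)\reward_\timestep(\state,\action) \\
&\quad + \sum_{\action}\Policy_\timestep(\action|\state)\sum_{\state''}\TransitionModel(\state''|\state,\action)\sum_{\timestep'=\timestep+1}^{\Horizon}\sum_{\state',\action'}\Occupancy{\MDP}{\Policy}{\timestep+1}{\timestep'}(\state',\action'|\state'')\reward_{\timestep'}(\state',\action').
\end{align*}
The first line equals $\sum_{\state',\action'}\Occupancy{\MDP}{\Policy}{\timestep}{\timestep}(\state',\action'|\state)\reward_\timestep(\state',\action')$ directly from the definition of the time-$\timestep$ occupancy conditioned on a state. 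For the second line, I will swap the order of summation to pull the $\timestep'$ sum outside, then use the Markov identity from \Cref{def:occupancy} to collapse $\sum_\action \Policy_\timestep(\action|\state)\sum_{\state''}\TransitionModel(\state''|\state,\action)\Occupancy{\MDP}{\Policy}{\timestep+1}{\timestep'}(\state',\action'|\state'')$ into $\Occupancy{\MDP}{\Policy}{\timestep}{\timestep'}(\state',\action'|\state)$. The two pieces then glue into the single sum $\sum_{\timestep'=\timestep}^{\Horizon}\sum_{\state',\action'}\Occupancy{\MDP}{\Policy}{\timestep}{\timestep'}(\state',\action'|\state)\reward_{\timestep'}(\state',\action')$, closing the induction.

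The Q-function identity is obtained by plugging the proved value-function identity into $\Qfun{\MDP\cup\reward}{\Policy}{\timestep}(\state,\action) = \reward_\timestep(\state,\action) + \sum_{\state''}\TransitionModel(\state''|\state,\action)\Valfun{\MDP\cup\reward}{\Policy}{\timestep+1}(\state'')$, and then using the analogous shift identity $\sum_{\state''}\TransitionModel(\state''|\state,\action)\Occupancy{\MDP}{\Policy}{\timestep+1}{\timestep'}(\state',\action'|\state'') = \Occupancy{\MDP}{\Policy}{\timestep}{\timestep'}(\state',\action'|\state,\action)$ for $\timestep' \geq \timestep+1$, combined with the base term $\reward_\timestep(\state,\action) = \sum_{\state',\action'}\Occupancy{\MDP}{\Policy}{\timestep}{\timestep}(\state',\action'|\state,\action)\reward_\timestep(\state',\action')$ which follows from $\Occupancy{\MDP}{\Policy}{\timestep}{\timestep}(\state',\action'|\state,\action) = \indicator{\state'=\state,\action'=\action}$.

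I do not anticipate a genuine obstacle: this is a standard rewriting of the Bellman recursion in terms of occupancy measures. The only mildly delicate point is bookkeeping, namely justifying the index shift that merges the ``current-timestep reward'' term into the same sum starting at $\timestep' = \timestep$, and making sure the two versions of the occupancy (conditioned on $\state$ versus on $(\state,\action)$) are used consistently when moving between the value and Q-function identities.
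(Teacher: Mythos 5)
Your proof is correct and follows essentially the same route as the paper's: backward induction on $\timestep$ using the Bellman recursion together with the recursive (Markov) structure of the occupancy measures from \Cref{def:occupancy}. The only cosmetic differences are that you obtain the Q-function identity as a corollary of the value-function identity (via the shift identity) rather than by a parallel induction, and your base case $\Valfun{\MDP\cup\reward}{\Policy}{\Horizon}(\state)=\sum_{\action'}\Policy_\Horizon(\action'|\state)\reward_\Horizon(\state,\action')$ is in fact more consistent with the stated right-hand side (whose $\timestep'=\Horizon$ term is generally nonzero) than the paper's assertion that the value at the horizon vanishes.
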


\begin{proof}
We show the result for the value function; the derivation for the Q-function is analogous.

Note that for $\timestep = \Horizon$ the statement holds because $\Valfun{\MDP\cup\reward}{\Policy}{\Horizon}(\state) = 0$. The general result follows by induction. Assume that for $h+1$ the statement holds. Then:
\begin{align*}
\Valfun{\MDP\cup\reward}{\Policy}{\timestep}(\state)
&\overset{(a)}{=} \sum_{\action} \Policy_\timestep(\action|\state) \left( \reward_\timestep(\state, \action) + \sum_{\state'} \TransitionModel(\state'|\state, \action) \Valfun{\MDP\cup\reward}{\Policy}{\timestep+1}(\state') \right) \\
&\overset{(b)}{=} \sum_{\action} \Policy_\timestep(\action|\state) \left( \reward_\timestep(\state, \action) + \sum_{\state'} \TransitionModel(\state'|\state, \action) \left( \sum_{\timestep'=\timestep+1}^{\Horizon} \sum_{\state'',\action''} \Occupancy{\MDP}{\Policy}{\timestep+1}{\timestep'}(\state'',\action''|\state') \reward_{\timestep'}(\state'',\action'') \right) \right) \\
&\overset{(c)}{=} \sum_{\action} \Policy_\timestep(\action|\state) \reward_\timestep(\state, \action) + \sum_{\timestep'=\timestep+1}^{\Horizon} \sum_{\state',\action'} \Occupancy{\MDP}{\Policy}{\timestep}{\timestep'}(\state'|\state) \Policy_{\timestep'}(\action'|\state') \reward_{\timestep'}(\state',\action') \\
&\overset{(d)}{=} \sum_{\timestep'=\timestep}^{\Horizon} \sum_{\state',\action'} \Occupancy{\MDP}{\Policy}{\timestep}{\timestep'}(\state'|\state) \Policy_{\timestep'}(\action'|\state') \reward_{\timestep'}(\state',\action')
\end{align*}
where (a) uses the Bellman equation, (b) the induction step, (c) uses \Cref{def:occupancy} and relabels $\state''\to\state'$, $\action''\to\action'$, and (d) uses \Cref{def:occupancy} again and relabels $\action\to\action'$.
\end{proof}

\begin{lemma}[Simulation lemma 1 by \citet{metelli2021provably}]\label{simulation_lemma_1}
Let $\MDP$ be an \MDPnoR, and $\reward, \hat{\reward}$ two reward functions with corresponding optimal policies $\Policy^*, \hat{\Policy}^*$. Then,
\begin{align*}
\Qfun{\MDP\cup\reward}{\Policy^*}{\timestep}(\state, \action) - \Qfun{\MDP\cup\hat{\reward}}{\hat{\Policy}^*}{\timestep}(\state, \action)
\leq \sum_{\timestep'=\timestep}^{\Horizon} \sum_{\state', \action'} \Occupancy{\MDP}{\Policy^*}{\timestep}{\timestep'}(\state', \action' | \state, \action) \left( \reward_{\timestep'}(\state', \action') - \hat{\reward}_{\timestep'}(\state', \action') \right) \\
\Valfun{\MDP\cup\reward}{\Policy^*}{\timestep}(\state) - \Valfun{\MDP\cup\hat{\reward}}{\hat{\Policy}^*}{\timestep}(\state)
\leq \sum_{\timestep'=\timestep}^{\Horizon} \sum_{\state', \action'} \Occupancy{\MDP}{\Policy^*}{\timestep}{\timestep'}(\state', \action' | \state) \left( \reward_{\timestep'}(\state', \action') - \hat{\reward}_{\timestep'}(\state', \action') \right)
\end{align*}
\end{lemma}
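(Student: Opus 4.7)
The plan is to reduce the statement to Lemma A.1 (\texttt{qfun\_occupancy\_lemma}) by first eliminating the mismatch in policies on the two sides of the difference. The difficulty is that $\Qfun{\MDP\cup\reward}{\Policy^*}{\timestep}$ is evaluated under $\Policy^*$ while $\Qfun{\MDP\cup\hat{\reward}}{\hat{\Policy}^*}{\timestep}$ is evaluated under a different policy $\hat{\Policy}^*$, so we cannot directly subtract and collect the reward difference under a single occupancy. The key observation is that $\hat{\Policy}^*$ is \emph{optimal} for the reward $\hat{\reward}$ in $\MDP$. Hence for every $\state,\action,\timestep$,
\begin{equation*}
\Qfun{\MDP\cup\hat{\reward}}{\hat{\Policy}^*}{\timestep}(\state,\action) \;\ge\; \Qfun{\MDP\cup\hat{\reward}}{\Policy^*}{\timestep}(\state,\action),
\end{equation*}
and the analogous inequality for the value function. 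This single inequality is the entire ``trick'' of the proof.

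Using this, I would upper bound the left-hand side by replacing $\hat{\Policy}^*$ with $\Policy^*$:
\begin{equation*}
\Qfun{\MDP\cup\reward}{\Policy^*}{\timestep}(\state,\action) - \Qfun{\MDP\cup\hat{\reward}}{\hat{\Policy}^*}{\timestep}(\state,\action)
\;\le\;
\Qfun{\MDP\cup\reward}{\Policy^*}{\timestep}(\state,\action) - \Qfun{\MDP\cup\hat{\reward}}{\Policy^*}{\timestep}(\state,\action).
\end{equation*}
Now both terms on the right share the same policy $\Policy^*$ and the same underlying \MDPnoR $\MDP$; only the reward functions differ. At this point I would invoke Lemma A.1 on each term separately, which represents the Q-function as a sum over timesteps and state-action pairs weighted by the state-action occupancy measure of $\Policy^*$ in $\MDP$. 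Since the occupancy measure depends only on $\MDP$ and $\Policy^*$ (not on the reward), it is identical in both terms and can be factored out, yielding
\begin{equation*}
\sum_{\timestep'=\timestep}^{\Horizon} \sum_{\state', \action'} \Occupancy{\MDP}{\Policy^*}{\timestep}{\timestep'}(\state', \action' | \state, \action) \bigl( \reward_{\timestep'}(\state', \action') - \hat{\reward}_{\timestep'}(\state', \action') \bigr),
\end{equation*}
which is exactly the Q-function bound claimed.

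The value-function bound is entirely analogous: apply the optimality inequality $\Valfun{\MDP\cup\hat{\reward}}{\hat{\Policy}^*}{\timestep}(\state) \ge \Valfun{\MDP\cup\hat{\reward}}{\Policy^*}{\timestep}(\state)$, then use the value-function version of Lemma A.1 on the resulting difference with the shared policy $\Policy^*$, factoring out the common occupancy $\Occupancy{\MDP}{\Policy^*}{\timestep}{\timestep'}(\state',\action'|\state)$. No induction is needed beyond what is already established in Lemma A.1. The main conceptual step is recognising that optimality of $\hat{\Policy}^*$ only matters to produce the one-sided bound; once the policy is swapped, the rest is linearity of the Q-function in the reward together with a shared occupancy measure.
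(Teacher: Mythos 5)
Your proposal is correct and follows essentially the same route as the paper: both use the optimality of $\hat{\Policy}^*$ for $\hat{\reward}$ to replace $\Qfun{\MDP\cup\hat{\reward}}{\hat{\Policy}^*}{\timestep}$ with $\Qfun{\MDP\cup\hat{\reward}}{\Policy^*}{\timestep}$, then apply \Cref{qfun_occupancy_lemma} to both terms under the shared occupancy measure of $\Policy^*$ and factor out the reward difference. The value-function case is handled identically in both.
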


\begin{proof}
Note that $\Qfun{\MDP\cup\hat{\reward}}{\hat{\Policy}^*}{\timestep}(\state, \action) \geq \Qfun{\MDP\cup\hat{\reward}}{\Policy^*}{\timestep}(\state, \action)$ for all $\state, \action$ because $\hat{\Policy}^*$ is optimal for $\hat{\reward}$. Hence
\begin{align*}
\Qfun{\MDP\cup\reward}{\Policy^*}{\timestep}(\state, \action) - \Qfun{\MDP\cup\hat{\reward}}{\hat{\Policy}^*}{\timestep}(\state, \action)
&\leq \Qfun{\MDP\cup\reward}{\Policy^*}{\timestep}(\state, \action) - \Qfun{\MDP\cup\hat{\reward}}{\Policy^*}{\timestep}(\state, \action) \\
&\overset{(a)}{=} \sum_{\timestep'=\timestep}^{\Horizon} \sum_{\state', \action'} \Occupancy{\MDP}{\Policy^*}{\timestep}{\timestep'}(\state',\action' | \state, \action) (\reward_{\timestep'}(\state', \action') - \hat{\reward}_{\timestep'}(\state', \action'))
\end{align*}
where (a) uses \Cref{qfun_occupancy_lemma}.
After observing $\Valfun{\MDP\cup\hat{\reward}}{\hat{\Policy}^*}{\timestep}(\state) \geq \Valfun{\MDP\cup\hat{\reward}}{\Policy^*}{\timestep}(\state)$, the second result follows analogously.
\end{proof}

\SimLemError*

\begin{proof}
\begin{align*}
&\Qfun{\MDP\cup\reward}{\Policy^*}{\timestep}(\state, \action) - \Qfun{\MDP\cup\reward}{\hat{\Policy}^*}{\timestep}(\state, \action)
= (\Qfun{\MDP\cup\reward}{\Policy^*}{\timestep}(\state, \action) - \Qfun{\MDP\cup\hat{\reward}}{\hat{\Policy}^*}{\timestep}(\state, \action)) + (\Qfun{\MDP\cup\hat{\reward}}{\hat{\Policy}^*}{\timestep}(\state, \action) - \Qfun{\MDP\cup\reward}{\hat{\Policy}^*}{\timestep}(\state, \action)) \\
&\overset{(a)}{\leq} \sum_{\timestep'=\timestep}^{\Horizon} \sum_{\state', \action'} \Occupancy{\MDP}{\Policy^*}{\timestep}{\timestep'}(\state', \action' | \state, \action) \left( \reward_{\timestep'}(\state', \action') - \hat{\reward}_{\timestep'}(\state', \action') \right) + (\Qfun{\MDP\cup\hat{\reward}}{\hat{\Policy}^*}{\timestep}(\state, \action) - \Qfun{\MDP\cup\reward}{\hat{\Policy}^*}{\timestep}(\state, \action)) \\
&\overset{(b)}{=} \sum_{\timestep'=\timestep}^{\Horizon} \sum_{\state', \action'}  \Occupancy{\MDP}{\Policy^*}{\timestep}{\timestep'}(\state', \action' | \state, \action) \left( \reward_{\timestep'}(\state', \action') - \hat{\reward}_{\timestep'}(\state', \action') \right) + \sum_{\timestep'=\timestep}^{\Horizon} \sum_{\state', \action'} \Occupancy{\MDP}{\hat{\Policy}^*}{\timestep}{\timestep'}(\state', \action' | \state, \action) \left( \hat{\reward}_{\timestep'}(\state', \action') - \reward_{\timestep'}(\state', \action') \right) \\
&= \sum_{\timestep'=\timestep}^{\Horizon} \sum_{\state', \action'} \left( \Occupancy{\MDP}{\Policy^*}{\timestep}{\timestep'}(\state', \action' | \state, \action) - \Occupancy{\MDP}{\hat{\Policy}^*}{\timestep}{\timestep'}(\state', \action' | \state, \action) \right) \left( \reward_{\timestep'}(\state', \action') - \hat{\reward}_{\timestep'}(\state', \action') \right)
\end{align*}
where (a) uses \Cref{simulation_lemma_1} and (b) uses \Cref{qfun_occupancy_lemma}.
\end{proof}

\begin{lemma}\label{dynamics_simulation_lemma_1}
Let $\MDP_1, \MDP_2$ be two \MDPnoR with transition dynamics $\TransitionModel_1$, $\TransitionModel_2$ respectively, $\reward$ a reward function and $\Policy$ a policy. Then, for any state $\state$ and timestep $\timestep$:

\resizebox{\linewidth}{!}{
  \begin{minipage}{\linewidth}
\begin{align*}
\Valfun{\MDP_2\cup\reward}{\Policy}{\timestep}(\state) - \Valfun{\MDP_1\cup\reward}{\Policy}{\timestep}(\state)
&= \sum_{\timestep'=\timestep}^\Horizon \sum_{\state',\action',\state''} \Occupancy{\MDP_2}{\Policy}{\timestep}{\timestep'}(\state';\state) \Policy_{\timestep'}(\action' | \state') ( \TransitionModel_2(\state'' | \state', \action') - \TransitionModel_1(\state'' | \state', \action') ) \Valfun{\MDP_1\cup\reward}{\Policy}{\timestep'+1}(\state'') \\
\Valfun{\MDP_1\cup\reward}{\Policy}{\timestep}(\state) - \Valfun{\MDP_2\cup\reward}{\Policy}{\timestep}(\state)
&= \sum_{\timestep'=\timestep}^\Horizon \sum_{\state',\action',\state''} \Occupancy{\MDP_2}{\Policy}{\timestep}{\timestep'}(\state';\state) \Policy_{\timestep'}(\action' | \state') ( \TransitionModel_1(\state'' | \state', \action') - \TransitionModel_2(\state'' | \state', \action') ) \Valfun{\MDP_1\cup\reward}{\Policy}{\timestep'+1}(\state'')
\end{align*}
\end{minipage}}
Moreover,

\resizebox{\linewidth}{!}{
  \begin{minipage}{\linewidth}
\begin{align*}
\Bigl| \Valfun{\MDP_2\cup\reward}{\Policy}{\timestep}(\state) - \Valfun{\MDP_1\cup\reward}{\Policy}{\timestep}(\state) \Bigr|
&\leq \sum_{\timestep'=\timestep}^\Horizon \sum_{\state',\action',\state''} \Occupancy{\MDP_2}{\Policy}{\timestep}{\timestep'}(\state';\state) \Policy_{\timestep'}(\action' | \state') \Bigl| \TransitionModel_2(\state'' | \state', \action') - \TransitionModel_1(\state'' | \state', \action') \Bigr| \Valfun{\MDP_1\cup\reward}{\Policy}{\timestep'+1}(\state'')
\end{align*}
\end{minipage}
}
\end{lemma}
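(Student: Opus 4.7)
The plan is to prove the identity by backward induction on $\timestep$, starting from $\timestep = \Horizon + 1$ where both value functions vanish (so the sum on the right is empty and both sides are zero), and exploiting the Bellman recursion together with the standard ``add and subtract'' trick that reduces a difference of value functions to a one-step dynamics difference plus a propagated difference at $\timestep+1$.

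For the inductive step, I would expand $\Valfun{\MDP_2\cup\reward}{\Policy}{\timestep}(\state) - \Valfun{\MDP_1\cup\reward}{\Policy}{\timestep}(\state)$ via the Bellman equation. The reward terms cancel because $\reward$ and $\Policy$ are shared, leaving $\sum_{\action,\state''} \Policy_\timestep(\action|\state) \bigl[\TransitionModel_2(\state''|\state,\action) \Valfun{\MDP_2\cup\reward}{\Policy}{\timestep+1}(\state'') - \TransitionModel_1(\state''|\state,\action) \Valfun{\MDP_1\cup\reward}{\Policy}{\timestep+1}(\state'')\bigr]$. Adding and subtracting $\TransitionModel_2(\state''|\state,\action) \Valfun{\MDP_1\cup\reward}{\Policy}{\timestep+1}(\state'')$ splits this into a one-step term $\sum_{\action,\state''} \Policy_\timestep(\action|\state) (\TransitionModel_2 - \TransitionModel_1)(\state''|\state,\action) \Valfun{\MDP_1\cup\reward}{\Policy}{\timestep+1}(\state'')$ and a propagation term $\sum_{\action,\state''} \Policy_\timestep(\action|\state) \TransitionModel_2(\state''|\state,\action) \bigl[\Valfun{\MDP_2\cup\reward}{\Policy}{\timestep+1}(\state'') - \Valfun{\MDP_1\cup\reward}{\Policy}{\timestep+1}(\state'')\bigr]$. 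The one-step term yields exactly the $\timestep' = \timestep$ contribution to the target sum, using $\Occupancy{\MDP_2}{\Policy}{\timestep}{\timestep}(\state';\state) = \indicator{\state'=\state}$. Applying the inductive hypothesis to the propagation term and then invoking the occupancy recursion from Definition \ref{def:occupancy}, namely $\sum_{\action,\state''} \Policy_\timestep(\action|\state) \TransitionModel_2(\state''|\state,\action) \Occupancy{\MDP_2}{\Policy}{\timestep+1}{\timestep'}(\tilde{\state};\state'') = \Occupancy{\MDP_2}{\Policy}{\timestep}{\timestep'}(\tilde{\state};\state)$ for $\timestep' \geq \timestep + 1$, merges the remaining summands into the claimed form.

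The second equality follows immediately from the first by negating both sides and swapping $\TransitionModel_1$ and $\TransitionModel_2$ inside the dynamics-difference factor; crucially, the occupancy $\Occupancy{\MDP_2}{\Policy}{\cdot}{\cdot}$ still refers to $\TransitionModel_2$, matching the statement. The pointwise inequality on $|\Valfun{\MDP_2\cup\reward}{\Policy}{\timestep}(\state) - \Valfun{\MDP_1\cup\reward}{\Policy}{\timestep}(\state)|$ then follows by applying the triangle inequality to the identity, pulling the nonnegative factors $\Occupancy{\MDP_2}{\Policy}{\timestep}{\timestep'}(\state';\state)$, $\Policy_{\timestep'}(\action'|\state')$, and $\Valfun{\MDP_1\cup\reward}{\Policy}{\timestep'+1}(\state'')$ (the last being nonnegative because $\reward \in [0,\Rmax]$) out of the absolute value.

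The only delicate step I anticipate is keeping indices straight at the joint between the one-step term (which must realize the $\timestep' = \timestep$ summand through $\Occupancy{\MDP_2}{\Policy}{\timestep}{\timestep} = \indicator{\state'=\state}$) and the propagated term (whose $\timestep' \geq \timestep + 1$ summands must be matched to $\Occupancy{\MDP_2}{\Policy}{\timestep}{\timestep'}$ via one application of the occupancy recursion). Once these two pieces are aligned, the induction closes and all three displayed claims follow.
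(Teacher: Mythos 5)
Your proposal is correct and follows essentially the same route as the paper: Bellman expansion, the add-and-subtract of $\TransitionModel_2(\state''|\state,\action)\Valfun{\MDP_1\cup\reward}{\Policy}{\timestep+1}(\state'')$, and unrolling the recursion via the occupancy identity, with the final inequality from the triangle inequality and nonnegativity of all factors. Your only (harmless) deviation is obtaining the second identity by negating the first rather than redoing the add-and-subtract computation as the paper does, and your explicit induction makes the paper's ``unrolling the recursion'' step rigorous.
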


\begin{proof}
We start by writing explicitly the value-functions:

\resizebox{\linewidth}{!}{
  \begin{minipage}{\linewidth}
\begin{align*}
&\Valfun{\MDP_2\cup\reward}{\Policy}{\timestep}(\state) - \Valfun{\MDP_1\cup\reward}{\Policy}{\timestep}(\state)
= \sum_{\action,\state'} \Policy_\timestep(\action|\state) \left( \TransitionModel_2(\state'|\state,\action) \Valfun{\MDP_2\cup\reward}{\Policy}{\timestep+1}(\state') - \TransitionModel_1(\state'|\state,\action) \Valfun{\MDP_1\cup\reward}{\Policy}{\timestep+1}(\state') \pm \TransitionModel_2(\state'|\state,\action) \Valfun{\MDP_1\cup\reward}{\Policy}{\timestep+1}(\state') \right) \\
=& \sum_{\action,\state'} \Policy_\timestep(\action|\state) \left( (\TransitionModel_2(\state'|\state,\action) - \TransitionModel_1(\state'|\state,\action)) \Valfun{\MDP_1\cup\reward}{\Policy}{\timestep+1}(\state') + \TransitionModel_2(\state'|\state,\action) (\Valfun{\MDP_2\cup\reward}{\Policy}{\timestep+1}(\state') - \Valfun{\MDP_1\cup\reward}{\Policy}{\timestep+1}(\state')) \right)
\end{align*}
\end{minipage}}

Unrolling the recursion gives the first result; the second result follows similarly:
\resizebox{\linewidth}{!}{
  \begin{minipage}{\linewidth}
  \begin{align*}
&\Valfun{\MDP_1\cup\reward}{\Policy}{\timestep}(\state) - \Valfun{\MDP_2\cup\reward}{\Policy}{\timestep}(\state)
= \sum_{\action,\state'} \Policy_\timestep(\action|\state) \left( \TransitionModel_1(\state'|\state,\action) \Valfun{\MDP_1\cup\reward}{\Policy}{\timestep+1}(\state') - \TransitionModel_2(\state'|\state,\action) \Valfun{\MDP_2\cup\reward}{\Policy}{\timestep+1}(\state') \pm \TransitionModel_2(\state'|\state,\action) \Valfun{\MDP_1\cup\reward}{\Policy}{\timestep+1}(\state') \right) \\
=& \sum_{\action,\state'} \Policy_\timestep(\action|\state) \left( (\TransitionModel_1(\state'|\state,\action) - \TransitionModel_2(\state'|\state,\action)) \Valfun{\MDP_1\cup\reward}{\Policy}{\timestep+1}(\state') + \TransitionModel_2(\state'|\state,\action) (\Valfun{\MDP_1\cup\reward}{\Policy}{\timestep+1}(\state') - \Valfun{\MDP_2\cup\reward}{\Policy}{\timestep+1}(\state')) \right)
\end{align*}
\end{minipage}}

Together, the first two results imply the third one because all terms in the sums are non-negative.
\end{proof}

\begin{lemma}\label{dynamics_simulation_lemma_2}
Let $\MDP_1, \MDP_2$ be two \MDPnoR with transition dynamics $\TransitionModel_1$, $\TransitionModel_2$ respectively, $\reward$ a reward function, and $\Policy_1^*, \Policy_2^*$ optimal policy in $\MDP_1\cup\reward$ and $\MDP_2\cup\reward$, respectively. Then, for any state $\state$ and timestep $\timestep$:
\begin{align*}
&\Valfun{\MDP_1\cup\reward}{*}{\timestep}(\state) - \Valfun{\MDP_2\cup\reward}{*}{\timestep}(\state)
\leq \sum_{\timestep'=\timestep} \sum_{\state',\action',\state''} \Occupancy{\MDP_2}{\Policy_1^*}{\timestep}{\timestep'}(\state';\state) \Policy_{1,\timestep}^*(\action'|\state') (\TransitionModel_1(\state''|\state',\action') - \TransitionModel_2(\state''|\state',\action')) \Valfun{\MDP_1\cup\reward}{*}{\timestep}(\state'') \\
&\Valfun{\MDP_2\cup\reward}{*}{\timestep}(\state) - \Valfun{\MDP_1\cup\reward}{*}{\timestep}(\state)
\leq \sum_{\timestep'=\timestep} \sum_{\state',\action',\state''} \Occupancy{\MDP_2}{\Policy_2^*}{\timestep}{\timestep'}(\state';\state) \Policy_{2,\timestep}^*(\action'|\state') (\TransitionModel_2(\state''|\state',\action') - \TransitionModel_1(\state''|\state',\action')) \Valfun{\MDP_2\cup\reward}{*}{\timestep}(\state'')
\end{align*}
\end{lemma}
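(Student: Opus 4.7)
The plan is to apply the standard ``use-the-other-policy'' trick to each of the two inequalities, reducing each difference of optimal value functions to a difference of value functions under a fixed policy, at which point Lemma~\ref{dynamics_simulation_lemma_1} applies directly.

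For the first inequality, I would bound $\Valfun{\MDP_1\cup\reward}{*}{\timestep}(\state) - \Valfun{\MDP_2\cup\reward}{*}{\timestep}(\state)$ from above by replacing the optimal value $\Valfun{\MDP_2\cup\reward}{*}{\timestep}$ with the value of the (possibly suboptimal in $\MDP_2$) policy $\Policy_1^*$. Since $\Policy_2^*$ is optimal in $\MDP_2\cup\reward$, we have $\Valfun{\MDP_2\cup\reward}{*}{\timestep}(\state) = \Valfun{\MDP_2\cup\reward}{\Policy_2^*}{\timestep}(\state) \geq \Valfun{\MDP_2\cup\reward}{\Policy_1^*}{\timestep}(\state)$, so
\[
\Valfun{\MDP_1\cup\reward}{*}{\timestep}(\state) - \Valfun{\MDP_2\cup\reward}{*}{\timestep}(\state)
\leq \Valfun{\MDP_1\cup\reward}{\Policy_1^*}{\timestep}(\state) - \Valfun{\MDP_2\cup\reward}{\Policy_1^*}{\timestep}(\state).
\]
Here I used that $\Valfun{\MDP_1\cup\reward}{*}{\timestep} = \Valfun{\MDP_1\cup\reward}{\Policy_1^*}{\timestep}$ by optimality of $\Policy_1^*$ in $\MDP_1\cup\reward$. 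Now I would invoke the second identity of Lemma~\ref{dynamics_simulation_lemma_1} with $\Policy = \Policy_1^*$, which expresses this fixed-policy difference as the claimed sum over occupancies under $\MDP_2$ and $\Policy_1^*$ of the transition gap $\TransitionModel_1 - \TransitionModel_2$ applied to $\Valfun{\MDP_1\cup\reward}{\Policy_1^*}{\timestep'+1}$; by optimality, this last quantity equals $\Valfun{\MDP_1\cup\reward}{*}{\timestep'+1}$, matching the right-hand side of the lemma.

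The second inequality proceeds symmetrically. Using optimality of $\Policy_1^*$ in $\MDP_1\cup\reward$, $\Valfun{\MDP_1\cup\reward}{*}{\timestep}(\state) \geq \Valfun{\MDP_1\cup\reward}{\Policy_2^*}{\timestep}(\state)$, so
\[
\Valfun{\MDP_2\cup\reward}{*}{\timestep}(\state) - \Valfun{\MDP_1\cup\reward}{*}{\timestep}(\state)
\leq \Valfun{\MDP_2\cup\reward}{\Policy_2^*}{\timestep}(\state) - \Valfun{\MDP_1\cup\reward}{\Policy_2^*}{\timestep}(\state),
\]
and I would apply the first identity of Lemma~\ref{dynamics_simulation_lemma_1} with $\Policy = \Policy_2^*$, then replace $\Valfun{\MDP_1\cup\reward}{\Policy_2^*}{\timestep'+1}$ by $\Valfun{\MDP_2\cup\reward}{*}{\timestep'+1}$. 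This last substitution is the only nontrivial point: here the roles of the two MDPs are mixed, so to land exactly on the stated right-hand side one should read the $V^*$ on the right as indexed at $\timestep'+1$ (the $\timestep$ subscript appearing there looks like a typo), and the bound follows directly from the simulation identity. No further calculation is needed; the only real obstacle is being careful about which policy is optimal in which MDP when collapsing $V^{\Policy}$ to $V^*$ inside the occupancy sum.
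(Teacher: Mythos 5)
Your handling of the first inequality is correct and is in fact a cleaner route than the paper's: you reduce $\Valfun{\MDP_1\cup\reward}{*}{\timestep}(\state)-\Valfun{\MDP_2\cup\reward}{*}{\timestep}(\state)$ to the fixed-policy difference $\Valfun{\MDP_1\cup\reward}{\Policy_1^*}{\timestep}(\state)-\Valfun{\MDP_2\cup\reward}{\Policy_1^*}{\timestep}(\state)$ via $\Valfun{\MDP_2\cup\reward}{*}{\timestep}\geq\Valfun{\MDP_2\cup\reward}{\Policy_1^*}{\timestep}$ and then invoke \Cref{dynamics_simulation_lemma_1}, whereas the paper re-runs the Bellman add-and-subtract from scratch and discards a non-positive term using optimality of $\Policy_2^*$ in $\MDP_2\cup\reward$. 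Both land on the same right-hand side, with $\Valfun{\MDP_1\cup\reward}{*}{\timestep'+1}$ inside the sum (you are right that the $\timestep$ subscript in the statement is a typo for $\timestep'+1$).

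The second inequality is where you have a genuine gap. The reduction $\Valfun{\MDP_2\cup\reward}{*}{\timestep}(\state)-\Valfun{\MDP_1\cup\reward}{*}{\timestep}(\state)\leq\Valfun{\MDP_2\cup\reward}{\Policy_2^*}{\timestep}(\state)-\Valfun{\MDP_1\cup\reward}{\Policy_2^*}{\timestep}(\state)$ is fine, and \Cref{dynamics_simulation_lemma_1} with $\Policy=\Policy_2^*$ turns the right-hand side into a sum weighted by $\Occupancy{\MDP_2}{\Policy_2^*}{\timestep}{\timestep'}$ and $\bigl(\TransitionModel_2(\state''|\state',\action')-\TransitionModel_1(\state''|\state',\action')\bigr)\Valfun{\MDP_1\cup\reward}{\Policy_2^*}{\timestep'+1}(\state'')$. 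But your final step, ``replace $\Valfun{\MDP_1\cup\reward}{\Policy_2^*}{\timestep'+1}$ by $\Valfun{\MDP_2\cup\reward}{*}{\timestep'+1}$,'' is not a legitimate move: for each fixed $(\state',\action')$ the coefficients $\TransitionModel_2(\state''|\state',\action')-\TransitionModel_1(\state''|\state',\action')$ sum to zero over $\state''$ and take both signs, so one value function cannot be swapped for another inside that sum while preserving an inequality, and there is no pointwise ordering between $\Valfun{\MDP_1\cup\reward}{\Policy_2^*}{\cdot}$ and $\Valfun{\MDP_2\cup\reward}{*}{\cdot}$ in the first place. What you have actually proved is the bound with $\Valfun{\MDP_1\cup\reward}{\Policy_2^*}{\timestep'+1}$ in that position, which is a perfectly serviceable variant. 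Note that the paper's own proof does not reach the literal statement either: it adds and subtracts $\Policy_{2,\timestep}^*\TransitionModel_2\Valfun{\MDP_1\cup\reward}{\Policy_1^*}{\timestep+1}$ and uses optimality of $\Policy_1^*$ in $\MDP_1\cup\reward$ to replace $\Policy_{1,\timestep}^*\TransitionModel_1\Valfun{\MDP_1\cup\reward}{*}{\timestep+1}$ by $\Policy_{2,\timestep}^*\TransitionModel_1\Valfun{\MDP_1\cup\reward}{*}{\timestep+1}$, and unrolling yields $(\TransitionModel_2-\TransitionModel_1)\Valfun{\MDP_1\cup\reward}{*}{\timestep'+1}$, i.e.\ the optimal value of $\MDP_1$, not of $\MDP_2$. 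So the $\MDP_2$ index in the statement is almost certainly a typo alongside the timestep one; you correctly sensed a mismatch but located only the timestep, and the substitution you introduce to bridge to the literal statement is unjustified rather than ``following directly from the simulation identity.''
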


\begin{proof}
\begin{align*}
\Valfun{\MDP_1\cup\reward}{*}{\timestep}(\state) - \Valfun{\MDP_2\cup\reward}{*}{\timestep}(\state)
= \sum_{\action,\state'} \Bigl( &\Policy_{1,\timestep}^*(\action|\state) \TransitionModel_1(\state'|\state,\action) \Valfun{\MDP_1\cup\reward}{\Policy_1^*}{\timestep+1}(\state') - \Policy_{2,\timestep}^*(\action|\state) \TransitionModel_2(\state'|\state,\action) \Valfun{\MDP_2\cup\reward}{\Policy_2^*}{\timestep+1}(\state') \\
&\pm \Policy_{1,\timestep}^*(\action|\state) \TransitionModel_2(\state'|\state,\action) \Valfun{\MDP_1\cup\reward}{\Policy_1^*}{\timestep+1}(\state') \pm \Policy_{1,\timestep}^*(\action|\state) \TransitionModel_2(\state'|\state,\action) \Valfun{\MDP_2\cup\reward}{\Policy_2^*}{\timestep+1}(\state') \Bigr) \\
= \sum_{\action,\state'} \Bigl( &\Policy_{1,\timestep}^*(\action|\state) \TransitionModel_2(\state'|\state,\action) (\Valfun{\MDP_1\cup\reward}{\Policy_1^*}{\timestep+1}(\state') - \Valfun{\MDP_2\cup\reward}{\Policy_2^*}{\timestep+1}(\state')) \\
&+ \Policy_{1,\timestep}^*(\action|\state) (\TransitionModel_1(\state'|\state,\action) - \TransitionModel_2(\state'|\state,\action)) \Valfun{\MDP_1\cup\reward}{\Policy_1^*}{\timestep+1}(\state') \\
&+ (\Policy_{1,\timestep}^*(\action|\state) - \Policy_{2,\timestep}^*(\action|\state)) \TransitionModel_2(\state'|\state,\action) \Valfun{\MDP_2\cup\reward}{\Policy_2^*}{\timestep+1}(\state') \Bigr) \\
\leq \sum_{\action,\state'} \Bigl( &\Policy_{1,\timestep}^*(\action|\state) \TransitionModel_2(\state'|\state,\action) (\Valfun{\MDP_1\cup\reward}{\Policy_1^*}{\timestep+1}(\state') - \Valfun{\MDP_2\cup\reward}{\Policy_2^*}{\timestep+1}(\state')) \\
&+ \Policy_{1,\timestep}^*(\action|\state) (\TransitionModel_1(\state'|\state,\action) - \TransitionModel_2(\state'|\state,\action)) \Valfun{\MDP_1\cup\reward}{\Policy_1^*}{\timestep+1}(\state') \Bigr)
\end{align*}
where the last inequality uses that $\Policy^*$ is optimal for $\MDP_2 \cup\reward$. Unrolling the recursion gives the first result. A similar argument yields the second results:
\begin{align*}
\Valfun{\MDP_2\cup\reward}{*}{\timestep}(\state) - \Valfun{\MDP_1\cup\reward}{*}{\timestep}(\state)
= \sum_{\action,\state'} \Bigl( &\Policy_{2,\timestep}^*(\action|\state) \TransitionModel_2(\state'|\state,\action) \Valfun{\MDP_2\cup\reward}{\Policy_2^*}{\timestep+1}(\state') - \Policy_{1,\timestep}^*(\action|\state) \TransitionModel_1(\state'|\state,\action) \Valfun{\MDP_1\cup\reward}{\Policy_1^*}{\timestep+1}(\state') \\
&\pm \Policy_{2,\timestep}^*(\action|\state) \TransitionModel_2(\state'|\state,\action) \Valfun{\MDP_1\cup\reward}{\Policy_1^*}{\timestep+1}(\state') \Bigr) \\
= \sum_{\action,\state'} \Bigl( &\Policy_{2,\timestep}^*(\action|\state) \TransitionModel_2(\state'|\state,\action) (\Valfun{\MDP_2\cup\reward}{\Policy_2^*}{\timestep+1}(\state') - \Valfun{\MDP_1\cup\reward}{\Policy_1^*}{\timestep+1}(\state')) \\
&+ \Policy_{2,\timestep}^*(\action|\state) \TransitionModel_2(\state'|\state,\action) \Valfun{\MDP_1\cup\reward}{\Policy_1^*}{\timestep+1}(\state') - \Policy_{1,\timestep}^*(\action|\state) \TransitionModel_1(\state'|\state,\action) \Valfun{\MDP_1\cup\reward}{\Policy_1^*}{\timestep+1}(\state') \\
\leq \sum_{\action,\state'} \Bigl( &\Policy_{2,\timestep}^*(\action|\state) \TransitionModel_2(\state'|\state,\action) (\Valfun{\MDP_2\cup\reward}{\Policy_2^*}{\timestep+1}(\state') - \Valfun{\MDP_1\cup\reward}{\Policy_1^*}{\timestep+1}(\state')) \\
&+ \Policy_{2,\timestep}^*(\action|\state) ( \TransitionModel_2(\state'|\state,\action) - \TransitionModel_1(\state'|\state,\action) ) \Valfun{\MDP_1\cup\reward}{\Policy_1^*}{\timestep+1}(\state')
\end{align*}
\end{proof}

\subsection{Feasible Reward Set}

In this section, we characterize the feasible reward set first implicitly, then explicitly, and prove a result about error propagation. \cite{metelli2021provably} provide a similar analysis in the infinite horizon setting.

\LemmaIfh*

\begin{proof}
The result follows directly from \Cref{def:feasible_reward_set}.
\end{proof}

\begin{lemma}\label{lemma:q_expert_explicit}
A Q-function satisfies the conditions of \Cref{lemma:ifh} if and only if there exists an $\{A_h \in \reals_{\ge 0}^{\StateSpace\times\ActionSpace}\}_{\timestep \in \Horizon}$ and $\{V_h \in \reals^{\StateSpaceSize}\}$ such that for every $\timestep, \state, \action \in [\Horizon] \times\StateSpace\times\ActionSpace$:
\begin{align*}
\Qfun{\MDP\cup\reward}{\ExpertPolicy}{\timestep}(\state,\action) = -A_\timestep(\state,\action)\indicator{\ExpertPolicy_\timestep(\action|\state) = 0} + V_\timestep(\state)
\end{align*}
\end{lemma}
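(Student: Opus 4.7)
This is a characterization lemma: we need both directions.

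\emph{Forward direction} ($\Rightarrow$). Assume $\Qfun{\MDP\cup\reward}{\ExpertPolicy}{\timestep}$ satisfies the conditions of \Cref{lemma:ifh}. The plan is to construct $V_\timestep$ and $A_\timestep$ witnessing the claimed representation. The natural choice is $V_\timestep(\state) := \Valfun{\MDP\cup\reward}{\ExpertPolicy}{\timestep}(\state)$, so that for any action $\action$ with $\ExpertPolicy_\timestep(\action|\state) > 0$ the advantage condition forces $\Qfun{\MDP\cup\reward}{\ExpertPolicy}{\timestep}(\state,\action) = V_\timestep(\state)$. For actions with $\ExpertPolicy_\timestep(\action|\state) = 0$, the second condition of \Cref{lemma:ifh} says $A^{\ExpertPolicy,\timestep}_{\MDP\cup\reward}(\state,\action) \le 0$, so I can set $A_\timestep(\state,\action) := V_\timestep(\state) - \Qfun{\MDP\cup\reward}{\ExpertPolicy}{\timestep}(\state,\action) \ge 0$ (and any nonnegative value for the other entries, which do not affect the formula through the indicator). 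Substituting both cases into the proposed formula recovers $\Qfun{\MDP\cup\reward}{\ExpertPolicy}{\timestep}$ exactly.

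\emph{Backward direction} ($\Leftarrow$). Assume the explicit representation holds for some $\{A_\timestep\}$ and $\{V_\timestep\}$. I would first compute the induced value function of $\ExpertPolicy$ directly from the definition:
\begin{align*}
\Valfun{\MDP\cup\reward}{\ExpertPolicy}{\timestep}(\state)
= \sum_\action \ExpertPolicy_\timestep(\action|\state)\, \Qfun{\MDP\cup\reward}{\ExpertPolicy}{\timestep}(\state,\action)
= \sum_\action \ExpertPolicy_\timestep(\action|\state)\bigl(V_\timestep(\state) - A_\timestep(\state,\action)\indicator{\ExpertPolicy_\timestep(\action|\state)=0}\bigr).
\end{align*}
Every term in the sum has either $\ExpertPolicy_\timestep(\action|\state) = 0$ (killing the whole summand except the $V_\timestep$ piece, which is itself zero because of the outer factor) or $\indicator{\ExpertPolicy_\timestep(\action|\state)=0}=0$. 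Hence the $A$-term vanishes and the remaining sum equals $V_\timestep(\state)$ since $\ExpertPolicy_\timestep(\cdot|\state)$ sums to one. This identifies $V_\timestep$ as the true value function of $\ExpertPolicy$. From there, splitting on the two cases: when $\ExpertPolicy_\timestep(\action|\state) > 0$, the indicator is zero so $Q = V_\timestep = \Valfun{\MDP\cup\reward}{\ExpertPolicy}{\timestep}$, giving $A^{\ExpertPolicy,\timestep}_{\MDP\cup\reward}(\state,\action) = 0$; when $\ExpertPolicy_\timestep(\action|\state) = 0$, we get $A^{\ExpertPolicy,\timestep}_{\MDP\cup\reward}(\state,\action) = -A_\timestep(\state,\action) \le 0$ since $A_\timestep$ is nonnegative. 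Both conditions of \Cref{lemma:ifh} are thus satisfied.

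\emph{Main obstacle.} The argument is essentially bookkeeping once one commits to the identification $V_\timestep = \Valfun{\MDP\cup\reward}{\ExpertPolicy}{\timestep}$. The only subtlety I would flag is in the backward direction: the step showing $V_\timestep(\state)$ coincides with $\Valfun{\MDP\cup\reward}{\ExpertPolicy}{\timestep}(\state)$ relies crucially on the indicator $\indicator{\ExpertPolicy_\timestep(\action|\state)=0}$ being multiplied by $\ExpertPolicy_\timestep(\action|\state)$, which annihilates the $A$-term regardless of its value. This is what makes the representation nonredundant (the values of $A_\timestep$ on the support of $\ExpertPolicy_\timestep$ are free), and it is worth spelling out explicitly.
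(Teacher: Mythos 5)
Your proof is correct and follows essentially the same route as the paper: in both directions you identify $V_\timestep$ with $\Valfun{\MDP\cup\reward}{\ExpertPolicy}{\timestep}$ (in the backward direction by averaging the representation against $\ExpertPolicy_\timestep(\cdot|\state)$, in the forward direction by construction) and set $A_\timestep(\state,\action) = \Valfun{\MDP\cup\reward}{\ExpertPolicy}{\timestep}(\state) - \Qfun{\MDP\cup\reward}{\ExpertPolicy}{\timestep}(\state,\action)$ off the support of the expert policy. Your additional remark that the values of $A_\timestep$ on the support are unconstrained (hence may be chosen arbitrarily nonnegative) is a small point the paper leaves implicit.
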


\begin{proof}
We first show that if $\Qfun{\MDP\cup\reward}{\ExpertPolicy}{\timestep}(\state,\action)$ has this form, the conditions of \Cref{lemma:ifh} are satisfied, and then the converse.
Assume $\Qfun{\MDP\cup\reward}{\ExpertPolicy}{\timestep}(\state,\action) = -A_\timestep(\state,\action)\indicator{\ExpertPolicy_\timestep(\action|\state) = 0} + V_\timestep(\state)$. Then,
\[
\Valfun{\MDP\cup\reward}{\ExpertPolicy}{\timestep}(\state) = \sum_{\action} \ExpertPolicy_\timestep(\action|\state) \Qfun{\MDP\cup\reward}{\ExpertPolicy}{\timestep}(\state,\action) = V_\timestep(\state).
\]
If $\ExpertPolicy_\timestep(\action|\state) > 0$, then $\Qfun{\MDP\cup\reward}{\ExpertPolicy}{\timestep}(\state,\action) = \Valfun{\MDP\cup\reward}{\ExpertPolicy}{\timestep}(\state)$, which is the first condition of \Cref{lemma:ifh}. If $\ExpertPolicy_\timestep(\action|\state) = 0$, $\Qfun{\MDP\cup\reward}{\ExpertPolicy}{\timestep}(\state,\action) = \Valfun{\MDP\cup\reward}{\ExpertPolicy}{\timestep}(\state) - A_\timestep(\state, \action) \leq \Valfun{\MDP\cup\reward}{\ExpertPolicy}{\timestep}(\state)$, which is the second condition of \Cref{lemma:ifh}.

For the converse, assume that the conditions of \Cref{lemma:ifh} hold, and let $V_\timestep(\state) = \Valfun{\MDP\cup\reward}{\ExpertPolicy}{\timestep}(\state)$ and $A_\timestep(\state, \action) = \Valfun{\MDP\cup\reward}{\ExpertPolicy}{\timestep}(\state) - \Qfun{\MDP\cup\reward}{\ExpertPolicy}{\timestep}(\state,\action)$.
\end{proof}

\LemmaEfh*

\begin{proof}
Since $\Qfun{\MDP\cup\reward}{\ExpertPolicy}{\timestep}(\state,\action) = \reward_\timestep(\state,\action) + \sum_{\state'} \TransitionModel(\state'|\state,\action)V_{\timestep+1}(\state')$, using \Cref{lemma:q_expert_explicit}, we have:
\begin{align*}
    \reward_h(\state,\action) &= \Qfun{\MDP\cup\reward}{\ExpertPolicy}{\timestep}(\state,\action) - \sum_{\state'} \TransitionModel(\state'|\state,\action)V_{\timestep+1}(\state') \\
    &= -A_\timestep(\state,\action)\indicator{\ExpertPolicy_\timestep(\action|\state) = 0} + V_\timestep(\state) + \sum_{\state'} \TransitionModel(\state'|\state,\action){V}_{\timestep+1}(\state')
\end{align*}
\end{proof}

\ThmError*

\begin{proof}
We start by rewriting $\reward$ and $\hat{\reward}$ using \Cref{lemma:efh}:
\begin{align*}
\reward_{\timestep}(\state,\action) &= -A_{\timestep}(\state,\action)\indicator{\ExpertPolicy_{\timestep}(\action|\state) = 0} + V_{\timestep}(\state) + \sum_{\state'} \TransitionModel(\state'|\state,\action)V_{\timestep+1}(\state') \\
\hat{\reward}_{\timestep}(\state,\action) &= -\hat{A}_{\timestep}(\state,\action)\indicator{\hat{\Policy}^E_{\timestep}(\action|\state) = 0} + \hat{V}_{\timestep}(\state) + \sum_{\state'} \EstTransitionModel(\state'|\state,\action)\hat{V}_{\timestep+1}(\state')
\end{align*}
We can choose (w.l.o.g.) $V_{\timestep} = \hat{V}_{\timestep}$ and $\hat{A}_{\timestep} = A_{\timestep}$:
\begin{align*}
\reward_{\timestep}(\state,\action) - \hat{\reward}_{\timestep}(\state,\action) =& -A_{\timestep}(\state,\action)\indicator{\ExpertPolicy_{\timestep}(\action|\state) = 0} + V_\timestep(\state) + \sum_{\state'} \TransitionModel(\state'|\state,\action)V_{\timestep+1}(\state') \\
&+A_{\timestep}(\state,\action)\indicator{\hat{\Policy}^E_{\timestep}(\action|\state)=0} - V_\timestep(\state) - \sum_{\state'} \EstTransitionModel(\state'|\state,\action)V_{\timestep+1}(\state') \\
=& -A_\timestep(\state,\action) (\indicator{\ExpertPolicy_{\timestep}(\action|\state) = 0} - \indicator{\hat{\Policy}^E_{\timestep}(\action|\state)=0})
+ \sum_{\state'} V_{\timestep+1}(\state') (\TransitionModel(\state'|\state,\action)-\EstTransitionModel(\state'|\state,\action))
\end{align*}
Note that $\indicator{\ExpertPolicy_{\timestep}(\action|\state) = 0} \leq 1 - \ExpertPolicy_{\timestep}(\action|\state)$ and $\indicator{\hat{\Policy}^E_{\timestep}(\action|\state)=0} \leq 1 - \hat{\Policy}^E_{\timestep}(\action|\state)$, which can be easily checked for both cases of the indicator. Using this, we get
\begin{align*}
\reward_{\timestep}(\state,\action) - \hat{\reward}_{\timestep}(\state,\action)
\leq A_\timestep(\state,\action) (\ExpertPolicy_{\timestep}(\action|\state) - \hat{\Policy}^E_{\timestep}(\action|\state))
+ \sum_{\state'} V_{\timestep+1}(\state') (\TransitionModel(\state'|\state,\action)-\EstTransitionModel(\state'|\state,\action))
\end{align*}
The result follows by taking the absolute value and applying the triangle inequality.
\end{proof}

\subsection{Uniform Sampling IRL with a Generative Model}
\label{app:uniform_sampling}

In this section, we derive sample complexity results for uniform sampling with a generative model (\Cref{alg:uniform_irl_generative}). \citet{metelli2021provably} proved an analogous result for the infinite horizon setting focusing on transferable rewards. In contrast, our focus is on the finite horizon setting. Moreover,  \cite{metelli2021provably} considers to learn a reward that is transferable to a known target environment. In our setting, instead, we suppose to use the recovered reward function in the unknown source environment.

\begin{algorithm}[t]
\caption{Uniform sampling IRL with a generative model.}
\label{alg:uniform_irl_generative}
\begin{algorithmic}[1]
    \State \textbf{Input:} significance $\delta \in (0,1)$, target accuracy $\epsilon$, maximum number of samples per iter. $n_\mathrm{max}$
    \State Initialize $\episode \gets 0$, $\epsilon_0 \gets \Horizon$
    \While{$\epsilon_\episode > \epsilon / 2$}
        \State Uniformly sample $\lceil \frac{n_\mathrm{max}}{\StateSpaceSize \ActionSpaceSize \Horizon} \rceil$ samples from all $(\state, \action, \timestep) \in \StateSpace \times \ActionSpace \times [\Horizon]$
        \State For all samples, observe sample from transition dynamics and expert policy
        \State $\episode \gets \episode + 1$
        \State Update $\EstTransitionModel_\episode$, $\hat{\Policy}_\episode$, and $\rewardci_\episode^\timestep$
        \State Update accuracy $\epsilon_\episode \gets \Horizon \max_{\state, \action, \timestep} \rewardci^\timestep_\episode(\state, \action)$
    \EndWhile
\end{algorithmic}
\end{algorithm}

\OptimalityCriterion*

\begin{lemma}[Good Event]\label{lemma:good_event}
Let $\Policy^E$ be a (possibly stochastic) expert policy. We estimate the expert policy with $\EstPol^E$ and the transition model $\TransitionModel$ with an estimate $\EstTransitionModel_\episode$ from $\episode$ episodic interactions. Let $n^\timestep_\episode(\state,\action)$ and $n^\timestep_\episode(\state)$ be the number of times state action pairs and states have been observed at time $\timestep$ within the first $\episode$ episodes, and $n^{\timestep+}_\episode(\state, \action) = \max\{1, n^\timestep_\episode(\state,\action)\}$. Then,
\begin{align*}
| \ExpertPolicy_{\timestep}(\action|\state) - \hat{\Policy}^E_{\timestep}(\action|\state) A_\timestep(\state,\action) | &\leq (\Horizon-\timestep) \Rmax \sqrt{\frac{\ell^\timestep_\episode(\state, \action)}{{n^\timestep_\episode}^+(\state, \action)}} \\
| \ExpertPolicy_{\timestep}(\action|\state) - \hat{\Policy}^E_{\timestep}(\action|\state) \hat{A}_\timestep(\state,\action) | &\leq (\Horizon-\timestep) \Rmax \sqrt{\frac{\ell^\timestep_\episode(\state, \action)}{{n^\timestep_\episode}^+(\state, \action)}} \\
\sum_{\state'} | (\TransitionModel(\state' | \state, \action) - \hat{\TransitionModel}_\episode( \state' | \state, \action)) \Valfun{\reward}{\pi}{\timestep}(\state') | &\leq (\Horizon-\timestep) \Rmax \sqrt{\frac{2\ell^\timestep_\episode(\state, \action)}{{n^\timestep_\episode}^+(\state, \action)}} \\
\sum_{\state'} | (\TransitionModel(\state' | \state, \action) - \hat{\TransitionModel}_\episode(\state' | \state, \action)) \EstValfun{\reward}{\pi}{\timestep}(\state') | &\leq (\Horizon-\timestep) \Rmax \sqrt{\frac{2\ell^\timestep_\episode(\state, \action)}{{n^\timestep_\episode}^+(\state, \action)}}
\end{align*}
where $\ell^\timestep_\episode(\state, \action) = \log\left( 24 \StateSpaceSize \ActionSpaceSize \Horizon ({n^\timestep_\episode}^+(\state, \action))^2 / \delta \right)$, holds simultaneously for all $(\state, \action, \timestep) \in \StateSpace \times \ActionSpace \times [\Horizon]$ and $\episode \geq 1$ with probability at least $1-\delta$. We call the event that these equations hold the \emph{good event} $\GoodEvent$ and write $P(\GoodEvent) \geq 1-\delta$.
\end{lemma}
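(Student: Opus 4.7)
The plan is to prove each of the four inequalities as a Hoeffding concentration event for every quadruple $(\state, \action, \timestep, n)$, where $n$ ranges over the possible realisations of the random count $n^\timestep_\episode(\state, \action)$, and then take a single union bound that is uniform over all such quadruples, over the $\StateSpaceSize \ActionSpaceSize \Horizon$ triples $(\state, \action, \timestep)$, and over the four event families. The calibration $\ell^\timestep_\episode(\state, \action) = \log(24 \StateSpaceSize \ActionSpaceSize \Horizon (n^{\timestep+}_\episode(\state, \action))^2 / \delta)$ is tuned precisely so that each per-event failure probability is at most $\delta / (24 \StateSpaceSize \ActionSpaceSize \Horizon \, n^2)$; summing over $n \geq 1$ via $\sum_{n \geq 1} 1/n^2 = \pi^2/6$, over the $\StateSpaceSize \ActionSpaceSize \Horizon$ triples, and over the four event families keeps the overall failure probability under $\delta$. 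The constant $24$ is exactly what buys this slack and lets the bound hold simultaneously for all $\episode \geq 1$ without having to know $n$ in advance.

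For the two policy bounds I would fix $(\state, \action, \timestep)$, condition on $n^\timestep_\episode(\state) = m$, and apply Hoeffding to the empirical Bernoulli mean $\hat{\Policy}^E_{\episode,\timestep}(\action|\state)$, obtaining $|\ExpertPolicy_\timestep(\action|\state) - \hat{\Policy}^E_{\episode,\timestep}(\action|\state)| \leq \sqrt{\ell^\timestep_\episode(\state,\action)/(2m)}$. Multiplying by either advantage, both of which are bounded by $(\Horizon - \timestep)\Rmax$ by \Cref{thm:error_propagation}, yields the first two inequalities; the denominator is then relaxed from $n^\timestep_\episode(\state)$ to the smaller $n^\timestep_\episode(\state, \action)$, which only loosens the bound. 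For the third inequality I would observe that $\sum_{\state'} \hat{\TransitionModel}(\state'|\state,\action) \Valfun{\reward}{\Policy}{\timestep}(\state')$ is the empirical mean of $n$ i.i.d.\ samples $\Valfun{\reward}{\Policy}{\timestep}(\state'_i)$ with $\state'_i \sim \TransitionModel(\cdot|\state,\action)$, each lying in $[0, (\Horizon - \timestep)\Rmax]$; since $\Valfun{\reward}{\Policy}{\timestep}$ is determined by the true MDP and is therefore independent of the samples, Hoeffding delivers the third inequality directly.

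The hard part will be the fourth inequality, where $\EstValfun{\reward}{\Policy}{\timestep}$ is computed under the estimated MDP $\EstMDP$ and is therefore a random function of the very samples whose concentration we want to exploit, so Hoeffding does not apply directly. My plan is to use an $\epsilon$-net over the function class $\{V : \StateSpace \to [0, (\Horizon - \timestep)\Rmax]\}$, which admits a net of size at most $(\Horizon \Rmax / \epsilon)^{\StateSpaceSize}$; a union bound over the net contributes an additional $\StateSpaceSize \log(\Horizon \Rmax / \epsilon)$ term that is already absorbed in $\ell^\timestep_\episode$ via the $\log \StateSpaceSize$ factor, while choosing $\epsilon$ polynomially small in $1/n$ renders the discretisation residual negligible. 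This yields the same rate for $\EstValfun{\reward}{\Policy}{\timestep}$ as for $\Valfun{\reward}{\Policy}{\timestep}$ up to constants, and combining all four per-event concentration statements under the single union bound described above completes the proof.
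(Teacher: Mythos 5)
Your overall architecture—per-event Hoeffding bounds, a union bound over the realised count $n$, over the $\StateSpaceSize\ActionSpaceSize\Horizon$ triples, and over the four event families, with the constant $24$ and the $(n^{\timestep+}_\episode)^2$ inside the logarithm paying for the sum $\sum_{n\ge1} 1/n^2 = \pi^2/6$—is exactly the paper's proof. Your treatment of the first three inequalities matches the paper's almost line for line: Hoeffding on the empirical Bernoulli mean for the policy estimate (with the harmless relaxation of the denominator from $n^\timestep_\episode(\state)$ to $n^\timestep_\episode(\state,\action)$, and the advantage bounded by $(\Horizon-\timestep)\Rmax$), and Hoeffding on the empirical mean of $\Valfun{\reward}{\Policy}{\timestep}(\state'_i)$ for the fixed, sample-independent value function.

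The one place you diverge is the fourth inequality, and here your instinct is right but your fix does not close the gap at the stated rate. You correctly observe that $\EstValfun{\reward}{\Policy}{\timestep}$ is a function of the same transition samples, so plain Hoeffding does not apply; the paper's own proof silently ignores this and handles $\hat{V}$ identically to $V$ ("the last two inequalities" are proved with one computation). However, your proposed $\epsilon$-net over $\{V:\StateSpace\to[0,(\Horizon-\timestep)\Rmax]\}$ has log-cardinality $\Theta(\StateSpaceSize\log(\Horizon\Rmax/\epsilon))$, and this additive $\StateSpaceSize\log(\cdot)$ term is \emph{not} absorbed by the $\log\StateSpaceSize$ sitting inside $\ell^\timestep_\episode(\state,\action)=\log(24\StateSpaceSize\ActionSpaceSize\Horizon(n^{\timestep+}_\episode)^2/\delta)$—a multiplicative $\StateSpaceSize$ inside the log contributes only $\log\StateSpaceSize$, whereas the net costs $\StateSpaceSize$ times a log. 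Carried through Hoeffding, your argument yields a confidence width of order $\sqrt{(\ell+\StateSpaceSize\log(\cdot))/n}$, i.e., a $\sqrt{\StateSpaceSize}$ inflation over the claimed bound (this is the familiar price of uniform concentration over all bounded value functions, equivalent to using an $L_1$ bound on $\|\TransitionModel-\EstTransitionModel\|_1$). To recover the stated constant you would need either to redefine $\ell$ to include the $\StateSpaceSize\log(\cdot)$ term, or to exploit additional structure of the specific $\hat{V}$ that arises in the algorithm rather than covering the whole function class. As written, the step "already absorbed in $\ell^\timestep_\episode$ via the $\log\StateSpaceSize$ factor" is incorrect.
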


\begin{proof}
We show that each statement individually does not hold with probability less than $\delta/4$, which implies the result via a union bound. Let us denote $\beta_1(\state, \action, \timestep) \coloneqq (\Horizon-\timestep) \Rmax \sqrt{\frac{2\ell^\timestep_\episode(\state, \action)}{{n^\timestep_\episode}^+(\state, \action)}}$. First, consider the last two inequalities. The probability that either of them does not hold is:

\begin{align*}
&\Pr\left( \exists \episode \geq 1, (\state, \action, \timestep) \in \StateSpace\times\ActionSpace\times[\Horizon] : \sum_{\state'} | (\TransitionModel(\state' | \state, \action) - \EstTransitionModel_\episode( \state' | \state, \action)) \Valfun{\reward}{\Policy}{\timestep}(\state') | > \beta_1(\state, \action, \timestep) \right) \\
\overset{(a)}{\leq} & \Pr\left( \exists m \geq 0, (\state, \action, \timestep) \in \StateSpace\times\ActionSpace\times[\Horizon] : \sum_{\state'} | (\TransitionModel(\state' | \state, \action) - \EstTransitionModel_\episode( \state' | \state, \action)) \Valfun{\reward}{\Policy}{\timestep}(\state') | > \beta_1(\state, \action, \timestep) \right)\\
\overset{(b)}{\leq} & \sum_{m \geq 0} \sum_{\state,\action} \sum_{\timestep=0}^\Horizon \Pr\left(\sum_{\state'} | (\TransitionModel(\state' | \state, \action) - \EstTransitionModel_\episode( \state' | \state, \action)) \Valfun{\reward}{\Policy}{\timestep}(\state') | > \beta_1(\state, \action, \timestep) \right) \\
\overset{(c)}{\leq} & \sum_{m \geq 0} \sum_{\state,\action} \sum_{\timestep = 0}^{\Horizon} 2 \exp\left( - \frac{2 \beta_1(\state, \action, \timestep)^2 m^2}{4 m (\Horizon-\timestep)^2 \Rmax^2} \right)
\leq \sum_{m \geq 0} \sum_{\state,\action} \sum_{\timestep = 0}^{\Horizon} 2 \exp\left( - \ell_\episode(\state, \action) \right) \\
= &\sum_{m \geq 0} \sum_{\state,\action} \sum_{\timestep = 0}^{\Horizon} \frac{2 \delta}{24 \StateSpaceSize \ActionSpaceSize \Horizon (m^+)^2}
= \frac{\delta}{12} \left( 1 + \sum_{m\geq0} \frac{1}{m^2} \right)
= \frac{\delta}{12} \left( 1 + \frac{\pi^2}{6} \right)
\leq \frac{\delta}{4}
\end{align*}
Step (a) assumes that we visit a state action pair $m$ times, and focuses on these $m$ times the transition model for the given state-action pair is updated. Step (b) uses a union bound over $m$ and $(\state, \action)$. Step (c) applies Hoeffding's inequality using that we estimate $\TransitionModel$ with an average of samples, and $\Valfun{\reward}{\Policy}{\timestep} \leq (\Horizon-\timestep) \Rmax$. The factor $m^2$ in the numerator results from dividing by $1/m$ to average over samples, and the factor $4m$ in the denominator results from the sum over $m$ in the denominator of Hoeffding's bound.

We show the first two inequalities similarly, with $\beta_2(\state, \action, \timestep) \coloneqq (\Horizon-\timestep) \Rmax \sqrt{\frac{\ell^\timestep_\episode(\state, \action)}{{n^\timestep_\episode}^+(\state, \action)}}$

\begin{align*}
    &\Pr\left( \exists \episode \geq 1, (\state, \action, \timestep) \in \StateSpace\times\ActionSpace\times[\Horizon] : | (\pi^E_\episode(\action | \state) - \hat{\pi}^E_\episode(\action | \state)) A_\timestep(\state,\action) | > \beta_2(\state, \action, \timestep) \right) \\
    \overset{(a)}{\leq} & \Pr\left( \exists m \geq 0, (\state, \action, \timestep) \in \StateSpace\times\ActionSpace\times[\Horizon] :| (\pi^E_\episode(\action | \state) - \hat{\pi}^E_\episode(\action | \state))A_\timestep(\state,\action) | > \beta_2(\state, \action, \timestep) \right)\\
\overset{(b)}{\leq} & \sum_{m \geq 0} \sum_{\state,\action} \sum_{\timestep=0}^\Horizon \Pr\left(| (\pi^E_\episode(\action | \state) - \hat{\pi}^E_\episode(\action | \state)) A_\timestep(\state,\action) | > \beta_2(\state, \action, \timestep) \right) \\
\overset{(c)}{\leq} & \sum_{m \geq 0} \sum_{\state,\action} \sum_{\timestep = 0}^{\Horizon} 2 \exp\left( - \frac{2 \beta_2(\state, \action, \timestep)^2m^2}{4 m (\Horizon-\timestep)^2 \Rmax^2} \right)
\leq \sum_{m \geq 0} \sum_{\state,\action} \sum_{\timestep = 0}^{\Horizon} 2 \exp\left( - \ell_\episode(\state, \action) \right) \\
= &\sum_{m \geq 0} \sum_{\state,\action} \sum_{\timestep = 0}^{\Horizon} \frac{2 \delta}{24 \StateSpaceSize \ActionSpaceSize \Horizon (m^+)^2}
= \frac{\delta}{12} \left( 1 + \sum_{m\geq0} \frac{1}{m^2} \right)
= \frac{\delta}{12} \left( 1 + \frac{\pi^2}{6} \right)
\leq \frac{\delta}{4}
\end{align*}
A union bound over all equations results in $P(\GoodEvent) \geq 1-\delta$.
\end{proof}

\begin{definition}\label{def:rewardci}
We define the reward uncertainty as
\[
\rewardci^\timestep_\episode(\state, \action) = (\Horizon-\timestep) \Rmax \min \left(1, 2 \sqrt{\frac{2 \ell^\timestep_\episode(\state, \action)}{{n^\timestep_\episode}(\state,\action)}} \right)
\]
\end{definition}

\begin{corollary}\label{cor:rewarci}
Under the good event $\GoodEvent$, in each iteration $\episode$ it holds for all $(\state, \action, \timestep) \in \StateSpace\times\ActionSpace\times[\Horizon]$ that:
\[
|\reward_\timestep(\state, \action) - \hat{\reward}_\timestep^\episode(\state,\action) | \leq \rewardci^\timestep_\episode(\state, \action)
\]
\end{corollary}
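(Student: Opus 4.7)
The corollary follows by straightforwardly combining the error-propagation bound from Theorem~\ref{thm:error_propagation} with the concentration inequalities that define the good event $\GoodEvent$ in Lemma~\ref{lemma:good_event}. The only care needed is in matching up the $V$ and $A$ functions appearing in the two statements with the value-function-type quantities that Hoeffding's inequality is applied to.

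\textbf{Main steps.} Fix an arbitrary $\reward \in \ExactFeasibleSet$. By Theorem~\ref{thm:error_propagation}, there exists $\hat{\reward} \in \RecoveredFeasibleSet$ (which I take as $\hat{\reward}^\episode$) and associated functions $A_\timestep, V_\timestep$ satisfying $A_\timestep, V_\timestep \le (\Horizon-\timestep)\Rmax$ such that
\[
|\reward_\timestep(\state,\action) - \hat{\reward}^\episode_\timestep(\state,\action)|
\le A_\timestep(\state,\action)\,|\ExpertPolicy_\timestep(\action|\state) - \hat{\Policy}^E_\timestep(\action|\state)|
+ \sum_{\state'} V_{\timestep+1}(\state')\,|\TransitionModel(\state'|\state,\action) - \EstTransitionModel_\episode(\state'|\state,\action)|.
\]
Next, invoking Lemma~\ref{lemma:good_event} under the good event $\GoodEvent$, the first term is bounded by $(\Horizon-\timestep)\Rmax\sqrt{\ell^\timestep_\episode(\state,\action)/n^\timestep_\episode(\state,\action)}$ (applying the first inequality with the $A_\timestep$ chosen above, which satisfies the $(\Horizon-\timestep)\Rmax$ bound required for Hoeffding to give this rate), and the second term is bounded by $(\Horizon-\timestep)\Rmax\sqrt{2\ell^\timestep_\episode(\state,\action)/n^\timestep_\episode(\state,\action)}$ (applying the third inequality, where $V_{\timestep+1}$ plays the role of the value function bounded by $(\Horizon-\timestep)\Rmax$). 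Summing these two terms and using $\sqrt{1}+\sqrt{2} \le 2\sqrt{2}$ yields
\[
|\reward_\timestep(\state,\action) - \hat{\reward}^\episode_\timestep(\state,\action)|
\le 2(\Horizon-\timestep)\Rmax\sqrt{\frac{2\ell^\timestep_\episode(\state,\action)}{n^\timestep_\episode(\state,\action)}}.
\]

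\textbf{Closing with the $\min$.} To obtain the $\min(1,\cdot)$ structure of $\rewardci^\timestep_\episode$ in Definition~\ref{def:rewardci}, I separately note the trivial bound $|\reward_\timestep - \hat{\reward}^\episode_\timestep| \le (\Horizon-\timestep)\Rmax$, which follows because both reward functions are realized as feasible rewards whose advantage and value components are uniformly bounded by $(\Horizon-\timestep)\Rmax$ under the canonical choice guaranteed by Theorem~\ref{thm:error_propagation} (alternatively, one can use that rewards lie in $[0,\Rmax]$). Taking the minimum of these two bounds gives $|\reward_\timestep - \hat{\reward}^\episode_\timestep| \le \rewardci^\timestep_\episode(\state,\action)$, which is the desired inequality. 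A final union bound over the iterations $\episode$ is not needed here, since the good event already holds uniformly over all $\episode \ge 1$.

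\textbf{Main obstacle.} There is no substantive obstacle: the result is essentially a bookkeeping combination of two earlier lemmas. The one subtle point to watch is that the $V$ and $A$ functions appearing in Theorem~\ref{thm:error_propagation} are specific functions produced by the explicit parameterization of feasible rewards (Lemma~\ref{lemma:efh}), not arbitrary value/advantage functions of some policy; I need to verify that applying Hoeffding via Lemma~\ref{lemma:good_event} is still valid for these objects, which it is because they satisfy the same $(\Horizon-\timestep)\Rmax$ uniform bound used in the derivation of Lemma~\ref{lemma:good_event}.
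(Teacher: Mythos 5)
Your proposal is correct and follows essentially the same route as the paper: chain Theorem~\ref{thm:error_propagation} with the good-event bounds of Lemma~\ref{lemma:good_event} and combine $(1+\sqrt{2})\le 2\sqrt{2}$ to land on $\rewardci^\timestep_\episode$. You are in fact slightly more careful than the paper's two-line proof, both in justifying the $\min(1,\cdot)$ branch and in noting that the $A_\timestep, V_\timestep$ fed into the concentration bounds are exactly the ones for which Lemma~\ref{lemma:good_event} is stated.
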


\begin{proof}
\begin{align*}
|\reward_\timestep(\state,\action) - \hat{\reward}_\timestep^\episode(\state,\action)| &\overset{(a)}{\le} A_\timestep(\state,\action) | \ExpertPolicy_\timestep(\action|\state) - \EstPol^E_\timestep(\action|\state)| + \sum_{\state'} V_{\timestep+1}(\state') |\TransitionModel(\state'|\state,\action)-\EstTransitionModel(\state'|\state,\action)| \\
&\overset{(b)}{\leq} (\Horizon-\timestep) \Rmax \left( 2 \sqrt{\frac{2 \ell^\timestep_\episode(\state, \action)}{{n^\timestep_\episode}^+(\state,\action)}} \right) = \rewardci^\timestep_\episode(\state, \action)
\end{align*}
where (a) uses \Cref{thm:error_propagation} and (b) uses \Cref{lemma:good_event}.
\end{proof}

\begin{corollary}\label{cor:uniform_sampling_stopping}
Let $\mathscr{S}$ be a sampling strategy. Let $\ExactFeasibleSet$ be the exact feasible set and $\RecoveredFeasibleSetIter$ be the feasible set recovered after $\episode$ iterations. If
\[
\Horizon \max_{\state,\action,\timestep} \rewardci^\timestep_\episode(\state,\action) \leq \frac{\epsilon}{2},
\]
then the conditions of \Cref{def:correct} are satisfied.
\end{corollary}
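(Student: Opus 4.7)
The plan is to combine \Cref{cor:rewarci} with \Cref{simulation_lemma_same_reward_different_policies} to verify both inequalities of \Cref{def:correct}. Under the good event $\GoodEvent$ of \Cref{lemma:good_event}, the hypothesis $\Horizon \max_{\state,\action,\timestep} \rewardci^\timestep_\episode(\state,\action) \le \epsilon/2$ yields $\max_{\state,\action,\timestep} \rewardci^\timestep_\episode(\state,\action) \le \epsilon/(2\Horizon)$, which via \Cref{thm:error_propagation} and \Cref{cor:rewarci} guarantees that every $\reward \in \ExactFeasibleSet$ can be matched with some $\hat\reward \in \RecoveredFeasibleSetIter$ satisfying $|\reward_\timestep(\state,\action) - \hat\reward_\timestep(\state,\action)| \le \epsilon/(2\Horizon)$ at every $(\state,\action,\timestep)$, and symmetrically in the reverse direction.

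For the first inequality of \Cref{def:correct}, I would fix $\reward \in \ExactFeasibleSet$, pick the paired $\hat\reward \in \RecoveredFeasibleSetIter$, and consider arbitrary $\Policy^* \in \Pi^*_{\MDP\cup\reward}$ and $\hat\Policy^* \in \Pi^*_{\EstMDP\cup\hat\reward}$. Applying \Cref{simulation_lemma_same_reward_different_policies} at $\timestep = 0$ bounds $|\Qfun{\MDP\cup\reward}{\Policy^*}{0}(\state_0,\action) - \Qfun{\MDP\cup\reward}{\hat\Policy^*}{0}(\state_0,\action)|$ by
\[
\sum_{\timestep'=0}^{\Horizon} \sum_{\state',\action'} \Bigl|\Occupancy{\MDP}{\Policy^*}{0}{\timestep'}(\state',\action'|\state_0,\action) - \Occupancy{\MDP}{\hat\Policy^*}{0}{\timestep'}(\state',\action'|\state_0,\action)\Bigr| \cdot |\reward_{\timestep'}(\state',\action') - \hat\reward_{\timestep'}(\state',\action')|.
\]
Since each occupancy is a probability distribution over $(\state',\action')$ for fixed $(\state_0,\action,\timestep')$, the inner sum is at most $2$ per timestep, so the overall gap is bounded by $2\Horizon \cdot \max_{\state,\action,\timestep} \rewardci^\timestep_\episode(\state,\action) \le 2\Horizon \cdot \epsilon/(2\Horizon) = \epsilon$. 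The second inequality of \Cref{def:correct} follows by the symmetric argument starting from $\hat\reward \in \RecoveredFeasibleSetIter$ and pairing it with a matching $\reward \in \ExactFeasibleSet$ via \Cref{thm:error_propagation} applied in the reverse direction.

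The main technical nuance to address is that \Cref{simulation_lemma_same_reward_different_policies} is stated for two policies each optimal under the same transition model $\MDP$ with their respective rewards, whereas $\hat\Policy^*$ here is optimal in $\EstMDP \cup \hat\reward$ rather than $\MDP \cup \hat\reward$. I would handle this by interpolating through an auxiliary policy $\tilde\Policy^* \in \Pi^*_{\MDP\cup\hat\reward}$: the reward-only simulation argument above bounds $|\Qfun{\MDP\cup\reward}{\Policy^*}{0}(\state_0,\action) - \Qfun{\MDP\cup\reward}{\tilde\Policy^*}{0}(\state_0,\action)|$, and the residual $|\Qfun{\MDP\cup\reward}{\tilde\Policy^*}{0}(\state_0,\action) - \Qfun{\MDP\cup\reward}{\hat\Policy^*}{0}(\state_0,\action)|$ can be controlled via \Cref{dynamics_simulation_lemma_2} together with the transition-model concentration from \Cref{lemma:good_event}, which is already subsumed into $\rewardci^\timestep_\episode(\state,\action)$. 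I expect this interpolation step, rather than any arithmetic, to be the main obstacle.
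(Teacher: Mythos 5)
Your main computation is exactly the paper's proof of this corollary: pair each $\reward\in\ExactFeasibleSet$ with a $\hat\reward\in\RecoveredFeasibleSetIter$ via \Cref{thm:error_propagation}/\Cref{cor:rewarci}, apply \Cref{simulation_lemma_same_reward_different_policies}, and use that the signed occupancy difference has total mass at most $2$ per timestep, giving $2\Horizon\max_{\state,\action,\timestep}\rewardci^\timestep_\episode(\state,\action)\le\epsilon$; the second condition is symmetric. So on the main line you and the paper coincide.

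Where you diverge is the ``technical nuance'' at the end, and it is worth being precise about it. The paper does \emph{not} perform the interpolation you propose: it applies \Cref{simulation_lemma_same_reward_different_policies} directly with $\hat\Policy^*\in\Pi^*_{\EstMDP\cup\hat\reward}$, even though the lemma's proof (through \Cref{simulation_lemma_1}) uses that $\hat\Policy^*$ is optimal for $\hat\reward$ under the \emph{true} dynamics $\MDP$. You are right to flag this mismatch, but your proposed repair does not go through as stated: \Cref{dynamics_simulation_lemma_2} compares \emph{optimal} value functions of a fixed reward under two transition models, whereas the residual $\Qfun{\MDP\cup\reward}{\tilde\Policy^*}{0}-\Qfun{\MDP\cup\reward}{\hat\Policy^*}{0}$ is a gap between two fixed, generally suboptimal policies evaluated in $\MDP\cup\reward$. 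Controlling it requires the three-term decomposition through $\EstMDP$ used in \Cref{aceirl_correctness} (each leg costing another $O(\Horizon\max_{\state,\action,\timestep}\rewardci^\timestep_\episode(\state,\action))$ via the good event), and since the paper's bound $2\Horizon\max\rewardci\le\epsilon$ is already tight against the threshold $\Horizon\max\rewardci\le\epsilon/2$, the extra legs leave no slack: you would have to tighten the stopping constant to absorb them. So either follow the paper and apply the simulation lemma as stated (accepting its implicit use of optimality of $\hat\Policy^*$ in $\MDP\cup\hat\reward$), or commit to the full decomposition and adjust the constants accordingly; the hybrid you sketch, with \Cref{dynamics_simulation_lemma_2} as the patch, does not close the gap.
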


\begin{proof}
For the first condition of \Cref{def:correct}, observe:
\begin{align*}
&\inf_{\hat{\reward}\in\RecoveredFeasibleSetIter} \sup_{\hat{\Policy}^* \in \Pi^*_{\EstMDP\cup\hat{\reward}}} \max_{\state,\action,\timestep} ( \Qfun{\MDP \cup \reward}{\Policy^*}{\timestep}(\state,\action) - \Qfun{\MDP \cup \reward}{\hat{\Policy}^*}{\timestep}(\state,\action) ) \\ 
\overset{(a)}{\leq}& \inf_{\hat{\reward}\in\RecoveredFeasibleSetIter} \sup_{\hat{\Policy}^* \in \Pi^*_{\EstMDP\cup\hat{\reward}}} \max_{\state,\action,\timestep} \sum_{\timestep'=\timestep}^{\Horizon} \sum_{\state', \action'} \left(\Occupancy{\MDP}{\Policy^*}{\timestep}{\timestep'}(\state', \action' | \state, \action) - \Occupancy{\MDP}{\hat{\Policy}^*}{\timestep}{\timestep'}(\state', \action' | \state, \action)\right) \left( \reward_{\timestep'}(\state', \action') - \hat{\reward}_{\timestep'}(\state', \action') \right) \\
\overset{(b)}{\leq}& \inf_{\hat{\reward}\in\RecoveredFeasibleSetIter} \sup_{\hat{\Policy}^* \in \Pi^*_{\EstMDP\cup\hat{\reward}}} \max_{\state,\action,\timestep} \Bigl| \sum_{\timestep'=\timestep}^{\Horizon} \sum_{\state', \action'} \left(\Occupancy{\MDP}{\Policy^*}{\timestep}{\timestep'}(\state', \action' | \state, \action) - \Occupancy{\MDP}{\hat{\Policy}^*}{\timestep}{\timestep'}(\state', \action' | \state, \action)\right) \rewardci_\episode^{\timestep'}(\state',\action') \Bigr| \\
\leq& 2 \Horizon \max_{\state,\action,\timestep} \rewardci_\episode^{\timestep}(\state,\action)
\end{align*}
where (a) uses \Cref{simulation_lemma_same_reward_different_policies} and (b) uses \Cref{cor:rewarci}.

For the second condition of \Cref{def:correct}, it follows similarly that:
\begin{align*}
\inf_{\reward\in\ExactFeasibleSet} \sup_{\Policy^*\in\Pi^*_{\MDP\cup\reward}} \max_{\state,\action,\timestep} ( \Qfun{\MDP\cup\reward}{\Policy^*}{\timestep}(\state,\action) - \Qfun{\MDP\cup\reward}{\hat{\Policy}^*}{\timestep}(\state,\action) ) 
\leq 2 \Horizon \max_{\state,\action,\timestep} \rewardci_\episode^{\timestep}(\state,\action)
\end{align*}
Hence, if $\Horizon \max_{\state,\action,\timestep} \rewardci^\timestep_\episode(\state,\action) \leq \epsilon / 2$, both conditions of \Cref{def:correct} are satisfied.
\end{proof}

\begin{theorem}[Sample Complexity of Uniform Sampling IRL]
With probability at least $1-\delta$, \Cref{alg:uniform_irl_generative} stops at iteration $\tau$ fulfilling \Cref{def:correct} with a number of samples upper bounded by:
\[
n \leq \tilde{\mathcal{O}} \left( \frac{\Horizon^5 \Rmax^2 \StateSpaceSize \ActionSpaceSize}{\epsilon^2} \right)
\]
\end{theorem}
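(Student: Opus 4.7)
The plan is to chain together the correctness guarantee from \Cref{cor:uniform_sampling_stopping} with a straightforward counting argument for uniform sampling. First, I condition on the good event $\GoodEvent$ of \Cref{lemma:good_event}, which holds with probability at least $1-\delta$; under $\GoodEvent$, \Cref{cor:rewarci} guarantees that the reward confidence intervals $\rewardci_\episode^\timestep(\state,\action)$ of \Cref{def:rewardci} are valid bounds on the reward estimation error in every iteration. Then, by \Cref{cor:uniform_sampling_stopping}, whenever the algorithm's stopping test $\epsilon_\episode = \Horizon \max_{\state,\action,\timestep} \rewardci_\episode^\timestep(\state,\action) \leq \epsilon/2$ triggers, the recovered feasible set $\RecoveredFeasibleSetIter$ satisfies \Cref{def:correct}. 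So it suffices to upper bound the number of samples needed to make the stopping test fire.

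Next, I exploit the uniformity of the sampling rule. After $\episode$ iterations, every $(\state,\action,\timestep) \in \StateSpace \times \ActionSpace \times [\Horizon]$ has been visited at least $\episode \lceil n_{\mathrm{max}} / (\StateSpaceSize \ActionSpaceSize \Horizon) \rceil$ times, so $n_\episode^\timestep(\state,\action) \geq \episode \lceil n_{\mathrm{max}} / (\StateSpaceSize \ActionSpaceSize \Horizon) \rceil$ simultaneously for all triples. Substituting into the definition of $\rewardci_\episode^\timestep$, the stopping condition
\begin{equation*}
\Horizon \cdot (\Horizon-\timestep)\Rmax \cdot 2 \sqrt{\tfrac{2 \ell_\episode^\timestep(\state,\action)}{n_\episode^\timestep(\state,\action)}} \leq \epsilon/2
\end{equation*}
is implied (using $(H-h) \leq H$) by $n_\episode^\timestep(\state,\action) \geq 128\, \Horizon^4 \Rmax^2 \ell / \epsilon^2$ for any uniform upper bound $\ell$ on $\ell_\episode^\timestep(\state,\action)$. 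Multiplying through by $\StateSpaceSize \ActionSpaceSize \Horizon$ shows that after a total number of samples
\begin{equation*}
n \;\geq\; 128\, \Horizon^5 \Rmax^2 \StateSpaceSize \ActionSpaceSize \ell / \epsilon^2
\end{equation*}
the stopping test must be satisfied.

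Finally, I handle the mild circularity coming from $\ell_\episode^\timestep(\state,\action) = \log(24 \StateSpaceSize \ActionSpaceSize \Horizon (n_\episode^\timestep(\state,\action))^2 / \delta)$ depending on the sample count itself. Since $\ell$ only grows logarithmically in $n_\episode^\timestep(\state,\action)$ (and hence polylogarithmically in $\Horizon, \Rmax, \StateSpaceSize, \ActionSpaceSize, 1/\epsilon, 1/\delta$), this contributes only logarithmic factors absorbed into $\tilde{\mathcal{O}}$. Combining with the good-event probability bound gives the claim. The only minor obstacle is carefully absorbing the logarithmic $\ell$ into $\tilde{\mathcal{O}}$ and dealing with the ceiling in the per-iteration sample count; both are routine and contribute at most constant or logarithmic overhead.
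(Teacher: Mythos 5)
Your proposal is correct and follows essentially the same route as the paper's proof: condition on the good event, invoke \Cref{cor:uniform_sampling_stopping} for correctness at stopping, use uniformity to lower-bound each $n_\episode^\timestep(\state,\action)$, derive the per-triple sample requirement of order $\Horizon^4\Rmax^2\ell/\epsilon^2$, and multiply by $\StateSpaceSize\ActionSpaceSize\Horizon$. The only cosmetic difference is that the paper delegates the self-referential logarithm $\ell_\episode^\timestep$ to Lemma B.8 of \citet{metelli2021provably}, whereas you argue the absorption into $\tilde{\mathcal{O}}$ directly; this is the same content.
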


\begin{proof}
First, note
\[
\Horizon \max_{\state,\action,\timestep} \rewardci^\timestep_\episode(\state,\action)
= \Horizon^2 \Rmax \max_{\state,\action,\timestep} \left( 2 \sqrt{\frac{2 \ell^\timestep_k(\state, \action)}{{n^\timestep_\episode}^+(\state,\action)}} \right)
\]

After $\tau$ iterations, we have collected $\tau \cdot n_{\max}$ samples and for each $\state,\action,\timestep$, we have:
${n^\timestep_\tau}^+(\state,\action) \geq \frac{\tau n_{\max}}{\StateSpaceSize\ActionSpaceSize\Horizon} \geq 1$

To terminate at iteration $\tau$, we need to have for all $\state,\action,\timestep$:
\begin{align*}
2 \Horizon^2 \Rmax \sqrt{\frac{2 \ell^\timestep_\tau(\state,\action)}{n^\timestep_\tau(\state,\action)}} \leq \frac{\epsilon}{2}
\end{align*}
which implies
\begin{align*}
n^\timestep_\tau(\state,\action) \geq \frac{32\Horizon^4 \Rmax^2 \ell^\timestep_\tau(\state,\action)}{\epsilon^2}
\end{align*}

By using Lemma B.8 by \cite{metelli2021provably}, we can conclude that the number of samples necessary to ensure accuracy $\varepsilon$ is:
\[
n \leq \tilde{\mathcal{O}} \left( \frac{\Horizon^5 \Rmax^2 \StateSpaceSize \ActionSpaceSize}{\epsilon^2} \right)
\]
\end{proof}

\begin{corollary}
If the true reward function does not depend on the timestep $\timestep$, i.e., $\reward_\timestep(\state,\action)=\reward(\state,\action)$, then we can modify \Cref{alg:uniform_irl_generative} to only need $n \leq \tilde{\mathcal{O}} \left( \frac{\Horizon^4 \Rmax^2 \StateSpaceSize \ActionSpaceSize}{\epsilon^2} \right)$ samples.
\end{corollary}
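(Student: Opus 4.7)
The plan is to exploit the time-independence of $\reward$ by pooling samples across timesteps, mirroring the infinite-horizon analysis of \citet{metelli2021provably}. The proof would start by modifying \Cref{alg:uniform_irl_generative}: instead of uniformly allocating samples over $\StateSpace \times \ActionSpace \times [\Horizon]$, the modified algorithm queries the generative model $\lceil n_{\max}/(\StateSpaceSize \ActionSpaceSize) \rceil$ times at each pair $(\state, \action) \in \StateSpace \times \ActionSpace$ per iteration. The transition model estimator $\EstTransitionModel_\episode$ is already aggregated across $\timestep$ in the original formulation, and we correspondingly define a pooled expert policy estimator $\hat{\Policy}^E_\episode(\action|\state) = n_\episode(\state,\action)/\max(1, n_\episode(\state))$ where $n_\episode(\state,\action) = \sum_\timestep n_\episode^\timestep(\state,\action)$.

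Next, I would derive a time-independent reward uncertainty. Applying \Cref{thm:error_propagation} to a time-independent reward yields
\begin{align*}
|\reward(\state,\action) - \hat{\reward}(\state,\action)| \le A(\state,\action)|\ExpertPolicy(\action|\state) - \hat{\Policy}^E(\action|\state)| + \sum_{\state'} V(\state') |\TransitionModel(\state'|\state,\action) - \EstTransitionModel(\state'|\state,\action)|,
\end{align*}
where both $A$ and $V$ are bounded by $\Horizon \Rmax$. Re-running the Hoeffding argument of \Cref{lemma:good_event} against the aggregated counts $n_\episode(\state,\action)$ (the concentration bounds for $\EstTransitionModel$ and $\hat{\Policy}^E$ both become functions of $n_\episode(\state,\action)$ rather than $n_\episode^\timestep(\state,\action)$) gives a pooled reward-uncertainty bound
\begin{align*}
\rewardci_\episode(\state,\action) = \Horizon \Rmax \min\left(1, 2\sqrt{2 \ell_\episode(\state,\action)/n_\episode(\state,\action)}\right),
\end{align*}
with $\ell_\episode(\state,\action) = \log(24 \StateSpaceSize \ActionSpaceSize (n_\episode(\state,\action))^2/\delta)$ and no remaining $\Horizon$ factor in the logarithmic term.

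Then I would reprove \Cref{cor:uniform_sampling_stopping} for the time-independent setting: the simulation argument now requires $\Horizon \max_{\state,\action} \rewardci_\episode(\state,\action) \le \epsilon/2$, which forces $n_\tau(\state,\action) \ge 32 \Horizon^4 \Rmax^2 \ell_\tau(\state,\action)/\epsilon^2$. Because the modified sampling strategy guarantees $n_\tau(\state,\action) \ge \tau n_{\max}/(\StateSpaceSize \ActionSpaceSize)$, a direct application of Lemma B.8 of \citet{metelli2021provably} yields the claimed $\tilde{\mathcal{O}}(\Horizon^4 \Rmax^2 \StateSpaceSize \ActionSpaceSize/\epsilon^2)$ bound, saving one factor of $\Horizon$ relative to the general case (one factor is gone because we no longer need to stratify sampling across $\timestep$).

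The main subtlety will be the treatment of a possibly time-dependent expert policy: even when $\reward$ is time-independent, the feasible set representation in \Cref{lemma:efh} admits time-dependent $A_\timestep$ and $V_\timestep$, so the pooled $\hat{\Policy}^E(\action|\state)$ can only serve as an estimator of the marginal $\bar{\ExpertPolicy}(\action|\state) = \frac{1}{\Horizon}\sum_\timestep \ExpertPolicy_\timestep(\action|\state)$. The cleanest resolution is to note that when $\reward$ is time-homogeneous and we admit the weaker PAC criterion evaluated on pooled quantities, the error propagation in \Cref{thm:error_propagation} can be applied once for each $(\state,\action)$ with the time-averaged expert and a single pair $(A, V)$ satisfying $A \le \Horizon \Rmax$, $V \le \Horizon \Rmax$; alternatively, restricting attention to time-homogeneous expert policies (a standard assumption matched to the infinite-horizon setting of \citet{metelli2021provably}) makes the pooled estimator exactly unbiased and the rest of the argument goes through unchanged.
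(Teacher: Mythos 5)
Your core idea --- exploit time-independence of the reward so that samples need not be stratified over $[\Horizon]$, keep the per-$(\state,\action)$ requirement $n(\state,\action)\gtrsim \Horizon^4\Rmax^2\ell/\epsilon^2$, and drop one sum over $\Horizon$ in the application of Lemma B.8 of \citet{metelli2021provably} --- is exactly the paper's, and the final counting step is identical. The mechanism differs, though, and the difference matters. You pool counts across timesteps, which forces you to pool the expert-policy estimator as well, and as you yourself observe this only estimates the time-averaged policy $\bar{\Policy}^E$; the error-propagation bound of \Cref{thm:error_propagation} involves $|\ExpertPolicy_\timestep(\action|\state)-\hat{\Policy}^E_\timestep(\action|\state)|$ for a \emph{specific} $\timestep$, and with a time-inhomogeneous expert this term does not vanish under pooling, so your first proposed resolution (applying the theorem ``with the time-averaged expert'') does not go through as stated, and your second resolution quietly strengthens the hypotheses by assuming a time-homogeneous expert. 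The paper avoids the issue entirely with a lighter modification: keep the per-timestep estimators and confidence intervals $\rewardci^\timestep_\episode$ exactly as in \Cref{def:rewardci}, but concentrate all queries for each $(\state,\action)$ at a single fixed timestep and use $\rewardci_\episode(\state,\action)=\min_\timestep \rewardci^\timestep_\episode(\state,\action)$ as the reward confidence interval --- valid because the true reward is the same at every $\timestep$, and only requiring the expert policy to be estimated at the one sampled timestep. If you replace your pooled expert estimator with this fixed-timestep sampling, the rest of your argument (the stopping condition $\Horizon\max_{\state,\action}\rewardci_\episode(\state,\action)\le\epsilon/2$ and the resulting $\tilde{\mathcal{O}}(\Horizon^4\Rmax^2\StateSpaceSize\ActionSpaceSize/\epsilon^2)$ bound) is correct and matches the paper.
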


\begin{proof}
If we know that the reward function does not depend on $\timestep$ we can choose $\rewardci_\episode(\state,\action) = \min_\timestep \rewardci^\timestep_\episode(\state,\action)$ as a confidence interval of the reward. Consequently, we can sample all states for a fixed $\timestep$.

We still need for all $\state,\action$:
\begin{align*}
2 \Horizon^2 \Rmax \sqrt{\frac{2 \ell^\timestep_\tau(\state,\action)}{n^\timestep_\tau(\state,\action)}} \leq \frac{\epsilon}{2}
~~\Rightarrow~~ n^\timestep_\tau(\state,\action) \geq \frac{32 \Horizon^4 \Rmax^2 \ell^\timestep_\tau(\state,\action)}{\epsilon^2}
\end{align*}

Again, we use Lemma B.8 by \cite{metelli2021provably}, but we can eliminate one sum over $\Horizon$, ending up with:
\[
n \leq \tilde{\mathcal{O}} \left( \frac{\Horizon^4 \Rmax^2 \StateSpaceSize \ActionSpaceSize}{\epsilon^2} \right)
\]
\end{proof}

\subsection{Sample Complexity of \AlgNameShort in Unknown Environments (Problem Independent)}
\label{app:sample-complexity}

We are now ready to analyze the sample complexity of \AlgNameShort (\Cref{alg:aceirl}). We first consider the simple version of the algorithm: \AlgNameShort Greedy. Then, we consider the full version of the algorithm after introducing a few additional lemma about the policy confidence set. 
We start by defining the error upper bound and deriving two lemmas that will help us to show that it is indeed an upper bound on the error we want to reduce.

\begin{definition}\label{def:ucrl_error}
We define recursively:
\begin{align*}
E_\episode^\Horizon(\state, \action) = 0; \quad
E_\episode^\timestep(\state, \action) = \min\Bigl((\Horizon-\timestep)\Rmax, \rewardci_\episode^\timestep(\state, \action) + \sum_{\state^\prime} \EstTransitionModel(\state^\prime | \state, \action) \max_{\action^\prime \in \ActionSpace}  E_\episode^{\timestep+1}(\state^\prime, \action^\prime) \Bigr)
\end{align*}
where $\EstTransitionModel$ is the estimated transition model of the environment.
\end{definition}

The first lemma shows that the error upper bound can upper bound the error due to estimating the transition model.

\begin{lemma}\label{lemma:ucrl_error_estmdp}
Under the good event $\GoodEvent$, for all policies $\Policy$ and reward functions $\reward$ and all $\state,\action,\timestep$:
\[
| \Qfun{\EstMDP\cup\reward}{\Policy}{\timestep}(\state, \action) - \Qfun{\MDP\cup\reward}{\Policy}{\timestep}(\state, \action) | \leq E_\episode^\timestep(\state, \action)
\]
\end{lemma}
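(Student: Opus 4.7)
The plan is to prove the bound by backward induction on the timestep $\timestep$, decomposing the Q-function difference into a one-step transition-model error plus a propagated future error, in the same style as standard simulation lemmas.

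\textbf{Base case.} At $\timestep = \Horizon$, the Q-functions in both $\EstMDP\cup\reward$ and $\MDP\cup\reward$ reduce to the same terminal value (no further transitions or value propagate), so their difference is zero, matching $E_\episode^\Horizon(\state,\action) = 0$.

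\textbf{Inductive step.} Assume the bound holds at $\timestep+1$. Writing the Bellman recursion for both Q-functions, subtracting, and adding and subtracting the mixed term $\sum_{\state'}\EstTransitionModel(\state'|\state,\action)\Valfun{\MDP\cup\reward}{\Policy}{\timestep+1}(\state')$, I split the difference into (i) a one-step model-error term $\bigl|\sum_{\state'}(\EstTransitionModel(\state'|\state,\action) - \TransitionModel(\state'|\state,\action))\Valfun{\MDP\cup\reward}{\Policy}{\timestep+1}(\state')\bigr|$ and (ii) a propagation term $\sum_{\state'}\EstTransitionModel(\state'|\state,\action)\bigl|\Valfun{\EstMDP\cup\reward}{\Policy}{\timestep+1}(\state') - \Valfun{\MDP\cup\reward}{\Policy}{\timestep+1}(\state')\bigr|$. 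Term (i) is controlled under the good event by \Cref{lemma:good_event}: the third inequality there gives $(\Horizon-\timestep)\Rmax\sqrt{2\ell^\timestep_\episode(\state,\action)/n^{\timestep+}_\episode(\state,\action)}$, which is at most $\rewardci^\timestep_\episode(\state,\action)$ by \Cref{def:rewardci} (with a factor of two to spare). For term (ii), I convert the value-function difference back to a Q-function difference via $\Valfun{\cdot}{\Policy}{\timestep+1}(\state') = \sum_{\action'}\Policy_{\timestep+1}(\action'|\state')\Qfun{\cdot}{\Policy}{\timestep+1}(\state',\action')$, apply the triangle inequality, and invoke the inductive hypothesis to replace each Q-function difference by $E_\episode^{\timestep+1}(\state',\action')$; bounding the convex combination over $\action'$ by the max then yields $\sum_{\state'}\EstTransitionModel(\state'|\state,\action)\max_{\action'}E_\episode^{\timestep+1}(\state',\action')$.

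\textbf{Closing with the clipping.} Combining the two terms gives exactly the non-trivial branch of the $\min$ in \Cref{def:ucrl_error}. The other branch, $(\Horizon-\timestep)\Rmax$, follows trivially because each Q-function individually lies in $[0,(\Horizon-\timestep)\Rmax]$ (rewards are in $[0,\Rmax]$ and there are $\Horizon-\timestep$ remaining timesteps), so their difference is bounded by $(\Horizon-\timestep)\Rmax$. Taking the minimum of the two upper bounds proves the inductive step.

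\textbf{Anticipated difficulty.} There is no real obstacle; the argument is a clean telescoping of the one-step decomposition, and the main thing to keep track of is that $\rewardci^\timestep_\episode$, although named after the reward uncertainty, is designed to simultaneously dominate the model-error term $(\Horizon-\timestep)\Rmax\sqrt{2\ell^\timestep_\episode/n^{\timestep+}_\episode}$ under $\GoodEvent$. The only minor care needed is to make sure the good-event bound is applied with $\Valfun{\MDP\cup\reward}{\Policy}{\timestep+1}$ (the true value function), which is the version covered by the third inequality in \Cref{lemma:good_event}.
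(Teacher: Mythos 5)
Your proposal is correct and follows essentially the same route as the paper's proof: backward induction on $\timestep$, a one-step add-and-subtract of the mixed term $\sum_{\state'}\EstTransitionModel(\state'|\state,\action)\Valfun{\MDP\cup\reward}{\Policy}{\timestep+1}(\state')$, the good event to dominate the model-error term by $\rewardci_\episode^\timestep(\state,\action)$, and a convex-combination-to-max step before invoking the inductive hypothesis. Your explicit handling of the $(\Horizon-\timestep)\Rmax$ clipping branch is a small addition the paper leaves implicit, but it does not change the argument.
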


\begin{proof}
\begin{align*}
&| \Qfun{\EstMDP\cup\reward}{\Policy}{\timestep}(\state,\action) - \Qfun{\MDP\cup\reward}{\Policy}{\timestep}(\state,\action) |
= \Bigl| \sum_{\state'} \EstTransitionModel(\state'|\state,\action) \sum_{\action'} \Policy(\action'|\state') \Qfun{\EstMDP\cup\reward}{\Policy}{\timestep+1}(\state',\action') \\
&- \sum_{\state'} \TransitionModel(\state'|\state,\action) \sum_{\action'} \Policy(\action'|\state') \Qfun{\MDP\cup\reward}{\Policy}{\timestep+1}(\state',\action')
\pm \sum_{\state'} \EstTransitionModel(\state'|\state,\action) \sum_{\action'} \Policy(\action'|\state') \Qfun{\MDP\cup\reward}{\Policy}{\timestep+1}(\state',\action')
\Bigr| \\
&\leq \Bigl| \sum_{\state'} \bigl( \EstTransitionModel(\state'|\state,\action) - \TransitionModel(\state'|\state,\action) \bigr) \sum_{\action'} \Policy(\action'|\state') \Qfun{\MDP\cup\reward}{\Policy}{\timestep+1}(\state',\action') \Bigr| \\
&+ \sum_{\state'} \EstTransitionModel(\state'|\state,\action) \sum_{\action'} \Policy(\action'|\state') \Bigl| \Qfun{\EstMDP\cup\reward}{\Policy}{\timestep+1}(\state,\action) - \Qfun{\MDP\cup\reward}{\Policy}{\timestep+1}(\state,\action) \Bigr| \\
&\leq \rewardci_\episode^\timestep(\state,\action) + \sum_{\state'} \EstTransitionModel(\state'|\state,\action) \sum_{\action'} \Policy(\action'|\state') \Bigl| \Qfun{\EstMDP\cup\reward}{\Policy}{\timestep+1}(\state,\action) - \Qfun{\MDP\cup\reward}{\Policy}{\timestep+1}(\state,\action) \Bigr|
\end{align*}

For $\timestep=\Horizon$ the result holds trivially. Now assuming it holds for $\timestep+1$, we consider step $\timestep$:
\begin{align*}
&| \Qfun{\EstMDP\cup\reward}{\Policy}{\timestep}(\state,\action) - \Qfun{\MDP\cup\reward}{\Policy}{\timestep}(\state,\action) |
\leq \rewardci_\episode^\timestep(\state,\action) + \sum_{\state'} \EstTransitionModel(\state'|\state,\action) \sum_{\action'} \Policy(\action'|\state') \Bigl| \Qfun{\EstMDP\cup\reward}{\Policy}{\timestep+1}(\state,\action) - \Qfun{\MDP\cup\reward}{\Policy}{\timestep+1}(\state,\action) \Bigr| \\
&\leq \rewardci_\episode^\timestep(\state,\action) + \sum_{\state'} \EstTransitionModel(\state'|\state,\action) \max_{\action'} \Bigl| \Qfun{\EstMDP\cup\reward}{\Policy}{\timestep+1}(\state,\action) - \Qfun{\MDP\cup\reward}{\Policy}{\timestep+1}(\state,\action) \Bigr| \\
&\leq \rewardci_\episode^\timestep(\state,\action) + \sum_{\state'} \EstTransitionModel(\state'|\state,\action) \max_{\action'} E_\episode^{\timestep+1}(\state', \action')
=  E_\episode^{\timestep}(\state, \action)
\end{align*}
\end{proof}

The next lemma shows that the error upper bound can also upper bound the error in estimating the reward function, which is due to estimating the transition model and the expert policy.

\begin{lemma}\label{lemma:ucrl_error_estreward}
Under the good event $\GoodEvent$, for all reward function $\reward$, all policies $\Policy$, and all $\state,\action \in \StateSpace \times \ActionSpace$:
\[
| \Qfun{\EstMDP\cup\hat{\reward}}{\Policy}{\timestep}(\state, \action) - \Qfun{\EstMDP\cup\reward}{\Policy}{\timestep}(\state, \action) | \leq E_\episode^\timestep(\state, \action)
\]
\end{lemma}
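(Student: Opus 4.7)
The plan is to mirror the proof of \Cref{lemma:ucrl_error_estmdp}, except that here both Q-functions live in the same \MDPnoR $\EstMDP$ and differ only in their reward. This means the transition-mismatch term drops out of the Bellman recursion, and the argument reduces to a pointwise reward bound combined with an induction on $\timestep$. I read the statement with the implicit pairing from \Cref{thm:error_propagation}, i.e., $\reward\in\ExactFeasibleSet$ and $\hat{\reward}\in\RecoveredFeasibleSet$ chosen so that \Cref{cor:rewarci} applies.

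First I would expand the difference through the Bellman equation under $\EstMDP$:
\begin{align*}
\Qfun{\EstMDP\cup\hat{\reward}}{\Policy}{\timestep}(\state,\action) - \Qfun{\EstMDP\cup\reward}{\Policy}{\timestep}(\state,\action)
&= \hat{\reward}_\timestep(\state,\action) - \reward_\timestep(\state,\action) \\
&\quad + \sum_{\state',\action'} \EstTransitionModel(\state'|\state,\action)\, \Policy_{\timestep+1}(\action'|\state')\, \bigl[ \Qfun{\EstMDP\cup\hat{\reward}}{\Policy}{\timestep+1}(\state',\action') - \Qfun{\EstMDP\cup\reward}{\Policy}{\timestep+1}(\state',\action') \bigr].
\end{align*}
Taking absolute values and invoking \Cref{cor:rewarci} under the good event $\GoodEvent$ bounds the instantaneous reward gap by $\rewardci_\episode^\timestep(\state,\action)$.

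Next I would do a backwards induction on $\timestep$. The base case $\timestep = \Horizon$ is trivial: both Q-functions vanish and $E_\episode^\Horizon \equiv 0$. For the inductive step, I upper bound the convex combination $\sum_{\action'}\Policy_{\timestep+1}(\action'|\state')|\cdot|$ by the pointwise maximum $\max_{\action'}|\cdot|$, then apply the inductive hypothesis to replace the bracketed Q-difference by $E_\episode^{\timestep+1}(\state',\action')$. This yields
\[
|\Qfun{\EstMDP\cup\hat{\reward}}{\Policy}{\timestep}(\state,\action) - \Qfun{\EstMDP\cup\reward}{\Policy}{\timestep}(\state,\action)| \leq \rewardci_\episode^\timestep(\state,\action) + \sum_{\state'}\EstTransitionModel(\state'|\state,\action)\max_{\action'} E_\episode^{\timestep+1}(\state',\action'),
\]
which is the non-trivial side of the $\min$ in \Cref{def:ucrl_error}. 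The trivial side $(\Horizon-\timestep)\Rmax$ is immediate since both Q-functions take values in $[0,(\Horizon-\timestep)\Rmax]$, so taking the minimum of the two matches $E_\episode^\timestep(\state,\action)$ exactly.

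The argument is essentially isomorphic to \Cref{lemma:ucrl_error_estmdp}, with the transition-estimation error replaced by the reward-estimation error at the leading term. The only subtlety, rather than a real obstacle, is the pairing of $\reward$ and $\hat{\reward}$: one has to ensure \Cref{cor:rewarci} delivers the pointwise bound for the specific $\hat{\reward}$ that appears in the statement, namely the one constructed from $\reward$ in the proof of \Cref{thm:error_propagation}. Once that pairing is in place, the recursion propagates through verbatim.
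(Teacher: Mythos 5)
Your proposal is correct and follows essentially the same route as the paper's proof: backwards induction on $\timestep$, Bellman expansion in $\EstMDP$, bounding the instantaneous reward gap via \Cref{cor:rewarci} under $\GoodEvent$, and replacing the policy-weighted average by a maximum over actions before invoking the inductive hypothesis. Your explicit handling of the $(\Horizon-\timestep)\Rmax$ branch of the $\min$ and your remark on the pairing of $\reward$ and $\hat{\reward}$ are slightly more careful than the paper's write-up, but the argument is the same.
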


\begin{proof}
For $\timestep=\Horizon$ the result holds trivially. Now assuming it holds for $\timestep+1$, we consider step $\timestep$:
\begin{align*}
&| \Qfun{\EstMDP\cup\hat{\reward}}{\Policy}{\timestep}(\state, \action) - \Qfun{\EstMDP\cup\reward}{\Policy}{\timestep}(\state, \action) | \\
\leq &| \hat{\reward}(\state,\action) - \reward(\state,\action) | + \sum_{\state'} \EstTransitionModel(\state'|\state,\action) \sum_{\action'} \Policy(\action'|\state') | \Qfun{\EstMDP\cup\hat{\reward}}{\Policy}{\timestep+1}(\state', \action') - \Qfun{\EstMDP\cup\reward}{\Policy}{\timestep+1}(\state', \action') | \\
\leq &| \hat{\reward}(\state,\action) - \reward(\state,\action) | + \sum_{\state'} \EstTransitionModel(\state'|\state,\action) \max_{\action'} | \Qfun{\EstMDP\cup\hat{\reward}}{\Policy}{\timestep+1}(\state', \action') - \Qfun{\EstMDP\cup\reward}{\Policy}{\timestep+1}(\state', \action') | \\
\leq &| \hat{\reward}(\state,\action) - \reward(\state,\action) | + \sum_{\state'} \EstTransitionModel(\state'|\state,\action) \max_{\action'} E_\episode^{\timestep+1}(\state', \action')
= E_\episode^{\timestep}(\state, \action)
\end{align*}
\end{proof}

We can now combine the previous two lemmas to show that $E$ is indeed an upper bound on the error we want to reduce. This implies correctness of \AlgNameShort Greedy, which the following lemma formalizes.

\begin{lemma}[Correctness of \AlgNameShort Greedy]\label{aceirl_correctness}
If \AlgNameShort Greedy stops in episode $\episode$, after sampling $n$ samples, i.e., $E_\episode^0(\state_0, \Policy_{\episode+1}(\state_0)) \leq \frac{\epsilon}{4}$, then it fulfills \Cref{def:correct}.
\end{lemma}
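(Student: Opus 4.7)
My plan is to work under the good event $\GoodEvent$ of \Cref{lemma:good_event}, which holds with probability at least $1-\delta$, and verify the two inequalities of \Cref{def:correct} separately. Since the two conditions are symmetric under swapping the roles of $(\MDP,\ExpertPolicy)$ and $(\EstMDP,\hat{\Policy}^E)$, I will spell out only the first and then indicate the modifications needed for the second. Note that in \AlgNameShort Greedy, $\Policy_{\episode+1}$ is greedy with respect to $E_\episode^0$ at $\state_0$, so the stated hypothesis $E_\episode^0(\state_0,\Policy_{\episode+1}(\state_0))\le\epsilon/4$ is equivalent to $\max_\action E_\episode^0(\state_0,\action)\le\epsilon/4$, which is what I will actually use.

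For the first condition, the plan is as follows. Fix any $\reward\in\ExactFeasibleSet$. By \Cref{thm:error_propagation} combined with \Cref{cor:rewarci}, there exists $\hat{\reward}\in\RecoveredFeasibleSet$ such that $|\reward_\timestep(\state,\action)-\hat{\reward}_\timestep(\state,\action)|\le\rewardci_\episode^\timestep(\state,\action)$ pointwise; I will show that this choice of $\hat{\reward}$ witnesses the infimum. Pick any $\Policy^*\in\Pi^*_{\MDP\cup\reward}$ and any $\hat{\Policy}^*\in\Pi^*_{\EstMDP\cup\hat{\reward}}$. Optimality of $\Policy^*$ in $\MDP\cup\reward$ lets me drop the absolute value, so the quantity to control is simply $\Qfun{\MDP\cup\reward}{\Policy^*}{0}(\state_0,\action)-\Qfun{\MDP\cup\reward}{\hat{\Policy}^*}{0}(\state_0,\action)$. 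The key step is a five-term telescoping decomposition inserting $\Qfun{\EstMDP\cup\reward}{\Policy^*}{0}$, $\Qfun{\EstMDP\cup\hat{\reward}}{\Policy^*}{0}$, $\Qfun{\EstMDP\cup\hat{\reward}}{\hat{\Policy}^*}{0}$, and $\Qfun{\EstMDP\cup\reward}{\hat{\Policy}^*}{0}$ between the two Q-values of interest. The two change-of-dynamics differences (same policy and reward, $\TransitionModel$ vs.\ $\EstTransitionModel$) are each bounded by $E_\episode^0(\state_0,\action)$ via \Cref{lemma:ucrl_error_estmdp}; the two change-of-reward differences (same policy and dynamics $\EstTransitionModel$, $\reward$ vs.\ $\hat{\reward}$) are each bounded by $E_\episode^0(\state_0,\action)$ via \Cref{lemma:ucrl_error_estreward}; and the middle term $\Qfun{\EstMDP\cup\hat{\reward}}{\Policy^*}{0}(\state_0,\action)-\Qfun{\EstMDP\cup\hat{\reward}}{\hat{\Policy}^*}{0}(\state_0,\action)$ is nonpositive by optimality of $\hat{\Policy}^*$ in $\EstMDP\cup\hat{\reward}$. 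Summing yields $4 E_\episode^0(\state_0,\action)\le\epsilon$ uniformly in $\action$, which gives the $\max_\action$ bound required by the first line of \Cref{def:correct}.

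For the second condition, I would run the same argument in the opposite direction: given $\hat{\reward}\in\RecoveredFeasibleSet$, use the symmetric version of \Cref{thm:error_propagation} to produce $\reward\in\ExactFeasibleSet$ with $|\reward-\hat{\reward}|\le\rewardci_\episode$, then apply the same five-term telescoping, now using optimality of $\hat{\Policy}^*$ in $\EstMDP\cup\hat{\reward}$ to drop the outer absolute value and optimality of $\Policy^*$ in $\MDP\cup\reward$ to make the middle term nonpositive. The only real difficulty I anticipate is bookkeeping: matching each of the five pieces to the correct one of \Cref{lemma:ucrl_error_estmdp} or \Cref{lemma:ucrl_error_estreward}, and using the right optimality statement in each direction to control both the sign of the middle term and the outer absolute value. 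Once this is organized, the bound mechanically collapses to $4 E_\episode^0(\state_0,\action)\le\epsilon$, and the lemma's hypothesis closes the argument.
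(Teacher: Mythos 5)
Your proof is correct and follows essentially the same route as the paper: the paper first peels off the two change-of-dynamics terms via \Cref{lemma:ucrl_error_estmdp} and then bounds the remaining $|\Qfun{\EstMDP\cup\reward}{\Policy^*}{\timestep}(\state,\action)-\Qfun{\EstMDP\cup\reward}{\hat{\Policy}^*}{\timestep}(\state,\action)|$ by $2E_\episode^\timestep(\state,\action)$ using \Cref{lemma:ucrl_error_estreward} together with the optimality of $\hat{\Policy}^*$ in $\EstMDP\cup\hat{\reward}$, so your five-term telescope is exactly this two-stage decomposition composed into one chain (after dropping the outer absolute value via optimality of $\Policy^*$), and both arguments collapse to $e_\episode^{0}(\state_0,\action)\le 4E_\episode^0(\state_0,\action)\le\epsilon$. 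One small bookkeeping caveat: in the second condition of \Cref{def:correct} the outer quantity is still $\Qfun{\MDP\cup\reward}{\Policy^*}{0}(\state_0,\action)-\Qfun{\MDP\cup\reward}{\hat{\Policy}^*}{0}(\state_0,\action)$, whose nonnegativity again follows from optimality of $\Policy^*$ in $\MDP\cup\reward$ (not of $\hat{\Policy}^*$ in $\EstMDP\cup\hat{\reward}$ as you suggest); the only thing that genuinely changes is which direction of \Cref{thm:error_propagation} you invoke to pair the given reward with a nearby one, so your first-condition bound carries over verbatim.
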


\begin{proof}
Let us define the error
\[
e_\episode^{\timestep}(\state, \action) \coloneqq | \Qfun{\MDP\cup\reward}{\Policy^*}{\timestep}(\state, \action) - \Qfun{\MDP\cup\reward}{\hat{\Policy}^*}{\timestep}(\state, \action)  |
\]
where $\Policy^*$ is the true optimal policy in $\MDP\cup\reward$, and $\hat{\Policy}^*$ is the optimal policy in $\EstMDP\cup\hat{\reward}$, i.e., in the estimated MDP using the inferred reward function. Then,
\begin{align*}
&e_\episode^{\timestep}(\state, \action) = | \Qfun{\MDP\cup\reward}{\Policy^*}{\timestep}(\state, \action) - \Qfun{\MDP\cup\reward}{\hat{\Policy}^*}{\timestep}(\state, \action) \pm \Qfun{\EstMDP\cup\reward}{\Policy^*}{\timestep}(\state, \action) \pm \Qfun{\EstMDP\cup\reward}{\hat{\Policy}^*}{\timestep}(\state, \action) | \\
&\leq \underbrace{| \Qfun{\MDP\cup\reward}{\Policy^*}{\timestep}(\state, \action) - \Qfun{\EstMDP\cup\reward}{\Policy^*}{\timestep}(\state, \action) |}_{\leq E_\episode^{\timestep}(\state, \action)} + | \Qfun{\EstMDP\cup\reward}{\Policy^*}{\timestep}(\state, \action) - \Qfun{\EstMDP\cup\reward}{\hat{\Policy}^*}{\timestep}(\state, \action) | + \underbrace{| \Qfun{\EstMDP\cup\reward}{\hat{\Policy}^*}{\timestep}(\state, \action) - \Qfun{\MDP\cup\reward}{\hat{\Policy}^*}{\timestep}(\state, \action) |}_{\leq E_\episode^{\timestep}(\state, \action)} \\
&\leq 2 E_\episode^{\timestep}(\state, \action) + | \Qfun{\EstMDP\cup\reward}{\Policy^*}{\timestep}(\state, \action) - \Qfun{\EstMDP\cup\reward}{\hat{\Policy}^*}{\timestep}(\state, \action) |
\end{align*}
where, we used \Cref{lemma:ucrl_error_estmdp}.

Let us consider the remaining term $| \Qfun{\EstMDP\cup\reward}{\Policy^*}{\timestep}(\state, \action) - \Qfun{\EstMDP\cup\reward}{\hat{\Policy}^*}{\timestep}(\state, \action) |$ in two steps. First, we have:
\begin{align*}
\Qfun{\EstMDP\cup\reward}{\Policy^*}{\timestep}(\state, \action) - \Qfun{\EstMDP\cup\reward}{\hat{\Policy}^*}{\timestep}(\state, \action)
\leq& \underbrace{\Qfun{\EstMDP\cup\reward}{\Policy^*}{\timestep}(\state, \action) - \Qfun{\EstMDP\cup\hat{\reward}}{\Policy^*}{\timestep}(\state, \action)}_{\leq E_\episode^{\timestep}(\state, \action)} + \underbrace{\Qfun{\EstMDP\cup\hat{\reward}}{\Policy^*}{\timestep}(\state, \action) - \Qfun{\EstMDP\cup\hat{\reward}}{\hat{\Policy}^*}{\timestep}(\state, \action)}_{\leq 0} + \\ &+ \underbrace{\Qfun{\EstMDP\cup\hat{\reward}}{\hat{\Policy}^*}{\timestep}(\state, \action) - \Qfun{\EstMDP\cup\reward}{\hat{\Policy}^*}{\timestep}(\state, \action)}_{\leq E_\episode^{\timestep}(\state, \action)}
\leq 2 E_\episode^{\timestep}(\state, \action),
\end{align*}
where we used \Cref{lemma:ucrl_error_estreward} and the fact that $\hat{\Policy}^*$ is optimal in the MDP $\EstMDP\cup\hat{\reward}$. Second, we have:
\begin{align*}
\Qfun{\EstMDP\cup\reward}{\hat{\Policy}^*}{\timestep}(\state, \action) - \Qfun{\EstMDP\cup\reward}{\Policy^*}{\timestep}(\state, \action)
\leq& \underbrace{\Qfun{\EstMDP\cup\reward}{\hat{\Policy}^*}{\timestep}(\state, \action) - \Qfun{\MDP\cup\reward}{\hat{\Policy}^*}{\timestep}(\state, \action)}_{\leq E_\episode^{\timestep}(\state, \action)} + \underbrace{\Qfun{\MDP\cup\reward}{\hat{\Policy}^*}{\timestep}(\state, \action) - \Qfun{\MDP\cup\reward}{\Policy^*}{\timestep}(\state, \action)}_{\leq 0} + \\
&+ \underbrace{\Qfun{\MDP\cup\reward}{\Policy^*}{\timestep}(\state, \action) - \Qfun{\EstMDP\cup\reward}{\Policy^*}{\timestep}(\state, \action)}_{\leq E_\episode^{\timestep}(\state, \action)}
\leq 2 E_\episode^{\timestep}(\state, \action),
\end{align*}
where we used \Cref{lemma:ucrl_error_estmdp} and the fact that $\Policy^*$ is optimal in the MDP $\MDP\cup\reward$. Overall, we find that
\[
| \Qfun{\EstMDP\cup\reward}{\Policy^*}{\timestep}(\state, \action) - \Qfun{\EstMDP\cup\reward}{\hat{\Policy}^*}{\timestep}(\state, \action) | \leq 2 E_\episode^{\timestep}(\state, \action),
\]
and consequently,
\[
e_\episode^{\timestep}(\state, \action) \leq 4 E_\episode^{\timestep}(\state, \action).
\]

Hence, if $E_\episode^0(\state_0, \Policy_{\episode+1}(\state_0)) \leq \frac{\epsilon}{4}$, we have for all $\action\in\ActionSpace$:
\[
e_\episode^{0}(\state_0, \action) \leq \epsilon
\]
which implies correctness according to \Cref{def:correct}.
\end{proof}

Next, we will analyze the sample complexity of \AlgNameShort Greedy. Let us first define pseudo-counts that will be crucial to deal with the uncertainty of the transition dynamics in our analysis. This is similar to the analysis of UCRL for reward-free exploration by \citet{kaufmann2021adaptive}.

\begin{definition}
We define the \emph{pseudo-counts} of visiting a specific state action pair at timestep $\timestep$ within the first $\episode$ iterations as
\[
\bar{n}^\timestep_\episode(\state, \action) \coloneqq \sum_{i=1}^\episode \Occupancy{\MDP}{\Policy_i}{0}{\timestep}(\state, \action | \state_0),
\]
where $\Policy_i$ is the exploration policy in episode $i$.
\end{definition}

The following lemma allows us to introduce the pseudo-counts when considering the contraction of the reward confidence intervals.

\begin{lemma}\label{lemma:psuedo_counts_ci}
With probability at least $1-\frac{\delta}{2}$ for all $\state, \action, \timestep, \episode \in \StateSpace \times \ActionSpace \times [\Horizon] \times \mathbb{N}^+$, we have:
\[
\min \left(\frac{2 \ell^\timestep_\episode(\state, \action)}{{n^\timestep_\episode}(\state,\action)}, 1\right) \leq \frac{8 \bar{\ell}^\timestep_\episode(\state, \action)}{\max\left( \bar{n}^\timestep_\episode(\state, \action) , 1 \right)}
\]
where $\bar{\ell}^\timestep_k(\state, \action) = \log\left( 24 \StateSpaceSize \ActionSpaceSize \Horizon ({\bar{n}^\timestep_k}(\state, \action))^2 / \delta \right)$.
\end{lemma}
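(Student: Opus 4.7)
The plan is to exploit the martingale structure of the true counts and apply a Bernstein-type concentration to compare them with their compensator, the pseudo-counts. Note that $n^\timestep_\episode(\state,\action) = \sum_{i=1}^\episode Z_i$ where $Z_i = \indicator{\text{visit }(\state,\action)\text{ at step }\timestep\text{ in episode }i}$ is, conditioned on the adaptive history $\mathcal{F}_{i-1}$, a Bernoulli with mean exactly $\Occupancy{\MDP}{\Policy_i}{0}{\timestep}(\state,\action|\state_0)$. Consequently $\bar{n}^\timestep_\episode(\state,\action)$ is the $\mathcal{F}_{i-1}$-predictable compensator of $n^\timestep_\episode(\state,\action)$, so $M_\episode := \bar{n}^\timestep_\episode - n^\timestep_\episode$ is a martingale with bounded differences and predictable quadratic variation at most $\bar{n}^\timestep_\episode$.

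Next, I would apply an anytime Freedman/Bernstein inequality to $M_\episode$, combined with a union bound over the finite set $\StateSpace\times\ActionSpace\times[\Horizon]$ and a standard geometric peeling argument over $\episode \in \mathbb{N}^+$, to show that with probability at least $1-\delta/2$, simultaneously for all $(\state,\action,\timestep,\episode)$:
\begin{equation*}
\bar{n}^\timestep_\episode(\state,\action) \;\leq\; 2\,n^\timestep_\episode(\state,\action) + c\,\bar{\ell}^\timestep_\episode(\state,\action),
\end{equation*}
for an absolute constant $c$. This is the ``pseudo-count vs.\ count'' inequality used in the reward-free exploration literature (e.g., Kaufmann et al.\ 2021). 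The stated lemma then follows by case analysis. If $n^\timestep_\episode(\state,\action)=0$, the left-hand side equals $1$ (by the $\min$), and the above bound forces $\bar{n}^\timestep_\episode(\state,\action) \leq c\,\bar{\ell}^\timestep_\episode(\state,\action)$, so $\max(\bar{n}^\timestep_\episode,1) \leq c\,\bar{\ell}^\timestep_\episode$ (using that $\bar{\ell}^\timestep_\episode \geq \log 24 > 3$), and the right-hand side exceeds $1$ once $c \leq 8$. If $n^\timestep_\episode \geq 1$, then by the same inequality $\max(\bar{n}^\timestep_\episode,1) \leq (2+c\bar{\ell}^\timestep_\episode)\,n^\timestep_\episode$, i.e.\ $1/n^\timestep_\episode \leq (2+c\bar{\ell}^\timestep_\episode)/\max(\bar{n}^\timestep_\episode,1)$, and multiplying both sides by $2\ell^\timestep_\episode$ gives the desired bound after absorbing constants into the factor $8$.

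A minor subtlety is that $\ell^\timestep_\episode$ and $\bar{\ell}^\timestep_\episode$ are defined with different arguments inside the logarithm ($n$ vs.\ $\bar{n}$). This is harmless: on the same high-probability event, the bound $\bar{n}^\timestep_\episode \leq 2 n^\timestep_\episode + c\bar{\ell}^\timestep_\episode$ (together with its reverse $n^\timestep_\episode \leq \bar{n}^\timestep_\episode + O(\sqrt{\bar{n}\bar{\ell}}+\bar{\ell})$ from the other tail of Bernstein) implies $\ell^\timestep_\episode$ and $\bar{\ell}^\timestep_\episode$ differ only by an absolute additive constant, which can be absorbed into the constant $8$.

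The main obstacle is making the peeling and union bound tight enough that the logarithmic factor in the concentration is precisely $\bar{\ell}^\timestep_\episode$, defined self-referentially through $\bar{n}^\timestep_\episode$ inside the log. This self-reference is handled by the standard trick of inverting an inequality of the form $x \leq a + b\log(cx^2)$ via a fixed-point argument, which yields the $\bar{\ell}^\timestep_\episode$ form with small universal constants; the final step is then just checking that these constants fit under the factor $8$ prescribed in the statement.
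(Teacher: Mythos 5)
Your first step --- relating the true counts to the pseudo-counts via a martingale/Freedman argument to obtain $n^\timestep_\episode(\state,\action) \geq \tfrac12 \bar{n}^\timestep_\episode(\state,\action) - O(\log(\StateSpaceSize\ActionSpaceSize\Horizon/\delta))$ --- is exactly the route the paper takes, except that the paper simply cites Lemma~10 of \citet{kaufmann2021adaptive} for this concentration (with a \emph{fixed} additive slack $\beta_{\mathrm{cnt}}(\delta)=\log(2\StateSpaceSize\ActionSpaceSize\Horizon/\delta)$, not a self-referential $c\,\bar{\ell}$) rather than re-deriving it. The problem is in your case analysis, which has a genuine gap in the $n^\timestep_\episode\geq 1$ branch. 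Writing $n,\bar n,\ell,\bar\ell$ for brevity: from $\max(\bar n,1)\leq(2+c\bar\ell)\,n$ you conclude $\frac{2\ell}{n}\leq\frac{2\ell\,(2+c\bar\ell)}{\max(\bar n,1)}$ and claim the factor $2\ell(2+c\bar\ell)$ can be ``absorbed into the factor $8$.'' It cannot: this is a \emph{product} of two logarithmic quantities, $O(\ell\cdot\bar\ell)$, whereas the lemma requires a single factor $8\bar\ell$. Since $\ell$ grows without bound in $n$, no universal constant closes this gap; your argument proves only $\min(2\ell/n,1)\lesssim \ell\bar\ell/\max(\bar n,1)$, which is weaker by a full log factor.

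The fix is to split cases on the size of the additive slack relative to $\bar n$, not on whether $n$ is zero. Concretely, with $n\geq\tfrac12\bar n-\beta_{\mathrm{cnt}}(\delta)$ in hand: if $\beta_{\mathrm{cnt}}(\delta)\leq\tfrac14\bar n$, then $n\geq\tfrac14\bar n$, and one bounds
\begin{equation*}
\frac{2\ell}{\max(n,1)}\;\leq\;\frac{2\log\bigl(24\StateSpaceSize\ActionSpaceSize\Horizon(\bar n/4)^2/\delta\bigr)}{\bar n/4}\;\leq\;\frac{8\bar\ell}{\max(\bar n,1)}
\end{equation*}
using that $x\mapsto\log(24\StateSpaceSize\ActionSpaceSize\Horizon x^2/\delta)/x$ is non-increasing and $x\mapsto\log(24\StateSpaceSize\ActionSpaceSize\Horizon x^2/\delta)$ is non-decreasing --- this monotonicity also disposes of your ``minor subtlety'' about $\ell$ versus $\bar\ell$ without invoking the reverse Bernstein tail. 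If instead $\beta_{\mathrm{cnt}}(\delta)>\tfrac14\bar n$, then the left-hand side is at most $1<4\beta_{\mathrm{cnt}}(\delta)/\max(\bar n,1)\leq 4\bar\ell/\max(\bar n,1)$, using $\bar\ell\geq\beta_{\mathrm{cnt}}(\delta)$. Your concentration step and your $n=0$ observation are fine; only the dichotomy needs to be replaced.
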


\begin{proof}
This result adapts Lemma 7 by \citet{kaufmann2021adaptive} to our setting.

By Lemma 10 in \citet{kaufmann2021adaptive}, we have with probability at least $1-\frac{\delta}{2}$:
\[
n^\timestep_\episode(\state, \action) \geq \frac12 \bar{n}^\timestep_\episode(\state, \action) - \beta_{\mathrm{cnt}}(\delta),
\]
where $\beta_{\mathrm{cnt}}(\delta) = \log(2\StateSpaceSize\ActionSpaceSize\Horizon/\delta)$.

We distinguish two cases. First let $\beta_{\mathrm{cnt}}(\delta) \leq \frac14 \bar{n}^\timestep_\episode(\state, \action)$. Then $n^\timestep_\episode(\state, \action) \geq \frac14 \bar{n}^\timestep_\episode(\state, \action)$, and
\begin{align*}
&\min \left(\frac{2 \ell^\timestep_\episode(\state, \action)}{{n^\timestep_\episode}(\state,\action)}, 1\right) 
\leq \frac{2 \ell^\timestep_\episode(\state, \action)}{\max({n^\timestep_\episode}(\state,\action), 1)}
= \frac{2 \log(24 \StateSpaceSize \ActionSpaceSize \Horizon ({n^\timestep_\episode}(\state, \action))^2 / \delta)}{\max({n^\timestep_\episode}(\state,\action), 1)} \\
&\leq \frac{2 \log(24 \StateSpaceSize \ActionSpaceSize \Horizon ({\bar{n}^\timestep_\episode}(\state, \action)/4)^2 / \delta)}{({\bar{n}^\timestep_\episode}(\state,\action) / 4)}
\leq \frac{8 \bar{\ell}^\timestep_\episode(\state, \action)}{\max({\bar{n}^\timestep_\episode}(\state,\action), 1)}
\end{align*}
where we use that $\log(24 \StateSpaceSize \ActionSpaceSize \Horizon x^2 / \delta) / x$ is non-increasing for $x > 1$, and $\log(24 \StateSpaceSize \ActionSpaceSize \Horizon x^2 / \delta)$ is non-decreasing and $\beta_{\mathrm{cnt}}(\delta) \geq 1$.

Now consider let $\beta_{\mathrm{cnt}}(\delta) > \frac14 \bar{n}^\timestep_\episode(\state, \action)$. Then,
\begin{align*}
&\min \left(\frac{2 \ell^\timestep_\episode(\state, \action)}{{n^\timestep_\episode}(\state,\action)}, 1\right) 
\leq 1 < 4 \frac{\beta_{\mathrm{cnt}}(\delta)}{\max({\bar{n}^\timestep_\episode}(\state,\action), 1)} \leq \frac{4 \bar{\ell}^\timestep_\episode(\state, \action)}{\max({\bar{n}^\timestep_\episode}(\state,\action), 1)}
\end{align*}
where we used that $\ell^\timestep_\episode(\state, \action) = \log\left( 24 \StateSpaceSize \ActionSpaceSize \Horizon ({n^\timestep_\episode}(\state, \action))^2 / \delta \right) = \beta_{\mathrm{cnt}}(\delta) + \log \left( 6 {n^\timestep_\episode}(\state, \action))^2 \right) \geq \beta_{\mathrm{cnt}}(\delta)$.
\end{proof}

The final lemma we need shows relates the error upper bound which is defined using our estimated transition model to a similar quantity defined using the (unknown) real transitions. 

\begin{lemma}\label{lemma:ucrl_error_real_transitions}
Under the good event $\GoodEvent$, we have for any $\state,\action,\timestep:$
\[
E_\episode^{\timestep}(\state, \action)
\leq 2 \rewardci_\episode^\timestep(\state,\action) + \sum_{\state'} \TransitionModel(\state'|\state,\action) \max_{\action'} E_\episode^{\timestep+1}(\state', \action')
\]
where $\TransitionModel$ is the true transition model that we do not know.
\end{lemma}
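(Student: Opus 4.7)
The plan is to start from the recursive definition of $E_\episode^\timestep$ and replace the expectation under the estimated transition model $\EstTransitionModel$ with the true $\TransitionModel$, paying for the swap with an extra $\rewardci_\episode^\timestep$. First, I would note that regardless of whether the outer $\min$ in \Cref{def:ucrl_error} is active, we always have
\[
E_\episode^\timestep(\state,\action) \le \rewardci_\episode^\timestep(\state,\action) + \sum_{\state'} \EstTransitionModel(\state'|\state,\action) \max_{\action'} E_\episode^{\timestep+1}(\state',\action'),
\]
since the $\min$ of two quantities is no larger than the second one. Moreover, if the clipping at $(\Horizon-\timestep)\Rmax$ is active then the inequality of the lemma already follows trivially (since then the RHS of the claim dominates $E_\episode^\timestep = (\Horizon-\timestep)\Rmax \le 2\rewardci_\episode^\timestep + \sum P \max E$ via the bound $\rewardci_\episode^\timestep \le (\Horizon-\timestep)\Rmax$ combined with the preceding display), so the interesting case is when the second argument of the $\min$ attains the minimum.

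Next, I would add and subtract $\sum_{\state'} \TransitionModel(\state'|\state,\action) \max_{\action'} E_\episode^{\timestep+1}(\state',\action')$, so that the task reduces to bounding
\[
\Delta(\state,\action) \;:=\; \sum_{\state'} \bigl(\EstTransitionModel - \TransitionModel\bigr)(\state'|\state,\action)\, V(\state'), \qquad V(\state') := \max_{\action'} E_\episode^{\timestep+1}(\state',\action'),
\]
by $\rewardci_\episode^\timestep(\state,\action)$. The function $V$ is non-negative and satisfies $\|V\|_\infty \le (\Horizon-\timestep-1)\Rmax \le (\Horizon-\timestep)\Rmax$.

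I would split on the two regimes of the $\min$ inside $\rewardci_\episode^\timestep$. If $2\sqrt{2\ell_\episode^\timestep(\state,\action)/n_\episode^\timestep(\state,\action)} \le 1$, then $\rewardci_\episode^\timestep(\state,\action) = 2(\Horizon-\timestep)\Rmax\sqrt{2\ell_\episode^\timestep(\state,\action)/n_\episode^\timestep(\state,\action)}$, and the good event $\GoodEvent$ (\Cref{lemma:good_event}, third inequality) gives $|\Delta(\state,\action)| \le (\Horizon-\timestep)\Rmax \sqrt{2\ell_\episode^\timestep(\state,\action)/n_\episode^\timestep(\state,\action)} = \tfrac{1}{2}\rewardci_\episode^\timestep(\state,\action)$. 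Otherwise the clipping of $\rewardci_\episode^\timestep$ is active, so $\rewardci_\episode^\timestep(\state,\action) = (\Horizon-\timestep)\Rmax$, and since $\TransitionModel(\cdot|\state,\action)$ and $\EstTransitionModel(\cdot|\state,\action)$ are probability distributions and $V \ge 0$ with $\|V\|_\infty \le (\Horizon-\timestep)\Rmax$, the trivial bound $|\Delta(\state,\action)| \le \|V\|_\infty \le \rewardci_\episode^\timestep(\state,\action)$ applies. In either case $|\Delta(\state,\action)| \le \rewardci_\episode^\timestep(\state,\action)$, which combined with the first display yields the claim.

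The one subtle point I would flag is that $V(\state')$ is itself data-dependent (through $\EstTransitionModel$ and $\rewardci$), so applying the Hoeffding bound of \Cref{lemma:good_event} to $V$ is not strictly a fixed-function application; however, this is handled by the same reasoning already used in \Cref{lemma:ucrl_error_estmdp} (where the good event is applied to the data-dependent quantity $\Qfun{\MDP\cup\reward}{\Policy}{\timestep+1}$), and the usual remedy is either a uniform discretization/covering argument or the martingale structure of the estimator. Apart from this measurability issue, the argument is a direct two-case comparison driven by the definition of $\rewardci_\episode^\timestep$.
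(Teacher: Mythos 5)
Your proof is correct and follows essentially the same route as the paper's: drop the $\min$, add and subtract the true transition kernel, and bound the resulting difference term by $\rewardci_\episode^\timestep(\state,\action)$ using the good event. You are in fact slightly more careful than the paper, which handles the two regimes of the $\min$ inside $\rewardci_\episode^\timestep$ implicitly and silently glosses over the data-dependence of $\max_{\action'}E_\episode^{\timestep+1}(\state',\action')$ that you rightly flag.
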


\begin{proof}
First note that $E_\episode^{\timestep}(\state, \action) \leq H$ by definition. Now, consider:
\begin{align*}
E_\episode^{\timestep}(\state, \action)
&\leq \rewardci_\episode^\timestep(\state,\action) + \sum_{\state'} \EstTransitionModel(\state'|\state,\action) \max_{\action'} E_\episode^{\timestep+1}(\state', \action') \\
&= \rewardci_\episode^\timestep(\state,\action) + \sum_{\state'} (\EstTransitionModel(\state'|\state,\action) - \TransitionModel(\state'|\state,\action) + \TransitionModel(\state'|\state,\action)) \max_{\action'} E_\episode^{\timestep+1}(\state', \action') \\
&= \rewardci_\episode^\timestep(\state,\action) + \sum_{\state'} \underbrace{(\EstTransitionModel(\state'|\state,\action) - \TransitionModel(\state'|\state,\action))\max_{\action'} E_\episode^{\timestep+1}(\state', \action')}_{\leq \rewardci_\episode^\timestep(\state,\action)} + \sum_{\state'} \TransitionModel(\state'|\state,\action)) \max_{\action'} E_\episode^{\timestep+1}(\state', \action') \\
&\leq 2 \rewardci_\episode^\timestep(\state,\action) + \sum_{\state'} \TransitionModel(\state'|\state,\action) \max_{\action'} E_\episode^{\timestep+1}(\state', \action') \\
\end{align*}
where we used the good event and the fact that $\rewardci_\episode^\timestep$ can only shrink over episodes.
\end{proof}

Finally, we can analyze the sample complexity of \AlgNameShort Greedy.

\begin{theorem}[\AlgNameShort Greedy Sample Complexity (problem independent)]
\label{thm:simple_sample_complexity}
\AlgNameShort Greedy terminates with an ($\epsilon$, $\delta$, $n$)-correct solution, with
\[
n \leq \tilde{\mathcal{O}}\left( \frac{\Horizon^5 \Rmax^2 \StateSpaceSize \ActionSpaceSize}{\epsilon^2} \right).
\]
\end{theorem}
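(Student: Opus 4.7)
Correctness is already granted by Lemma~\ref{aceirl_correctness}, so the task reduces to bounding the iteration count $\tau$ at which the stopping condition $\max_\action E_\tau^0(\state_0, \action) \leq \epsilon/4$ first holds. The approach is a standard ``optimism-plus-potential'' argument in the spirit of the reward-free UCRL analysis of \citet{kaufmann2021adaptive}, combined with the reward-error decomposition for IRL established in Theorem~\ref{thm:error_propagation}.

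The first step is to unroll the recursive definition of $E_\episode^\timestep$ along the greedy exploration policy $\Policy_{\episode+1}$, which at each state picks $\argmax_\action E_\episode^\timestep(\state, \action)$. Invoking Lemma~\ref{lemma:ucrl_error_real_transitions} at each layer trades the estimated model $\EstTransitionModel$ for the true model $\TransitionModel$ at the cost of doubling the per-step confidence width, and a backward induction on $\timestep$ yields
\begin{align*}
E_\episode^0(\state_0, \Policy_{\episode+1}(\state_0)) \;\leq\; 2 \sum_{\timestep=0}^{\Horizon-1} \sum_{\state,\action} \Occupancy{\MDP}{\Policy_{\episode+1}}{0}{\timestep}(\state, \action | \state_0)\, \rewardci_\episode^\timestep(\state, \action).
\end{align*}
Lemma~\ref{lemma:psuedo_counts_ci} then replaces the empirical counts inside $\rewardci_\episode^\timestep$ by the pseudo-counts $\bar{n}_\episode^\timestep(\state, \action) = \sum_{i \leq \episode}\Occupancy{\MDP}{\Policy_i}{0}{\timestep}(\state, \action|\state_0)$, giving $\rewardci_\episode^\timestep(\state, \action) \leq \tilde{\mathcal{O}}\bigl((\Horizon-\timestep)\Rmax / \sqrt{\max(\bar{n}_\episode^\timestep(\state, \action), 1)}\bigr)$.

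Next, I would sum across iterations $\episode = 0, \ldots, \tau-1$. For each fixed triple $(\timestep, \state, \action)$, the inner sum $\sum_\episode \Occupancy{\MDP}{\Policy_{\episode+1}}{0}{\timestep}(\state, \action | \state_0) / \sqrt{\bar{n}_\episode^\timestep(\state, \action)}$ telescopes via a pigeonhole argument (analogous to Lemma~B.8 of \citet{metelli2021provably}) to $\tilde{\mathcal{O}}\bigl(\sqrt{\bar{n}_\tau^\timestep(\state, \action)}\bigr)$. A Cauchy--Schwarz step over $(\state, \action)$, combined with $\sum_{\state, \action} \bar{n}_\tau^\timestep(\state, \action) = \tau$, gives $\sum_{\state, \action} \sqrt{\bar{n}_\tau^\timestep(\state, \action)} \leq \sqrt{\StateSpaceSize \ActionSpaceSize \tau}$. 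Multiplying by the prefactor $(\Horizon - \timestep)\Rmax$ and summing over $\timestep$ contributes a further $\Horizon^2 \Rmax$, so overall
\begin{align*}
\sum_{\episode=0}^{\tau-1} E_\episode^0(\state_0, \Policy_{\episode+1}(\state_0)) \;\leq\; \tilde{\mathcal{O}}\bigl(\Horizon^2 \Rmax \sqrt{\StateSpaceSize \ActionSpaceSize \tau}\bigr).
\end{align*}

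Finally, because the algorithm did not stop in any iteration $\episode < \tau$, every summand on the left exceeds $\epsilon/4$, so $\tau \epsilon / 4 \leq \tilde{\mathcal{O}}\bigl(\Horizon^2 \Rmax \sqrt{\StateSpaceSize \ActionSpaceSize \tau}\bigr)$. Rearranging yields $\tau \leq \tilde{\mathcal{O}}(\Horizon^4 \Rmax^2 \StateSpaceSize \ActionSpaceSize / \epsilon^2)$, and multiplying by the $\Horizon$ samples per episode (and absorbing the constant factor $N_E$) gives the stated $n \leq \tilde{\mathcal{O}}(\Horizon^5 \Rmax^2 \StateSpaceSize \ActionSpaceSize / \epsilon^2)$. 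The main obstacle is the pseudo-count pigeonhole step: one must carefully handle the off-by-one mismatch between the greedy policy $\Policy_{\episode+1}$ appearing in the upper bound on $E_\episode^0$ and the previously deployed policies $\Policy_1, \ldots, \Policy_\episode$ that define $\bar{n}_\episode$; however, since $\Policy_{\episode+1}$ is precisely the policy that will be deployed in the next iteration (and hence contributes to $\bar{n}_{\episode+1}$), the telescoping still recovers the standard $\sqrt{\bar{n}_\tau}$ rate up to logarithmic factors.
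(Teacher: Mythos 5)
Your proposal follows essentially the same route as the paper's own proof: reduce to bounding the stopping time via Lemma~\ref{aceirl_correctness}, unroll $E_\episode^0$ along the greedy policy using Lemma~\ref{lemma:ucrl_error_real_transitions} to get the occupancy-weighted sum of confidence widths, pass to pseudo-counts via Lemma~\ref{lemma:psuedo_counts_ci}, telescope over iterations with the standard $\sum_\episode (\bar{n}^{\episode+1}-\bar{n}^{\episode})/\sqrt{\bar{n}^{\episode}} \lesssim \sqrt{\bar{n}^{\tau}}$ bound (the paper cites Lemma~19 of Jaksch et al.\ where you cite Metelli et al.'s Lemma~B.8, but these play the identical role), apply Cauchy--Schwarz with $\sum_{\state,\action}\bar{n}_\tau^\timestep = \tau$, and rearrange. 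The argument and the resulting bound match the paper's.
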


\begin{proof}
\Cref{aceirl_correctness} shows that if \AlgNameShort Greedy terminates, then it returns a ($\epsilon$, $\delta$, $n$)-correct solution. So, we need to show that it terminates within $\tau$ iterations and bound $\tau$.

Let us consider the average error, defined by
\begin{align*}
q_\episode^\timestep &\coloneqq \sum_{\state,\action} \Occupancy{\MDP}{\Policy_{\episode+1}}{0}{\timestep}(\state, \action | \state_0) E_\episode^\timestep(\state, \action) \\
&\overset{(a)}{\leq} \sum_{\state,\action} \Occupancy{\MDP}{\Policy_{\episode+1}}{0}{\timestep}(\state, \action | \state_0) \bigl( 2 \rewardci_\episode^\timestep(\state,\action) + \sum_{\state'} \TransitionModel(\state'|\state,\action) \max_{\action'} E_\episode^{\timestep+1}(\state', \action') \bigr) \\
&= \sum_{\state,\action} \Occupancy{\MDP}{\Policy_{\episode+1}}{0}{\timestep}(\state, \action | \state_0) \bigl( 2 \rewardci_\episode^\timestep(\state,\action) + \sum_{\state'} \TransitionModel(\state'|\state,\action) \sum_{\action'} \Policy_{\episode+1}(\action'|\state') E_\episode^{\timestep+1}(\state', \action') \bigr) \\
&= 2 \sum_{\state,\action} \Occupancy{\MDP}{\Policy_{\episode+1}}{0}{\timestep}(\state, \action | \state_0) \rewardci_\episode^\timestep(\state,\action) + q_\episode^{\timestep+1}
\end{align*}
where we used \Cref{lemma:ucrl_error_real_transitions} in step (a). Unrolling the recursion, results in:
\[
q_\episode^\timestep \leq 2 \sum_{\timestep'=\timestep}^\Horizon \sum_{\state,\action} \Occupancy{\MDP}{\Policy_{\episode+1}}{0}{\timestep'}(\state, \action | \state_0) \rewardci_\episode^{\timestep'}(\state,\action)
\]

If the algorithm terminates at $\tau$, we have for each $\episode < \tau$, and $\state,\action,\timestep \in \StateSpace \times \ActionSpace \times [\Horizon]$:
$\epsilon < 4 E_\episode^0(\state_0, \Policy_{\episode+1}(\state_0))$. We have $q_\episode^0 = E_\episode^0(\state_0, \Policy_{\episode+1}(\state_0))$; therefore, as long we haven't stopped, we have $\epsilon \leq 4 q_\episode^0$. Writing out this inequality, yields:
\resizebox{\linewidth}{!}{
  \begin{minipage}{\linewidth}
\begin{align*}
\epsilon \leq 4 q_\episode^0 \leq 8 \sum_{\timestep=0}^\Horizon \sum_{\state,\action} \Occupancy{\MDP}{\Policy_{\episode+1}}{0}{\timestep}(\state, \action | \state_0) \rewardci_\episode^{\timestep}(\state,\action)
\leq 4 \Horizon \Rmax \sum_{\timestep=0}^\Horizon \sum_{\state,\action} \Occupancy{\MDP}{\Policy_{\episode+1}}{0}{\timestep}(\state, \action | \state_0) \sqrt{\frac{8 \log(12 \StateSpaceSize \ActionSpaceSize \Horizon ({n^\timestep_\episode}(\state,\action))^2/\delta)}{\max({n^\timestep_\episode}(\state,\action), 1)}}
\end{align*}
\end{minipage}}

Using \Cref{lemma:psuedo_counts_ci}, we can relate this to the pseudo-counts
\begin{align*}
\epsilon
&< 4 \Horizon \Rmax \sum_{\timestep=0}^\Horizon \sum_{\state,\action} \Occupancy{\MDP}{\Policy_{\episode+1}}{0}{\timestep}(\state, \action | \state_0) \sqrt{\frac{8 \log(12 \StateSpaceSize \ActionSpaceSize \Horizon ({\bar{n}^\timestep_\episode}(\state,\action))^2/\delta)}{\max({\bar{n}^\timestep_\episode}(\state,\action), 1)}} \\
&\leq 4 \Horizon \Rmax \sum_{\timestep=0}^\Horizon \sum_{\state,\action} \Occupancy{\MDP}{\Policy_{\episode+1}}{0}{\timestep}(\state, \action | \state_0) \sqrt{\frac{8 \log(12 \StateSpaceSize \ActionSpaceSize \Horizon \episode^2 / \delta)}{\max({\bar{n}^\timestep_\episode}(\state,\action), 1)}}
\end{align*}

Summing the inequality over $\episode=0, \dots T$ with $T < \tau$, we obtain
\begin{align*}
\epsilon (T+1)
&\leq 4 \Horizon \Rmax \sqrt{8 \log(12 \StateSpaceSize \ActionSpaceSize \Horizon T^2 / \delta)} \sum_{\timestep=0}^\Horizon \sum_{\state,\action} \sum_{\episode=1}^T \Occupancy{\MDP}{\Policy_{\episode+1}}{0}{\timestep}(\state, \action | \state_0) \frac{1}{\sqrt{\max({\bar{n}^\timestep_k}(\state,\action), 1)}} \\
&= 4 \Horizon \Rmax \sqrt{8 \log(12 \StateSpaceSize \ActionSpaceSize \Horizon T^2 / \delta)} \sum_{\timestep=0}^\Horizon \sum_{\state,\action} \sum_{\episode=1}^T \frac{\bar{n}_\timestep^{\episode+1}(\state, \action) - \bar{n}_\timestep^{\episode}(\state, \action)}{\sqrt{\max({\bar{n}^\timestep_k}(\state,\action), 1)}}
\end{align*}
where we used the definition of the pseudo-counts in the last equality. Using Lemma 19 by \cite{jaksch2010near}, we can further bound the sum in $\episode$:
\begin{align*}
\epsilon (T+1)
&= 4 \Horizon \Rmax \sqrt{8 \log(12 \StateSpaceSize \ActionSpaceSize \Horizon T^2 / \delta)} \sum_{\timestep=0}^\Horizon \sum_{\state,\action} \sqrt{\bar{n}_\timestep^{T+1}(\state, \action)} \\
&\leq 4 \Horizon \Rmax \sqrt{8 \log(12 \StateSpaceSize \ActionSpaceSize \Horizon T^2 / \delta)} \sqrt{\StateSpaceSize\ActionSpaceSize} \sum_{\timestep=0}^\Horizon \sqrt{\sum_{\state,\action} \bar{n}_\timestep^{T+1}(\state, \action)} \\
&= 4 \Horizon^2 \Rmax \sqrt{8 \log(12 \StateSpaceSize \ActionSpaceSize \Horizon T^2 / \delta)} \sqrt{\StateSpaceSize\ActionSpaceSize} \sqrt{T+1}
\end{align*}

It follows that
\begin{align*}
\epsilon \sqrt{T+1} &\leq 4 \Horizon^2 \Rmax \sqrt{8 \StateSpaceSize\ActionSpaceSize \log(12 \StateSpaceSize \ActionSpaceSize \Horizon T^2 / \delta)} \\
\epsilon^2 \tau &\leq 128 \Horizon^4 \Rmax^2 \StateSpaceSize\ActionSpaceSize \log(12 \StateSpaceSize \ActionSpaceSize \Horizon (\tau-1)^2 / \delta)
\end{align*}
setting $\tau = T+1$.

For large enough $\tau$, this inequality cannot hold because $\sqrt{T+1}$ on the l.h.s grows faster than $\log(\tau)$ on the r.h.s. Hence, the stopping time $\tau$ is finite. Further, we can apply Lemma 15 by \cite{kaufmann2021adaptive}, and follow that
\begin{align*}
\tau \leq \tilde{\mathcal{O}} \left( \frac{\Horizon^4 \Rmax^2 \StateSpaceSize \ActionSpaceSize}{\epsilon^2} \right)
\end{align*}
If we observe $H$ samples in each iteration, i.e., $N_E=1$, we get a sample complexity of
\begin{align*}
n \leq \tilde{\mathcal{O}} \left( \frac{\Horizon^5 \Rmax^2 \StateSpaceSize \ActionSpaceSize}{\epsilon^2} \right)
\end{align*}
\end{proof}

\subsection{Sample Complexity of \AlgNameShort in Unknown Environments (Problem Dependent)}
\label{app:sample-complexity2}

For the problem dependent analysis, we will need this additional lemma also used by \citet{kakade2002approximately}.

\begin{lemma}[Lemma 6.1 by \citet{kakade2002approximately}]\label{lemma:sum_of_losses}
For any policy $\Policy$:
\[
\Valfun{\MDP\cup\reward}{\Policy^*}{\timestep}(\state) - \Valfun{\MDP\cup\reward}{\Policy}{\timestep}(\state)
= - \sum_{\state',\action'} \sum_{\timestep'=\timestep}^\Horizon \Occupancy{\MDP}{\Policy}{\timestep}{\timestep'}(\state',\action';\state) A_{\MDP\cup\reward}^{*,\timestep'}(\state', \action')
\]
\end{lemma}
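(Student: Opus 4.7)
This is the classical performance difference lemma of \citet{kakade2002approximately} adapted to the finite horizon. The approach is a straightforward backward induction on $\timestep$, using the Bellman equations and the Markov structure of the occupancy measures defined in \Cref{def:occupancy}. It is convenient to prove the equivalent statement
\[
\Valfun{\MDP\cup\reward}{\Policy}{\timestep}(\state) - \Valfun{\MDP\cup\reward}{\Policy^*}{\timestep}(\state)
= \sum_{\timestep'=\timestep}^{\Horizon} \sum_{\state',\action'} \Occupancy{\MDP}{\Policy}{\timestep}{\timestep'}(\state',\action';\state)\, A_{\MDP\cup\reward}^{*,\timestep'}(\state',\action'),
\]
and then multiply by $-1$. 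The base case $\timestep = \Horizon$ is immediate since both sides vanish.

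For the inductive step, the key identity is the ``add and subtract $Q^{*}$'' trick. Assume the formula holds at step $\timestep+1$. Writing the Bellman backup for $\Policy$ at step $\timestep$ and inserting $\pm \sum_{\action} \Policy_\timestep(\action|\state) \Qfun{\MDP\cup\reward}{*}{\timestep}(\state,\action)$ yields
\begin{align*}
\Valfun{\MDP\cup\reward}{\Policy}{\timestep}(\state) - \Valfun{\MDP\cup\reward}{*}{\timestep}(\state)
&= \sum_{\action} \Policy_\timestep(\action|\state) \bigl[\Qfun{\MDP\cup\reward}{\Policy}{\timestep}(\state,\action) - \Qfun{\MDP\cup\reward}{*}{\timestep}(\state,\action)\bigr] \\
&\quad + \sum_{\action} \Policy_\timestep(\action|\state) A_{\MDP\cup\reward}^{*,\timestep}(\state,\action),
\end{align*}
using that $\Valfun{\MDP\cup\reward}{*}{\timestep}(\state) = \sum_{\action} \Policy_\timestep(\action|\state) \Valfun{\MDP\cup\reward}{*}{\timestep}(\state)$ because $V^{*,\timestep}$ does not depend on $\action$. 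The second sum already matches the $\timestep'=\timestep$ contribution on the right-hand side, since $\Occupancy{\MDP}{\Policy}{\timestep}{\timestep}(\state',\action';\state) = \Policy_\timestep(\action'|\state')\indicator{\state'=\state}$.

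For the first sum, I would expand $\Qfun{\MDP\cup\reward}{\Policy}{\timestep}(\state,\action) - \Qfun{\MDP\cup\reward}{*}{\timestep}(\state,\action) = \sum_{\state'} \TransitionModel(\state'|\state,\action)\bigl[\Valfun{\MDP\cup\reward}{\Policy}{\timestep+1}(\state') - \Valfun{\MDP\cup\reward}{*}{\timestep+1}(\state')\bigr]$ (the reward term cancels), apply the inductive hypothesis at state $\state'$ and time $\timestep+1$, and then collapse the outer sums into a single occupancy at time $\timestep$ using the Markov recursion $\sum_{\action,\state'} \Policy_\timestep(\action|\state)\TransitionModel(\state'|\state,\action)\Occupancy{\MDP}{\Policy}{\timestep+1}{\timestep'}(\cdot,\cdot;\state') = \Occupancy{\MDP}{\Policy}{\timestep}{\timestep'}(\cdot,\cdot;\state)$ for $\timestep' \geq \timestep+1$. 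Summing the $\timestep'=\timestep$ piece with the $\timestep' \geq \timestep+1$ pieces gives the claim.

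The only subtle step is the last occupancy-collapse identity, which follows directly from the definition of $\Occupancy{\MDP}{\Policy}{\timestep}{\timestep'}$ in \Cref{def:occupancy} by conditioning on the first transition out of $\state$; everything else is bookkeeping. No new probabilistic tools beyond those already developed in \Cref{app:simulation_lemma} are required.
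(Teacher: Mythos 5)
Your proof is correct and follows essentially the same route as the paper's: both extract the one-step term $\sum_{\action}\Policy_\timestep(\action|\state)A_{\MDP\cup\reward}^{*,\timestep}(\state,\action)$ via an add-and-subtract of the optimal value at the next step, leaving a recursive difference that is then unrolled (you phrase the unrolling as an explicit backward induction with the occupancy-collapse identity from \Cref{def:occupancy}, which the paper leaves implicit). No gap.
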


\begin{proof}
\begin{align*}
&\Valfun{\MDP\cup\reward}{*}{\timestep}(\state) - \Valfun{\MDP\cup\reward}{\Policy}{\timestep}(\state) \\
=& \sum_\action \Policy_\timestep^*(\action|\state) \left( \reward_\timestep(\state, \action) + \sum_{\state'} \TransitionModel(\state'|\state,\action) \Valfun{\MDP\cup\reward}{*}{\timestep+1}(\state') \right) \\ 
&- \sum_\action \Policy_\timestep(\action|\state) \left( \reward_\timestep(\state, \action) + \sum_{\state'} \TransitionModel(\state'|\state,\action) \Valfun{\MDP\cup\reward}{\Policy}{\timestep+1}(\state') \right) \pm \sum_{\action,\state'} \Policy_\timestep(\action|\state) \TransitionModel(\state'|\state,\action) \Valfun{\MDP\cup\reward}{*}{\timestep+1}(\state') \\
=& \sum_\action (\Policy_\timestep^*(\action|\state) - \Policy_\timestep(\action|\state)) \reward(\state,\action) + \sum_{\action,\state'} (\Policy_\timestep^*(\action|\state) - \Policy_\timestep(\action|\state)) \TransitionModel(\state'|\state,\action) \Valfun{\MDP\cup\reward}{*}{\timestep+1}(\state') \\
&+ \sum_{\action,\state'} \Policy_\timestep(\action|\state) \TransitionModel(\state'|\state,\action) (\Valfun{\MDP\cup\reward}{*}{\timestep+1}(\state) - \Valfun{\MDP\cup\reward}{\Policy}{\timestep+1}(\state)) \\
=& - \sum_{\action} \Policy(\action|\state) A_{\MDP\cup\reward}^{*,\timestep}(\state, \action)
+ \sum_{\action,\state'} \Policy_\timestep(\action|\state) \TransitionModel(\state'|\state,\action) (\Valfun{\MDP\cup\reward}{*}{\timestep+1}(\state) - \Valfun{\MDP\cup\reward}{\Policy}{\timestep+1}(\state))
\end{align*}
Unrolling the recursion yields the result.
\end{proof}
We can now start with the analysis. First, we define the policy confidence set, and show that it indeed contains the relevant policies under the good event.
\begin{definition}\label{def:policyci2}
We define the \emph{policy confidence set} as
\[
\policyci_\episode = \{ \Policy | \Valfun{\EstMDP\cup\hat{\reward}}{*}{}(\state_0) - \Valfun{\EstMDP\cup\hat{\reward}}{\Policy}{}(\state_0) \leq 10 \epsilon_\episode \}
\]
where $\hat{\reward} = \irlalg(\RecoveredFeasibleSet)$ is the reward estimated using an IRL algorithm $\irlalg$.
We choose $\epsilon_\episode$ recursively by solving the optimization problem
\[
\epsilon_\episode = \max_{\Policy \in \policyci_{\episode-1}} \sum_{\timestep=0}^{\Horizon} \sum_{\state',\action'}  \Occupancy{\EstMDP}{\Policy}{0}{\timestep}(\state',\action';\state_0) \rewardci^{\timestep}_\episode(\state',\action')
\]
starting with $\epsilon_0 = \frac{1}{10} \Horizon$.
\end{definition}

The following lemma will help us to deal with uncertainty about the transition dynamics.
\begin{lemma}\label{lemma_policyci}
Under the good event $\GoodEvent$, if $\Policy \in \policyci_\episode$, then:
\begin{align*}
&|\Valfun{\EstMDP\cup\hat{\reward}}{\Policy}{\timestep}(\state) - \Valfun{\MDP\cup\hat{\reward}}{\Policy}{\timestep}(\state)| \leq \epsilon_\episode \\
&|\Valfun{\MDP\cup\hat{\reward}}{*}{\timestep}(\state) - \Valfun{\EstMDP\cup\hat{\reward}}{*}{\timestep}(\state)| \leq \epsilon_\episode
\end{align*}
\end{lemma}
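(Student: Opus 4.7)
The plan is to bound each of the two value-function discrepancies via the dynamics simulation lemmas (Lemma~\ref{dynamics_simulation_lemma_1} and Lemma~\ref{dynamics_simulation_lemma_2}), then collapse the resulting sums using the good event $\GoodEvent$ and finally recognize them as the quantity defining $\epsilon_\episode$. Throughout, the calculation exploits the identity
\[
(\Horizon-\timestep)\Rmax\sqrt{\tfrac{2\ell^\timestep_\episode(\state,\action)}{n^\timestep_\episode(\state,\action)}} \;=\; \tfrac12\,\rewardci^\timestep_\episode(\state,\action),
\]
which links the Hoeffding-type transition bound from Lemma~\ref{lemma:good_event} to the reward confidence interval from Definition~\ref{def:rewardci}.

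\textbf{Step 1 (first inequality).} Apply Lemma~\ref{dynamics_simulation_lemma_1} to $\Policy$ with the two \MDPnoR{}s $\MDP$ and $\EstMDP$ sharing reward $\hat\reward$. This produces a sum over $\timestep'\geq\timestep$ of terms of the form
\[
\Occupancy{\EstMDP}{\Policy}{\timestep}{\timestep'}(\state';\state)\,\Policy_{\timestep'}(\action'|\state')\sum_{\state''}|\TransitionModel(\state''|\state',\action')-\EstTransitionModel(\state''|\state',\action')|\,\Valfun{\MDP\cup\hat\reward}{\Policy}{\timestep'+1}(\state'').
\]
On $\GoodEvent$, Lemma~\ref{lemma:good_event} (applied with the value function of $\hat\reward$, which is bounded by $(\Horizon-\timestep')\Rmax$) upper-bounds the inner $\state''$-sum by $\tfrac12 \rewardci^{\timestep'}_\episode(\state',\action')$. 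Collapsing the action into the state-action occupancy $\Occupancy{\EstMDP}{\Policy}{\timestep}{\timestep'}(\state',\action';\state)$ yields
\[
|\Valfun{\EstMDP\cup\hat\reward}{\Policy}{0}(\state_0)-\Valfun{\MDP\cup\hat\reward}{\Policy}{0}(\state_0)| \;\leq\; \tfrac12\sum_{\timestep'=0}^{\Horizon}\sum_{\state',\action'}\Occupancy{\EstMDP}{\Policy}{0}{\timestep'}(\state',\action';\state_0)\,\rewardci^{\timestep'}_\episode(\state',\action').
\]
Since $\Policy\in\policyci_{\episode-1}$ (interpreting the iteration indices consistently with Definition~\ref{def:policyci2}), the definition of $\epsilon_\episode$ bounds the right-hand side by $\tfrac12\epsilon_\episode\leq\epsilon_\episode$.

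\textbf{Step 2 (second inequality).} Split the difference into its two signed directions and apply Lemma~\ref{dynamics_simulation_lemma_2}: the $(\MDP{-}\EstMDP)$ direction gets weighted by the occupancy of $\Policy^*_{\MDP\cup\hat\reward}$, and the $(\EstMDP{-}\MDP)$ direction by that of $\Policy^*_{\EstMDP\cup\hat\reward}$. Applying Lemma~\ref{lemma:good_event} as in Step~1 reduces each direction to $\tfrac12\sum_{\timestep',\state',\action'}\Occupancy{\EstMDP}{\Policy^*_\bullet}{0}{\timestep'}(\state',\action';\state_0)\,\rewardci^{\timestep'}_\episode(\state',\action')$, which is bounded by $\tfrac12\epsilon_\episode$ provided both optimal policies belong to $\policyci_{\episode-1}$.

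\textbf{Main obstacle.} The delicate point is showing $\Policy^*_{\MDP\cup\hat\reward}\in\policyci_{\episode-1}$ ($\Policy^*_{\EstMDP\cup\hat\reward}$ is in trivially). This is proved by induction on $\episode$. The base case is immediate because $\epsilon_0=\Horizon/10$ makes $10\epsilon_0\geq\Horizon$, so $\policyci_0$ contains every policy. For the induction step, assume the lemma at iteration $\episode-1$. Decomposing
\[
\Valfun{\EstMDP\cup\hat\reward}{*}{}(\state_0)-\Valfun{\EstMDP\cup\hat\reward}{\Policy^*_{\MDP\cup\hat\reward}}{}(\state_0) = \bigl[\Valfun{\EstMDP\cup\hat\reward}{*}{}-\Valfun{\MDP\cup\hat\reward}{*}{}\bigr](\state_0)+\bigl[\Valfun{\MDP\cup\hat\reward}{\Policy^*_{\MDP\cup\hat\reward}}{}-\Valfun{\EstMDP\cup\hat\reward}{\Policy^*_{\MDP\cup\hat\reward}}{}\bigr](\state_0),
\]
where the middle comparison $\Valfun{\MDP\cup\hat\reward}{*}{}(\state_0)-\Valfun{\MDP\cup\hat\reward}{\Policy^*_{\MDP\cup\hat\reward}}{}(\state_0)=0$ was canceled, the inductive hypothesis bounds each bracket by $\epsilon_{\episode-1}$, giving a total of $2\epsilon_{\episode-1}\leq 10\epsilon_{\episode-1}$. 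Hence $\Policy^*_{\MDP\cup\hat\reward}\in\policyci_{\episode-1}$, and Step~2 closes. The factor $10$ in the definition of $\policyci_\episode$ is precisely the slack that makes this induction self-sustaining; any constant $\geq 2$ would in fact suffice for this step, and the larger constant leaves room for the reward estimate $\hat\reward$ to change slightly between iterations without ejecting the true optimal policy from the confidence set.
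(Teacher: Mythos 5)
Your Steps 1 and 2 reproduce the paper's proof of this lemma essentially verbatim: \Cref{dynamics_simulation_lemma_1} for the fixed-policy comparison, \Cref{dynamics_simulation_lemma_2} for the two signed directions of the optimal-value comparison, the good event (\Cref{lemma:good_event}) to replace the inner transition-error sum by a multiple of $\rewardci^{\timestep'}_\episode(\state',\action')$, and the definition of $\epsilon_\episode$ in \Cref{def:policyci2} to collapse the occupancy-weighted sum. (One caveat you share with the paper: the identity $(\Horizon-\timestep)\Rmax\sqrt{2\ell^\timestep_\episode/n^\timestep_\episode}=\tfrac12\rewardci^\timestep_\episode$ only holds when the $\min$ in \Cref{def:rewardci} is not clipped at $1$; in the clipped regime the good-event bound need not be $\le\tfrac12\rewardci^\timestep_\episode$, though the argument survives without the factor $\tfrac12$.)

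Where you genuinely diverge is the ``main obstacle.'' The paper's proof of this lemma simply writes $\leq\epsilon_\episode$ for the occupancy sums of $\Policy^*_{\MDP\cup\hat{\reward}}$ and $\Policy^*_{\EstMDP\cup\hat{\reward}}$ without verifying inside this proof that those policies lie in $\policyci_{\episode-1}$; that bookkeeping is deferred to \Cref{policyci_lemma_1} and \Cref{cor:policyci_sound}, which are proved \emph{after} this lemma and by a different route (directly via \Cref{simulation_lemma_1} and \Cref{cor:rewarci}, not via the present lemma). Your attempt to close the gap in situ by induction is circular as written: to bound the bracket $\Valfun{\MDP\cup\hat{\reward}}{\Policy^*_{\MDP\cup\hat{\reward}}}{}(\state_0)-\Valfun{\EstMDP\cup\hat{\reward}}{\Policy^*_{\MDP\cup\hat{\reward}}}{}(\state_0)$ by $\epsilon_{\episode-1}$ you invoke the lemma's first inequality for the policy $\Policy^*_{\MDP\cup\hat{\reward}}$, which presupposes exactly the membership $\Policy^*_{\MDP\cup\hat{\reward}}\in\policyci_{\episode-1}$ that the induction step is meant to deliver; moreover $\hat{\reward}$ changes between iterations, so the level-$(\episode-1)$ hypothesis concerns a different optimal policy than the one you need at level $\episode$. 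You correctly identify that the slack constant $10$ is what makes the overall bookkeeping sustainable, but the clean resolution is the paper's: establish the membership claims as separate lemmas whose proofs do not route through this one.
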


\begin{proof}
First by \Cref{dynamics_simulation_lemma_1}:
\begin{align*}
|\Valfun{\EstMDP\cup\reward}{\Policy}{\timestep}(\state) - \Valfun{\MDP\cup\reward}{\Policy}{\timestep}(\state)|
&\leq \sum_{\timestep'=\timestep}^\Horizon \sum_{\state',\action',\state''} \Occupancy{\EstMDP}{\Policy}{\timestep}{\timestep'}(\state';\state) \Policy_{\timestep'}(\action' | \state') | \EstTransitionModel(\state'' | \state', \action') - \TransitionModel(\state'' | \state', \action') | \Valfun{\MDP\cup\reward}{\Policy}{\timestep'+1}(\state'') \\
&\leq \sum_{\timestep'=\timestep}^\Horizon \sum_{\state',\action'} \Occupancy{\EstMDP}{\Policy}{\timestep}{\timestep'}(\state';\state) \Policy_{\timestep'}(\action' | \state') \rewardci_\episode(\state', \action') \leq \epsilon_\episode
\end{align*}

Then, by \Cref{dynamics_simulation_lemma_2}:
\begin{align*}
\Valfun{\MDP\cup\reward}{*}{\timestep}(\state) - \Valfun{\EstMDP\cup\reward}{*}{\timestep}(\state)
&\leq \sum_{\timestep'=\timestep} \sum_{\state',\action',\state''} \Occupancy{\EstMDP}{\Policy^*}{\timestep}{\timestep'}(\state';\state) \Policy_{\timestep'}^*(\action'|\state') (\TransitionModel(\state''|\state',\action') - \EstTransitionModel(\state''|\state',\action')) \Valfun{\MDP\cup\reward}{*}{\timestep}(\state'') \\
&\leq \sum_{\timestep'=\timestep} \sum_{\state',\action'} \Occupancy{\EstMDP}{\Policy^*}{\timestep}{\timestep'}(\state';\state) \Policy_{\timestep'}^*(\action'|\state') \rewardci_\episode(\state',\action') \leq \epsilon_\episode
\end{align*}

And, similarly
\begin{align*}
\Valfun{\EstMDP\cup\reward}{*}{\timestep}(\state) - \Valfun{\MDP\cup\reward}{*}{\timestep}(\state)
&\leq \sum_{\timestep'=\timestep} \sum_{\state',\action',\state''} \Occupancy{\EstMDP}{\hat{\Policy}^*}{\timestep}{\timestep'}(\state';\state) \hat{\Policy}_{\timestep'}^*(\action'|\state') (\EstTransitionModel(\state''|\state',\action') - \TransitionModel(\state''|\state',\action')) \Valfun{\EstMDP\cup\reward}{*}{\timestep}(\state'') \\
&\leq \sum_{\timestep'=\timestep} \sum_{\state',\action'} \Occupancy{\EstMDP}{\hat{\Policy}^*}{\timestep}{\timestep'}(\state';\state) \hat{\Policy}_{\timestep'}^*(\action'|\state') \rewardci_\episode(\state',\action') \leq \epsilon_\episode
\end{align*}
\end{proof}

Now we show that the relevant policies are always in the policy confidence set, conditioned on the good event.
\begin{lemma}\label{policyci_lemma_1}
Conditioned the good event $\GoodEvent$, if $\Policy^*, \hat{\Policy}^* \in \policyci_{\episode-1}$, then $\Policy^* \in \policyci_{\episode}$.
\end{lemma}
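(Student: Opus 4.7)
My plan is to start from the quantity that needs to be controlled,
\[
\Delta \coloneqq \Valfun{\EstMDP\cup\hat{\reward}_\episode}{*}{0}(\state_0) - \Valfun{\EstMDP\cup\hat{\reward}_\episode}{\Policy^*}{0}(\state_0) = \Valfun{\EstMDP\cup\hat{\reward}_\episode}{\hat{\Policy}^*}{0}(\state_0) - \Valfun{\EstMDP\cup\hat{\reward}_\episode}{\Policy^*}{0}(\state_0),
\]
and pivot through the true MDP $\MDP$ with the true reward $\reward \in \ExactFeasibleSet$. Adding and subtracting four intermediate value functions gives
\[
\Delta = \underbrace{[V^{\hat{\Policy}^*}_{\EstMDP\cup\hat{\reward}_\episode} - V^{\hat{\Policy}^*}_{\MDP\cup\hat{\reward}_\episode}]}_{\text{(A)}} + \underbrace{[V^{\hat{\Policy}^*}_{\MDP\cup\hat{\reward}_\episode} - V^{\hat{\Policy}^*}_{\MDP\cup\reward}]}_{\text{(B)}} + \underbrace{[V^{\hat{\Policy}^*}_{\MDP\cup\reward} - V^{\Policy^*}_{\MDP\cup\reward}]}_{\text{(C)}} + \underbrace{[V^{\Policy^*}_{\MDP\cup\reward} - V^{\Policy^*}_{\MDP\cup\hat{\reward}_\episode}]}_{\text{(D)}} + \underbrace{[V^{\Policy^*}_{\MDP\cup\hat{\reward}_\episode} - V^{\Policy^*}_{\EstMDP\cup\hat{\reward}_\episode}]}_{\text{(E)}}
\]
(all evaluated at $(\state_0,0)$). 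Term (C) is $\le 0$ since $\Policy^*$ is optimal in $\MDP\cup\reward$, so five terms remain to bound.

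The terms (A) and (E) are pure transition-model mismatches, controlled directly by \Cref{lemma_policyci}: since $\Policy^*, \hat{\Policy}^* \in \policyci_{\episode-1}$ by assumption, each is at most $\epsilon_\episode$. For the reward-mismatch terms (B) and (D), I would invoke \Cref{simulation_lemma_1} to write, e.g., $(\text{B}) \le \sum_\timestep\sum_{\state,\action}\Occupancy{\MDP}{\hat{\Policy}^*}{0}{\timestep}(\state,\action|\state_0)\,|\hat{\reward}_\episode - \reward|(\state,\action,\timestep)$, and then use \Cref{cor:rewarci} (valid under the good event) to replace $|\hat{\reward}_\episode - \reward|$ by $\rewardci^\timestep_\episode$.

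The remaining subtlety, and the step I expect to be the main obstacle, is that the sum now involves occupancies under $\MDP$, whereas $\epsilon_\episode$ is defined via occupancies under $\EstMDP$. I would handle this by splitting once more: $\sum_{\state,\action,\timestep}\Occupancy{\MDP}{\Policy}{}{\timestep}\rewardci^\timestep_\episode = [\text{(sum under $\MDP$)} - \text{(sum under $\EstMDP$)}] + \sum_{\state,\action,\timestep}\Occupancy{\EstMDP}{\Policy}{}{\timestep}\rewardci^\timestep_\episode$, where the second piece is at most $\epsilon_\episode$ by definition (since $\Policy \in \policyci_{\episode-1}$), and the first piece is a pure transition-model discrepancy on the auxiliary ``reward'' $\rewardci^\timestep_\episode$, which I can bound by \Cref{dynamics_simulation_lemma_1} and the good-event transition confidence bounds, again in terms of sums of $\rewardci$ against occupancies in $\EstMDP$, yielding another $\mathcal{O}(\epsilon_\episode)$ contribution.

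Collecting the five bounds, each of (A), (B), (D), (E) contributes a constant multiple of $\epsilon_\episode$, and (C) contributes nothing, so one obtains $\Delta \le c\,\epsilon_\episode$ for an explicit constant $c$. Tracking the constants through the conversion between $\MDP$- and $\EstMDP$-occupancies yields the factor $10$ stated in \Cref{def:policyci2}, which in turn certifies $\Policy^* \in \policyci_\episode$ and completes the proof. The bookkeeping of these constants, and carefully verifying that every occupancy-measure conversion is legal under the good event $\GoodEvent$, is where the delicate work lies.
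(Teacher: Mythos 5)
There is a genuine gap, and it sits exactly where you predicted: the conversion from $\MDP$-occupancies back to $\EstMDP$-occupancies. Your pivot through the true MDP produces reward-mismatch terms of the form $\sum_{\timestep}\sum_{\state,\action}\Occupancy{\MDP}{\Policy}{0}{\timestep}(\state,\action|\state_0)\,\rewardci^\timestep_\episode(\state,\action)$, and to compare these to $\epsilon_\episode$ (defined via $\EstMDP$-occupancies) you propose to bound the discrepancy by treating $\rewardci^\timestep_\episode$ as an auxiliary reward and applying \Cref{dynamics_simulation_lemma_1}. But the resulting bound involves the \emph{value function of the auxiliary reward}, $V^{\Policy}_{\MDP\cup\rewardci,\timestep+1}$, which is of order $\Horizon^2\Rmax$ rather than $(\Horizon-\timestep)\Rmax$; and the good event $\GoodEvent$ only controls $\sum_{\state'}|(\TransitionModel-\EstTransitionModel)(\state'|\state,\action)V(\state')|$ for the specific value functions $\Valfun{\reward}{\Policy}{\timestep}$ and $\EstValfun{\reward}{\Policy}{\timestep}$ of genuine rewards bounded by $(\Horizon-\timestep)\Rmax$ — it gives no $\ell_1$ control on $\TransitionModel-\EstTransitionModel$ itself. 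Any way of closing this step with the tools in the paper costs at least an extra factor of $\Horizon$ (or $\sqrt{\StateSpaceSize}$), so the conversion term is $\mathcal{O}(\Horizon\epsilon_\episode)$, not $\mathcal{O}(\epsilon_\episode)$, and the factor-$10$ budget in \Cref{def:policyci2} is blown. The claim that this piece yields ``another $\mathcal{O}(\epsilon_\episode)$ contribution'' is therefore not justified as written.

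The missing idea is that the detour through $\MDP$ is unnecessary. Membership in $\policyci_\episode$ is a statement entirely about $\EstMDP\cup\hat{\reward}_\episode$, so the paper never leaves the estimated MDP: it writes
\[
\Valfun{\EstMDP\cup\hat{\reward}_\episode}{*}{0}(\state_0) - \Valfun{\EstMDP\cup\hat{\reward}_\episode}{\Policy^*}{0}(\state_0)
= \bigl(\Valfun{\EstMDP\cup\hat{\reward}_\episode}{*}{0}(\state_0) - \Valfun{\EstMDP\cup\reward}{*}{0}(\state_0)\bigr) + \bigl(\Valfun{\EstMDP\cup\reward}{*}{0}(\state_0) - \Valfun{\EstMDP\cup\hat{\reward}_\episode}{\Policy^*}{0}(\state_0)\bigr),
\]
swapping only the reward via \Cref{simulation_lemma_1} and \Cref{cor:rewarci}. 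Both resulting occupancy-weighted sums of $\rewardci^\timestep_\episode$ are already taken under $\EstMDP$ and under policies in $\policyci_{\episode-1}$, so each is at most $\epsilon_\episode$ directly by \Cref{def:policyci2}, giving a clean bound of $2\epsilon_\episode \le 10\epsilon_\episode$ with no transition-model terms and no occupancy conversion. Your terms (A), (C), (E) are handled correctly, but the architecture of the decomposition forces you into the conversion step that does not close.
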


\begin{proof}
Let $\reward \in \ExactFeasibleSet$. Then
\begin{align*}
&\Valfun{\EstMDP\cup\hat{\reward}_\episode}{*}{\timestep}(\state) - \Valfun{\EstMDP\cup\hat{\reward}_\episode}{\Policy^*}{\timestep}(\state)
= \Valfun{\EstMDP\cup\hat{\reward}_\episode}{*}{\timestep}(\state) - \Valfun{\EstMDP\cup\reward}{*}{\timestep}(\state) + \Valfun{\EstMDP\cup\reward}{*}{\timestep}(\state) - \Valfun{\EstMDP\cup\hat{\reward}_\episode}{\Policy^*}{\timestep}(\state) \\
\overset{(a)}{\leq}& \sum_{\timestep'=\timestep}^{\Horizon} \sum_{\state',\action'} \Occupancy{\EstMDP}{\Policy^*}{\timestep}{\timestep'}(\state', \action'|\state) \rewardci_\episode^{\timestep'}(\state',\action') + \sum_{\timestep'=\timestep}^{\Horizon} \sum_{\state',\action'} \Occupancy{\EstMDP}{\Policy^*}{\timestep}{\timestep'}(\state', \action'|\state) \rewardci_\episode^{\timestep'}(\state',\action')
\overset{(b)}{\leq} 2 \epsilon_\episode
\end{align*}
where (a) uses \Cref{qfun_occupancy_lemma}, \Cref{simulation_lemma_1} and \Cref{cor:rewarci}, (b) uses that $\Policy^* \in \policyci_{\episode-1}$ and the definition of $\epsilon_\episode$.
Hence,
\[
\max_\state \left( \Valfun{\EstMDP\cup\hat{\reward}_\episode}{*}{\timestep}(\state) - \Valfun{\EstMDP\cup\hat{\reward}_\episode}{\Policy^*}{\timestep}(\state) \right) \leq 2 \epsilon_\episode \leq 10 \epsilon_\episode
\]
and therefore $\Policy^* \in \policyci_\episode$.
\end{proof}

\begin{lemma}\label{policyci_lemma_2}
Conditioned on the good event $\GoodEvent$, for every policy $\Policy$ and episodes $\episode' > \episode$, there exists $\hat{\reward}_{\episode'} \in {\RecoveredFeasibleSet}_{\episode'}$, such that:
\[
\max_\state \left( \Valfun{\MDP\cup\hat{\reward}_{\episode'}}{\Policy}{\timestep}(\state) - \Valfun{\MDP\cup\hat{\reward}_{\episode}}{\Policy}{\timestep}(\state) \right) \leq 4 \epsilon_\episode
\]
\end{lemma}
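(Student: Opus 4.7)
The plan is to exhibit $\hat{\reward}_{\episode'}$ as a ``re-shaping'' of $\hat{\reward}_{\episode}$ that automatically lives in $\RecoveredFeasibleSetIterPrime$ and agrees with $\hat{\reward}_{\episode}$ pointwise up to the confidence width $\rewardci_\episode$, and then lift this pointwise closeness to a value-function bound via the simulation machinery, finally tying the resulting sum to the definition of $\epsilon_\episode$.

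For the construction I use the explicit parameterization of the feasible set from \Cref{lemma:efh}: $\hat{\reward}_{\episode}$ is determined by some shaping potentials $\hat{A}_\timestep \ge 0$ and $\hat{V}_\timestep$ via
\[
\hat{\reward}_{\episode,\timestep}(\state,\action) = -\hat{A}_\timestep(\state,\action)\indicator{\hat{\Policy}^E_{\episode,\timestep}(\action|\state)=0} + \hat{V}_\timestep(\state) + \sum_{\state'}\EstTransitionModel_{\episode}(\state'|\state,\action)\,\hat{V}_{\timestep+1}(\state').
\]
I then define $\hat{\reward}_{\episode'}$ by the same expression but with $\hat{\Policy}^E_{\episode'}$ and $\EstTransitionModel_{\episode'}$; \Cref{lemma:efh} immediately places it in $\RecoveredFeasibleSetIterPrime$. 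Picking any $\reward \in \ExactFeasibleSet$, \Cref{cor:rewarci} under $\GoodEvent$ gives $|\reward_\timestep-\hat{\reward}_{\episode,\timestep}| \le \rewardci_\episode^\timestep$ and $|\reward_\timestep-\hat{\reward}_{\episode',\timestep}| \le \rewardci_{\episode'}^\timestep \le \rewardci_\episode^\timestep$ (since counts only grow), so the triangle inequality yields $|\hat{\reward}_{\episode',\timestep}(\state,\action)-\hat{\reward}_{\episode,\timestep}(\state,\action)| \le 2\rewardci_\episode^\timestep(\state,\action)$.

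To lift this to values, I apply \Cref{qfun_occupancy_lemma} in $\MDP$ to each value function and subtract, obtaining
\[
\Valfun{\MDP\cup\hat{\reward}_{\episode'}}{\Policy}{\timestep}(\state) - \Valfun{\MDP\cup\hat{\reward}_{\episode}}{\Policy}{\timestep}(\state) \le 2\sum_{\timestep'=\timestep}^{\Horizon}\sum_{\state',\action'} \Occupancy{\MDP}{\Policy}{\timestep}{\timestep'}(\state',\action'|\state)\,\rewardci_\episode^{\timestep'}(\state',\action').
\]
The right-hand side is not quite $\epsilon_\episode$ because \Cref{def:policyci2} uses occupancies under $\EstMDP$, so I would swap $\MDP$ for $\EstMDP$ using \Cref{dynamics_simulation_lemma_1} with $\rewardci_\episode^{\timestep'}$ in place of a reward; the extra term generated is itself of the $\epsilon_\episode$ form controlled by \Cref{lemma_policyci}. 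Combining the factor $2$ from the pointwise bound with an additional factor $2$ from the $\MDP\leftrightarrow\EstMDP$ swap produces the claimed $4\epsilon_\episode$ constant.

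The main obstacle is the universal quantifier over $\Policy$: strictly speaking, $\epsilon_\episode$ only dominates occupancy-weighted $\rewardci_\episode$-sums for $\Policy \in \policyci_{\episode-1}$, since the supremum in \Cref{def:policyci2} is taken over that set, so the lemma is effectively invoked for policies in the confidence set. A secondary subtlety is that the ``same $\hat{V},\hat{A}$'' construction used to force $\hat{\reward}_{\episode'}\in\RecoveredFeasibleSetIterPrime$ must still satisfy the range constraint $\hat{\reward}_{\episode'}\in[0,\Rmax]$; this is the same issue already handled in the proof of \Cref{thm:error_propagation} and the same workaround applies.
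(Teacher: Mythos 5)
Your proposal is correct and follows essentially the same route as the paper: both pass through a true reward $\reward\in\ExactFeasibleSet$ via the triangle inequality to obtain an occupancy-weighted $2\rewardci_\episode$ bound worth $2\epsilon_\episode$, and both pay an additional $2\epsilon_\episode$ for the mismatch between $\MDP$ and $\EstMDP$ (the paper swaps the two value functions via \Cref{lemma_policyci} rather than swapping occupancy measures under the confidence widths, which is slightly cleaner but gives the same accounting). Your observation that the bound is really only available for $\Policy\in\policyci_{\episode-1}$, since that is the set over which $\epsilon_\episode$ is defined, is accurate and applies equally to the paper's own proof.
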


\begin{proof}
Similarly to the proof of the previous lemma, we have
\begin{align*}
&\Valfun{\EstMDP\cup\hat{\reward}_{\episode'}}{\Policy}{\timestep}(\state) - \Valfun{\EstMDP\cup\hat{\reward}_\episode}{\Policy}{\timestep}(\state)
= \Valfun{\EstMDP\cup\hat{\reward}_{\episode'}}{\Policy}{\timestep}(\state) - \Valfun{\EstMDP\cup\reward}{\Policy}{\timestep}(\state) + \Valfun{\EstMDP\cup\reward}{\Policy}{\timestep}(\state) - \Valfun{\EstMDP\cup\hat{\reward}_\episode}{\Policy}{\timestep}(\state) \\
\leq& \sum_{\timestep'=\timestep}^{\Horizon} \sum_{\state',\action'} \Occupancy{\EstMDP}{\Policy}{\timestep}{\timestep'}(\state', \action'|\state) \rewardci_{\episode'}^{\timestep'}(\state',\action') + \sum_{\timestep'=\timestep}^{\Horizon} \sum_{\state',\action'} \Occupancy{\EstMDP}{\Policy}{\timestep}{\timestep'}(\state', \action'|\state) \rewardci_\episode^{\timestep'}(\state',\action')
\leq 2 \epsilon_\episode
\end{align*}
where we use that the confidence intervals are shrinking with increasing episode number, i.e., $\epsilon_{\episode'} \leq \epsilon_{\episode}$.

By combining this with \Cref{lemma_policyci}, we get the result:
\begin{align*}
&\max_\state \left( \Valfun{\MDP\cup\hat{\reward}_{\episode'}}{\Policy}{\timestep}(\state) - \Valfun{\MDP\cup\hat{\reward}_{\episode}}{\Policy}{\timestep}(\state) \right) \\
=& \max_\state \Bigl( \underbrace{\Valfun{\MDP\cup\hat{\reward}_{\episode'}}{\Policy}{\timestep}(\state) - \Valfun{\EstMDP\cup\hat{\reward}_{\episode'}}{\Policy}{\timestep}(\state)}_{\leq \epsilon_\episode} + \underbrace{\Valfun{\EstMDP\cup\hat{\reward}_{\episode'}}{\Policy}{\timestep}(\state) - \Valfun{\EstMDP\cup\hat{\reward}_{\episode}}{\Policy}{\timestep}(\state)}_{\leq 2\epsilon_\episode } + \underbrace{\Valfun{\EstMDP\cup\hat{\reward}_{\episode}}{\Policy}{\timestep}(\state) - \Valfun{\MDP\cup\hat{\reward}_{\episode}}{\Policy}{\timestep}(\state)}_{\leq \epsilon_\episode } \Bigr) \leq 4 \epsilon_\episode
\end{align*}
\end{proof}

\begin{lemma}
Under the good event $\GoodEvent$, if $\hat{\Policy}^*_\episode, \Policy \in \policyci_{\episode-1}$ and $\Policy \notin \policyci_\episode$, then the policy $\Policy$ is suboptimal for some reward $\hat{\reward}_{\episode'} \in \RecoveredFeasibleSetIterPrime$ for all $\episode' \geq \episode$.
\end{lemma}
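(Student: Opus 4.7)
The plan is to produce, via a short chain of triangle-inequality arguments, a strictly positive value gap between $\hat{\Policy}^*_\episode$ and $\Policy$ under $\hat{\reward}_{\episode'}$ in the true MDP, which then certifies $\Policy \notin \Pi^*_{\MDP\cup\hat{\reward}_{\episode'}}$. The constant $10$ in the definition of $\policyci_\episode$ is calibrated precisely so that the three sources of slack ($2\epsilon_\episode$ from transferring $\EstMDP \leftrightarrow \MDP$ and $8\epsilon_\episode$ from changing $\hat{\reward}_\episode \leftrightarrow \hat{\reward}_{\episode'}$) leave a strictly positive residual.

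First I would unpack the hypothesis $\Policy \notin \policyci_\episode$. Because $\hat{\Policy}^*_\episode$ realizes $\Valfun{\EstMDP\cup\hat{\reward}_\episode}{*}{}(\state_0)$, Definition~\ref{def:policyci2} gives
\[
\Valfun{\EstMDP\cup\hat{\reward}_\episode}{\hat{\Policy}^*_\episode}{0}(\state_0) - \Valfun{\EstMDP\cup\hat{\reward}_\episode}{\Policy}{0}(\state_0) > 10\,\epsilon_\episode.
\]
Next, since both $\Policy$ and $\hat{\Policy}^*_\episode$ lie in $\policyci_{\episode-1}$, Lemma~\ref{lemma_policyci} applies to each and yields $|\Valfun{\EstMDP\cup\hat{\reward}_\episode}{\cdot}{0}(\state_0) - \Valfun{\MDP\cup\hat{\reward}_\episode}{\cdot}{0}(\state_0)| \leq \epsilon_\episode$. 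Chaining these two estimates via the triangle inequality transfers the gap to the true MDP:
\[
\Valfun{\MDP\cup\hat{\reward}_\episode}{\hat{\Policy}^*_\episode}{0}(\state_0) - \Valfun{\MDP\cup\hat{\reward}_\episode}{\Policy}{0}(\state_0) > 8\,\epsilon_\episode.
\]
For the case $\episode' = \episode$, the witness reward is simply $\hat{\reward}_\episode \in \RecoveredFeasibleSetIter$ and we are already done.

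For $\episode' > \episode$, I would invoke Lemma~\ref{policyci_lemma_2} to obtain $\hat{\reward}_{\episode'} \in \RecoveredFeasibleSetIterPrime$ whose induced values in $\MDP$ differ from those of $\hat{\reward}_\episode$ by at most $4\epsilon_\episode$ on any policy. Applying this once to $\Policy$ and once to $\hat{\Policy}^*_\episode$ and combining, the gap survives:
\[
\Valfun{\MDP\cup\hat{\reward}_{\episode'}}{\hat{\Policy}^*_\episode}{0}(\state_0) - \Valfun{\MDP\cup\hat{\reward}_{\episode'}}{\Policy}{0}(\state_0) > 8\,\epsilon_\episode - 8\,\epsilon_\episode = 0,
\]
so some policy strictly dominates $\Policy$ under $\hat{\reward}_{\episode'}$, proving $\Policy$ is suboptimal.

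The main obstacle I anticipate is justifying that the \emph{same} $\hat{\reward}_{\episode'}$ extracted from Lemma~\ref{policyci_lemma_2} simultaneously controls both $\Policy$ and $\hat{\Policy}^*_\episode$. Inspection of that lemma's proof shows its construction only depends on pairing $\hat{\reward}_{\episode'}$ and $\hat{\reward}_\episode$ with a common $\reward \in \ExactFeasibleSet$ through the error-propagation bound in $\rewardci_\episode$, which is policy-independent; hence the same witness reward works for any policy in $\policyci_{\episode-1}$. A secondary bookkeeping subtlety is that Lemma~\ref{lemma_policyci}, as its proof actually establishes, gives the $\epsilon_\episode$-bound for policies in $\policyci_{\episode-1}$ (not $\policyci_\episode$) via the definition of $\epsilon_\episode$; that matches exactly what the hypothesis $\Policy,\hat{\Policy}^*_\episode \in \policyci_{\episode-1}$ supplies.
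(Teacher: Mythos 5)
Your proof is correct and follows essentially the same route as the paper: the same five-term telescoping decomposition (definition of $\policyci_\episode$ for the $10\epsilon_\episode$ gap, \Cref{lemma_policyci} for the two $\epsilon_\episode$ model-transfer terms, \Cref{policyci_lemma_2} for the two $4\epsilon_\episode$ reward-transfer terms), merely regrouped sequentially rather than written as one chain. Your explicit justification that a single witness $\hat{\reward}_{\episode'}$ serves both policies is a point the paper's proof silently assumes, so that remark is a welcome addition rather than a divergence.
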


\begin{proof}
We can observe that
\begin{align*}
&\Valfun{\MDP\cup\hat{\reward}_{\episode'}}{\Policy}{\timestep}(\state_0) - \Valfun{\MDP\cup\hat{\reward}_{\episode'}}{*}{\timestep}(\state_0)
= \Valfun{\MDP\cup\hat{\reward}_{\episode'}}{\Policy}{\timestep}(\state_0) - \Valfun{\MDP\cup\hat{\reward}_{\episode'}}{\hat{\Policy}^*_\episode}{\timestep}(\state_0) \\
=& \underbrace{\Valfun{\MDP\cup\hat{\reward}_{\episode'}}{\Policy}{\timestep}(\state_0) - \Valfun{\MDP\cup\hat{\reward}_{\episode}}{\Policy}{\timestep}(\state_0)}_{\overset{(a)}{\leq} 4 \epsilon_\episode}
+ \underbrace{\Valfun{\MDP\cup\hat{\reward}_{\episode}}{\Policy}{\timestep}(\state_0) - \Valfun{\EstMDP\cup\hat{\reward}_{\episode}}{\Policy}{\timestep}(\state_0)}_{\overset{(b)}{\leq} \epsilon_\episode} \\
&+ \underbrace{\Valfun{\EstMDP\cup\hat{\reward}_{\episode}}{\Policy}{\timestep}(\state_0) - \Valfun{\EstMDP\cup\hat{\reward}_\episode}{\hat{\Policy}^*_\episode}{\timestep}(\state_0)}_{\overset{(c)}{>} 10 \epsilon_\episode}
+ \underbrace{\Valfun{\EstMDP\cup\hat{\reward}_\episode}{\hat{\Policy}^*_\episode}{\timestep}(\state_0) - \Valfun{\MDP\cup\hat{\reward}_\episode}{\hat{\Policy}^*_\episode}{\timestep}(\state_0)}_{\overset{(b)}{\leq} \epsilon_\episode} \\
&+ \underbrace{\Valfun{\MDP\cup\hat{\reward}_\episode}{\hat{\Policy}^*_\episode}{\timestep}(\state_0) - \Valfun{\MDP\cup\hat{\reward}_{\episode'}}{\hat{\Policy}^*_\episode}{\timestep}(\state_0)}_{\overset{(a)}{\leq} 4 \epsilon_\episode}
> 0
\end{align*}
where we applied (a) \Cref{lemma_policyci}, (b) \Cref{policyci_lemma_2}, and (c) the definition of $\policyci_\episode$ and the fact that $\Policy \notin \policyci_\episode$. Consequently, $\Policy$ is suboptimal for at least some reward function $\hat{\reward}_{\episode'} \in \RecoveredFeasibleSetIterPrime$.
\end{proof}

\begin{corollary}\label{cor:policyci_sound}
For $\epsilon_0 = \frac{\Horizon}{10}$, for every $\episode\geq0$ it holds that both $\Policy^*, \hat{\Policy}^*_{\episode+1} \in \policyci_\episode$.
\end{corollary}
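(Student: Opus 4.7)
The plan is a straightforward induction on $\episode$. For the base case $\episode = 0$, the initialization $\epsilon_0 = \Horizon/10$ yields $10\epsilon_0 = \Horizon$, which dominates any possible difference of two value functions at $\state_0$ in a horizon-$\Horizon$ MDP (under the paper's implicit normalization $\Rmax\le 1$). Hence $\policyci_0$ vacuously contains every policy, in particular both $\Policy^*$ and $\hat{\Policy}^*_1$, which establishes the base case.

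For the inductive step I will assume $\Policy^*,\hat{\Policy}^*_\episode\in\policyci_{\episode-1}$ and derive $\Policy^*,\hat{\Policy}^*_{\episode+1}\in\policyci_\episode$. The first inclusion, $\Policy^*\in\policyci_\episode$, follows immediately from \Cref{policyci_lemma_1}, whose hypothesis is exactly the inductive assumption. The substantive task is the second inclusion, for which I need to show
\[
\Valfun{\EstMDP\cup\hat{\reward}_\episode}{*}{}(\state_0) - \Valfun{\EstMDP\cup\hat{\reward}_\episode}{\hat{\Policy}^*_{\episode+1}}{}(\state_0) \;\le\; 10\epsilon_\episode.
\]
My plan is to telescope this gap along a chain that first moves the evaluation point, under the optimal-policy witness, from $(\EstMDP_\episode,\hat{\reward}_\episode)$ to $(\MDP,\hat{\reward}_\episode)$, then to $(\MDP,\hat{\reward}_{\episode+1})$, then to $(\EstMDP_{\episode+1},\hat{\reward}_{\episode+1})$; swaps the policy from the optimal one to $\hat{\Policy}^*_{\episode+1}$ for free at $(\EstMDP_{\episode+1},\hat{\reward}_{\episode+1})$ by optimality of the latter there; and finally returns along the symmetric path to $(\EstMDP_\episode,\hat{\reward}_\episode)$ with the policy pinned to $\hat{\Policy}^*_{\episode+1}$. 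Model-swap hops contribute at most $\epsilon_\episode$ each by \Cref{lemma_policyci} (using $\epsilon_{\episode+1}\le\epsilon_\episode$ from monotonicity of the confidence intervals); reward-swap hops between $\hat{\reward}_\episode$ and $\hat{\reward}_{\episode+1}$ contribute at most $4\epsilon_\episode$ each by \Cref{policyci_lemma_2}; and the factor $10$ in \Cref{def:policyci2} is precisely what is required to absorb the accumulated constants.

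The main obstacle I anticipate is that \Cref{lemma_policyci} carries a hidden precondition: its bound of $\epsilon_\episode$ on the occupancy-weighted confidence-interval sum only holds when the policy driving the occupancy lies in $\policyci_{\episode-1}$. For the outward hops under the optimal-policy witness this role is played by $\hat{\Policy}^*_\episode$ and $\Policy^*$, both available from the inductive hypothesis. For the return hops with $\hat{\Policy}^*_{\episode+1}$ pinned I cannot invoke the fixed-policy form of \Cref{lemma_policyci}; instead I will route those model swaps through the ``optimal-value'' form of that lemma (whose derivation via \Cref{dynamics_simulation_lemma_2} requires only the optimal policies of the two MDPs involved to lie in $\policyci_{\episode-1}$, again supplied by the inductive hypothesis) and rely on \Cref{policyci_lemma_2} for the reward swap, which is valid for an arbitrary fixed policy. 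Careful bookkeeping along the telescoping chain then confirms that the constants sum to at most $10$, completing the induction.
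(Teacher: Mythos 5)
Your proposal takes essentially the same route as the paper: the same induction with the same base case, \Cref{policyci_lemma_1} for the $\Policy^*$ half, and for the $\hat{\Policy}^*_{\episode+1}$ half the same five-hop chain of model swaps (\Cref{lemma_policyci}, $\epsilon_\episode$ each) and reward swaps (\Cref{policyci_lemma_2}, $4\epsilon_\episode$ each) summing to $10\epsilon_\episode$ --- you merely run the chain as a direct telescoping bound where the paper runs it in reverse as a proof by contradiction via the unnamed lemma immediately preceding the corollary. The one delicate step you flag, namely controlling the occupancy-weighted confidence sum for $\hat{\Policy}^*_{\episode+1}$ before its membership in $\policyci_{\episode-1}$ is established, is present in the paper's own proof as well (it simply asserts ``note that $\hat{\Policy}^*_{\episode+1}\in\policyci_{\episode-1}$''), so your treatment is no less rigorous than the original.
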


\begin{proof}
We show the statement by induction over $\episode$. For $\episode = 0$, we have $10\epsilon_0 = \Horizon$ and therefore $\policyci_0$ contains all policies. Assume that for $\episode-1$ the statement holds, i.e., $\Policy^*, \hat{\Policy}^*_{\episode} \in \policyci_{\episode-1}$, and consider $\episode$. By \Cref{policyci_lemma_1}, $\Policy^* \in \policyci_\episode$. Note, that $\hat{\Policy}^*_{\episode+1} \in \policyci_{\episode-1}$. Hence, by \Cref{policyci_lemma_2}, it follows that $\hat{\Policy}^*_{\episode+1} \in \policyci_\episode$ because it would be suboptimal otherwise which is a contradiction.
\end{proof}

The last result we need, is quantifying the size of the policy confidence set.
\begin{lemma}\label{policyci_lemma_3}
Under the good event $\GoodEvent$, let $\tilde{\reward} \in \argmin_{\reward\in\ExactFeasibleSet} \max_{\state,\action} (\reward(\state,\action) - \hat{\reward}_\episode(\state,\action))$, where $\hat{\reward}_\episode = \irlalg(\RecoveredFeasibleSetIter)$. If $\Policy \in \policyci_\episode$, then $\max_\state (\Valfun{\EstMDP\cup\tilde{\reward}}{*}{\timestep}(\state) - \Valfun{\EstMDP\cup\tilde{\reward}}{\Policy}{\timestep}(\state)) \leq 12 \epsilon_\episode$.
\end{lemma}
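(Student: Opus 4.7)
The plan is to pivot through the estimated reward $\hat{\reward}_\episode$ and split the target value-gap at $(\state_0, 0)$ as
\begin{align*}
\Valfun{\EstMDP\cup\tilde{\reward}}{*}{0}(\state_0) - \Valfun{\EstMDP\cup\tilde{\reward}}{\Policy}{0}(\state_0) = (A) + (B) + (C),
\end{align*}
where
\begin{align*}
(A) &= \Valfun{\EstMDP\cup\tilde{\reward}}{*}{0}(\state_0) - \Valfun{\EstMDP\cup\hat{\reward}_\episode}{*}{0}(\state_0), \\
(B) &= \Valfun{\EstMDP\cup\hat{\reward}_\episode}{*}{0}(\state_0) - \Valfun{\EstMDP\cup\hat{\reward}_\episode}{\Policy}{0}(\state_0), \\
(C) &= \Valfun{\EstMDP\cup\hat{\reward}_\episode}{\Policy}{0}(\state_0) - \Valfun{\EstMDP\cup\tilde{\reward}}{\Policy}{0}(\state_0).
\end{align*}
The middle piece is handled immediately: the hypothesis $\Policy \in \policyci_\episode$ and \Cref{def:policyci2} give $(B) \le 10 \epsilon_\episode$. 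The two outer "reward-swap" pieces will each contribute at most $\epsilon_\episode$, for the advertised total of $12\epsilon_\episode$.

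For the outer pieces I would use the pointwise reward bound $|\tilde{\reward}_\timestep(\state',\action') - \hat{\reward}_{\episode,\timestep}(\state',\action')| \le \rewardci_\episode^\timestep(\state',\action')$, which holds uniformly in $(\timestep,\state',\action')$ because $\tilde{\reward}$ is the $\argmin$ over $\ExactFeasibleSet$ of the max-gap to $\hat{\reward}_\episode \in \RecoveredFeasibleSetIter$ and because \Cref{thm:error_propagation} and \Cref{cor:rewarci} guarantee some feasible reward achieving $\rewardci_\episode^\timestep$ at every triple. Term $(C)$ is same-policy, same-MDP, different-reward, so \Cref{qfun_occupancy_lemma} gives directly
\[
(C) = \sum_{\timestep,\state',\action'} \Occupancy{\EstMDP}{\Policy}{0}{\timestep}(\state',\action' \mid \state_0)\bigl(\hat{\reward}_{\episode,\timestep}(\state',\action') - \tilde{\reward}_\timestep(\state',\action')\bigr) \le \sum_{\timestep,\state',\action'} \Occupancy{\EstMDP}{\Policy}{0}{\timestep}(\state',\action'\mid\state_0)\,\rewardci_\episode^\timestep(\state',\action').
\]
For $(A)$, I would introduce $\tilde{\Policy}^*$, the optimal policy of $\EstMDP\cup\tilde{\reward}$, and use the optimality of $\hat{\Policy}^*_\episode$ in $\EstMDP\cup\hat{\reward}_\episode$ to get $(A) \le \Valfun{\EstMDP\cup\tilde{\reward}}{\tilde{\Policy}^*}{0}(\state_0) - \Valfun{\EstMDP\cup\hat{\reward}_\episode}{\tilde{\Policy}^*}{0}(\state_0)$; another application of \Cref{qfun_occupancy_lemma} then yields the analogous occupancy-weighted bound with $\Occupancy{\EstMDP}{\tilde{\Policy}^*}{0}{\timestep}$ in place of $\Occupancy{\EstMDP}{\Policy}{0}{\timestep}$.

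To close the argument, each of these occupancy-weighted sums must be at most $\epsilon_\episode$, which by the $\max_{\Policy'\in\policyci_{\episode-1}}$ in \Cref{def:policyci2} requires both $\Policy$ and $\tilde{\Policy}^*$ to lie in $\policyci_{\episode-1}$. Establishing these two inclusions is where I expect the real work: the confidence set varies across iterations through both $\hat{\reward}_\episode$ and the threshold $\epsilon_\episode$, so neither membership is immediate. For $\Policy$, I would prove the monotonicity $\policyci_\episode \subseteq \policyci_{\episode-1}$ using the argument behind \Cref{policyci_lemma_2}: any policy excluded from $\policyci_{\episode-1}$ is provably suboptimal for every $\hat{\reward}_{\episode'}\in\RecoveredFeasibleSetIterPrime$ with $\episode'\ge\episode$, and hence cannot be in $\policyci_\episode$ either. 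For $\tilde{\Policy}^*$, I would mirror the induction of \Cref{cor:policyci_sound}: because $\tilde{\Policy}^*$ is optimal for the \emph{feasible} reward $\tilde{\reward}$, the decomposition $\Valfun{\EstMDP\cup\hat{\reward}_{\episode-1}}{*}{0} - \Valfun{\EstMDP\cup\hat{\reward}_{\episode-1}}{\tilde{\Policy}^*}{0}$ telescopes through $\tilde{\reward}$ with a zero middle term, and the uniform bound $|\tilde{\reward} - \hat{\reward}_{\episode-1}|\le\rewardci_{\episode-1}$ combined with \Cref{qfun_occupancy_lemma} bounds the outer two pieces by $\epsilon_{\episode-1}$ each, giving a value-gap of at most $2\epsilon_{\episode-1} \le 10\epsilon_{\episode-1}$. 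Once both inclusions are in hand, $(A) + (B) + (C) \le \epsilon_\episode + 10\epsilon_\episode + \epsilon_\episode = 12\epsilon_\episode$, as required.
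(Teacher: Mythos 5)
Your decomposition is exactly the paper's proof: the same three-term split through $\hat{\reward}_\episode$, with the middle term bounded by $10\epsilon_\episode$ from \Cref{def:policyci2} and each outer reward-swap term bounded by $\epsilon_\episode$ via occupancy-weighted sums of $\rewardci_\episode^\timestep$ (the paper's displayed conclusion reads $14\epsilon_\episode$, an apparent typo, since its own underbraces sum to the stated $12\epsilon_\episode$). The one place you go beyond the paper --- insisting that the outer bounds require $\Policy$ and $\tilde{\Policy}^*$ to lie in $\policyci_{\episode-1}$ --- flags a genuine subtlety the paper silently skips, though your proposed route to $\Policy\in\policyci_{\episode-1}$ via a monotonicity $\policyci_\episode\subseteq\policyci_{\episode-1}$ is not actually delivered by the cited lemmas (the unnamed lemma only shows that policies \emph{dropped} from the set are suboptimal for some later recovered reward, not that the sets are nested, and the $4\epsilon_\episode$ perturbation bounds are too weak to force nesting in general). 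Since the paper itself asserts those outer bounds without this work, your proposal matches, and in rigor slightly exceeds, the published argument.
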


\begin{proof}
\begin{align*}
&\Valfun{\EstMDP\cup\tilde{\reward}}{*}{\timestep}(\state) - \Valfun{\EstMDP\cup\tilde{\reward}}{\Policy}{\timestep}(\state)
= \underbrace{\Valfun{\EstMDP\cup\tilde{\reward}}{*}{\timestep}(\state) - \Valfun{\EstMDP\cup\hat{\reward}_\episode}{*}{\timestep}(\state)}_{\leq \epsilon_\episode}
+ \underbrace{\Valfun{\EstMDP\cup\hat{\reward}_\episode}{*}{\timestep}(\state) - \Valfun{\EstMDP\cup\hat{\reward}_\episode}{\Policy}{\timestep}(\state)}_{\leq 10 \epsilon_\episode}
+ \underbrace{\Valfun{\EstMDP\cup\hat{\reward}_\episode}{\Policy}{\timestep}(\state) - \Valfun{\EstMDP\cup\tilde{\reward}}{\Policy}{\timestep}(\state)}_{\leq \epsilon_\episode} \epsilon_\episode
\leq 14 \epsilon_\episode
\end{align*}
\end{proof}

Next, we define the error upper bound based on the policy confidence set.
\begin{definition}
Using $\policyci_\episode$, we define recursively:
\begin{align*}
&\hat{E}_\episode^\Horizon(\state, \action) = 0\\
&\hat{E}_\episode^\timestep(\state, \action) = \min\Bigl((\Horizon-\timestep)\Rmax, \rewardci_\episode^\timestep(\state, \action) + \sum_{\state'} \EstTransitionModel(\state' | \state, \action) \max_{\Policy\in\policyci_{\episode-1}} \Policy(\action'|\state') \hat{E}_\episode^{\timestep+1}(\state', \action') \Bigr)
\end{align*}
where $\EstTransitionModel$ is the estimated transition model of the environment. In contrast to \Cref{def:ucrl_error}, the maximization is over policies in $\policyci_\episode$ rather than all actions.
\end{definition}

This definition allows us to derive results that are analogous to the problem independent case.

\begin{lemma}\label{lemma:ucrl_error_estmdp_dependent}
Under the good event $\GoodEvent$, for all policies $\Policy \in \policyci_\episode$ and reward functions $\reward$ and all $\state,\action \in \StateSpace \times \ActionSpace$:
\[
| \Qfun{\EstMDP\cup\reward}{\Policy}{\timestep}(\state, \action) - \Qfun{\MDP\cup\reward}{\Policy}{\timestep}(\state, \action) | \leq \hat{E}_\episode^\timestep(\state, \action)
\]
\end{lemma}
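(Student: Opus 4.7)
The plan is to prove this by backward induction on $\timestep$, following the template of \Cref{lemma:ucrl_error_estmdp} but upgraded to accommodate the policy-confidence-based bound $\hat{E}$ instead of the action-maximum bound $E$. The base case $\timestep = \Horizon$ is immediate, since both Q-functions vanish and $\hat{E}_\episode^\Horizon \equiv 0$.

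For the inductive step, I would start from the Bellman equation for $\Qfun{\EstMDP\cup\reward}{\Policy}{\timestep}$ and $\Qfun{\MDP\cup\reward}{\Policy}{\timestep}$, add and subtract the cross-term $\sum_{\state'} \EstTransitionModel(\state'|\state,\action) \sum_{\action'} \Policy(\action'|\state') \Qfun{\MDP\cup\reward}{\Policy}{\timestep+1}(\state',\action')$, and apply the triangle inequality to split the difference into (i) a transition-estimation error
\[
\Bigl|\sum_{\state'}\bigl(\EstTransitionModel(\state'|\state,\action)-\TransitionModel(\state'|\state,\action)\bigr)\Valfun{\MDP\cup\reward}{\Policy}{\timestep+1}(\state')\Bigr|,
\]
which the good event (\Cref{lemma:good_event}) bounds by $\rewardci_\episode^\timestep(\state,\action)$ after noting that $\sum_{\action'}\Policy(\action'|\state')\Qfun{\MDP\cup\reward}{\Policy}{\timestep+1}(\state',\action') = \Valfun{\MDP\cup\reward}{\Policy}{\timestep+1}(\state')$, plus (ii) a propagation term $\sum_{\state'} \EstTransitionModel(\state'|\state,\action) \sum_{\action'} \Policy(\action'|\state') \bigl|\Qfun{\EstMDP\cup\reward}{\Policy}{\timestep+1}(\state',\action') - \Qfun{\MDP\cup\reward}{\Policy}{\timestep+1}(\state',\action')\bigr|$.

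The induction hypothesis bounds the inner absolute value in the propagation term by $\hat{E}_\episode^{\timestep+1}(\state',\action')$. The key step, which distinguishes this proof from the problem-independent version, is then to observe that $\Policy$ lies in the relevant policy confidence set (by hypothesis, together with the nestedness/monotonicity implicit in \Cref{policyci_lemma_1} and \Cref{cor:policyci_sound}), so $\sum_{\action'}\Policy(\action'|\state')\hat{E}_\episode^{\timestep+1}(\state',\action') \le \max_{\Policy'\in\policyci_{\episode-1}}\sum_{\action'}\Policy'(\action'|\state')\hat{E}_\episode^{\timestep+1}(\state',\action')$. Substituting this into (ii) and combining with (i) gives exactly the recursive expression defining $\hat{E}_\episode^\timestep(\state,\action)$. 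The clip by $(\Horizon-\timestep)\Rmax$ is trivial from the boundedness of both Q-functions, so the minimum in the definition of $\hat{E}_\episode^\timestep$ poses no difficulty.

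The main obstacle is bookkeeping around which policy confidence set the policy $\Policy$ belongs to: the statement requires $\Policy \in \policyci_\episode$ whereas the recursion maximizes over $\policyci_{\episode-1}$, so one needs to invoke the earlier lemmas establishing containment between consecutive confidence sets to legitimately dominate $\Policy$'s contribution by the max over $\policyci_{\episode-1}$. Once this containment is in place, everything else is a straightforward mirror of the proof of \Cref{lemma:ucrl_error_estmdp}.
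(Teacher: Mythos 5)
Your proof follows exactly the paper's route: the paper's own proof of this lemma is literally ``the same as for \Cref{lemma:ucrl_error_estmdp}, restricting the set of policies to the confidence set,'' and your Bellman decomposition, good-event bound on the transition term, and induction with the action-max replaced by the max over $\policyci_{\episode-1}$ is precisely that argument spelled out. One caveat on the step you flag as the main obstacle: \Cref{policyci_lemma_1} and \Cref{cor:policyci_sound} do not establish the containment $\policyci_\episode \subseteq \policyci_{\episode-1}$ (they only show that the specific policies $\Policy^*$ and $\hat{\Policy}^*_{\episode+1}$ are retained across iterations, and since $\hat{\reward}_\episode$ changes between iterations the sets need not be nested), so the clean way to close this is to state and prove the bound for $\Policy \in \policyci_{\episode-1}$ — which is all that the recursion defining $\hat{E}_\episode^\timestep$ can support and all that the downstream sample-complexity argument uses.
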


\begin{proof}
The proof is the same as for \Cref{lemma:ucrl_error_estmdp}, restricting the set of policies to $\policyci_\episode$.
\end{proof}

\begin{lemma}\label{lemma:ucrl_error_estreward_dependent}
Under the good event $\GoodEvent$, for all reward function $\reward$, all policies $\Policy \in \policyci_\episode$, and all $\state,\action \in \StateSpace \times \ActionSpace$:
\[
| \Qfun{\EstMDP\cup\hat{\reward}}{\Policy}{\timestep}(\state, \action) - \Qfun{\EstMDP\cup\reward}{\Policy}{\timestep}(\state, \action) | \leq \hat{E}_\episode^\timestep(\state, \action)
\]
\end{lemma}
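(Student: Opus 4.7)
The plan is to proceed by backward induction on $\timestep$, following the same structure as the earlier companion lemma for $E_\episode^\timestep$ (\Cref{lemma:ucrl_error_estreward}), but now exploiting the sharper policy-confidence-set maximum that appears in the definition of $\hat{E}_\episode^\timestep$.

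The base case $\timestep = \Horizon$ is immediate: both Q-functions vanish at the horizon and $\hat{E}_\episode^\Horizon(\state,\action) = 0$ by definition. For the inductive step, assume the bound holds at $\timestep+1$ for every state-action pair. Since both MDPs $\EstMDP\cup\hat{\reward}$ and $\EstMDP\cup\reward$ share the same (estimated) transition model $\EstTransitionModel$, applying the Bellman equation to each and subtracting gives
\begin{align*}
\Qfun{\EstMDP\cup\hat{\reward}}{\Policy}{\timestep}(\state,\action) - \Qfun{\EstMDP\cup\reward}{\Policy}{\timestep}(\state,\action)
&= \bigl(\hat{\reward}_\timestep(\state,\action) - \reward_\timestep(\state,\action)\bigr) \\
&\quad + \sum_{\state'} \EstTransitionModel(\state'|\state,\action) \sum_{\action'} \Policy_{\timestep+1}(\action'|\state') \bigl[ \Qfun{\EstMDP\cup\hat{\reward}}{\Policy}{\timestep+1}(\state',\action') - \Qfun{\EstMDP\cup\reward}{\Policy}{\timestep+1}(\state',\action') \bigr].
\end{align*}
Taking absolute values and applying the triangle inequality, I would then bound the reward-gap term by $\rewardci_\episode^\timestep(\state,\action)$ using \Cref{cor:rewarci} (this is where the good event $\GoodEvent$ enters), and the inner Q-gap by $\hat{E}_\episode^{\timestep+1}(\state',\action')$ via the induction hypothesis.

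The remaining step, which is the only genuine difference from the problem-independent version, is to collapse $\sum_{\action'} \Policy_{\timestep+1}(\action'|\state')\hat{E}_\episode^{\timestep+1}(\state',\action')$ into the maximum $\max_{\Policy' \in \policyci_{\episode-1}} \sum_{\action'}\Policy'(\action'|\state')\hat{E}_\episode^{\timestep+1}(\state',\action')$ that appears in the recursive definition of $\hat{E}_\episode^\timestep$. Since $\Policy$ lies in the relevant policy confidence set used to define $\hat{E}_\episode$, this upper bound is immediate from monotonicity of the maximum. Finally, the trivial bound $|\Qfun{\EstMDP\cup\hat{\reward}}{\Policy}{\timestep}(\state,\action) - \Qfun{\EstMDP\cup\reward}{\Policy}{\timestep}(\state,\action)| \leq (\Horizon - \timestep)\Rmax$ (both Q-functions lie in $[0,(\Horizon-\timestep)\Rmax]$) lets us intersect with the $(\Horizon-\timestep)\Rmax$ cap in the $\min$, recovering exactly $\hat{E}_\episode^\timestep(\state,\action)$ and closing the induction.

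The main obstacle I anticipate is purely bookkeeping around the policy confidence set: one must ensure that whichever policy $\Policy$ we apply the lemma to is actually contained in the set $\policyci_{\episode-1}$ over which the max is taken inside $\hat{E}_\episode$, so that the final upper-bounding step is justified. This is exactly the role played by \Cref{cor:policyci_sound}, which guarantees that both $\Policy^*$ and $\hat{\Policy}^*_\episode$, the policies we actually care about, lie in the appropriate confidence set under $\GoodEvent$. Apart from this, the argument is a routine analogue of \Cref{lemma:ucrl_error_estreward}.
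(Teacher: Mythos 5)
Your proposal is correct and follows essentially the same route as the paper, which simply notes that the argument is identical to the problem-independent version (\Cref{lemma:ucrl_error_estreward}) with the action-maximum replaced by the maximum over the policy confidence set; your backward induction, use of \Cref{cor:rewarci} under $\GoodEvent$, and the $(\Horizon-\timestep)\Rmax$ cap all match. You also correctly flag the one genuine subtlety the paper glosses over --- that the max in the definition of $\hat{E}_\episode^\timestep$ ranges over $\policyci_{\episode-1}$ while the lemma is stated for $\Policy\in\policyci_\episode$ --- and resolve it the way the paper implicitly does, via \Cref{cor:policyci_sound} for the policies that actually matter downstream.
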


\begin{proof}
The proof is the same as for \Cref{lemma:ucrl_error_estreward}, restricting the set of policies to $\policyci_\episode$.
\end{proof}

\begin{lemma}\label{lemma:ucrl_error_real_transitions_dependent}
Under the good event $\GoodEvent$, we have for any $\state,\action,\timestep:$
\[
\hat{E}_\episode^{\timestep}(\state, \action)
\leq 2 \rewardci_\episode^\timestep(\state,\action) + \sum_{\state'} \TransitionModel(\state'|\state,\action) \max_{\Policy\in\policyci_{\episode-1}} \Policy(\action'|\state') \hat{E}_\episode^{\timestep+1}(\state', \action')
\]
\end{lemma}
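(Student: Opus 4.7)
The plan is to mirror the proof of \Cref{lemma:ucrl_error_real_transitions} (the problem-independent analogue), substituting the maximum over policies in $\policyci_{\episode-1}$ for the maximum over actions. Concretely, define
\[
g(\state') \coloneqq \max_{\Policy \in \policyci_{\episode-1}} \sum_{\action'} \Policy(\action'|\state') \hat{E}_\episode^{\timestep+1}(\state', \action').
\]
From the $(\Horizon-\timestep-1)\Rmax$ cap in the definition of $\hat{E}_\episode^{\timestep+1}$ (the first argument of the min), we have $0 \leq g(\state') \leq (\Horizon-\timestep-1)\Rmax \leq (\Horizon-\timestep)\Rmax$, so $g$ is a nonnegative state-only function of the same order of magnitude as a value function at step $\timestep$.

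Next, I would invoke the second argument of the min in the definition of $\hat{E}_\episode^\timestep$ to write
\[
\hat{E}_\episode^\timestep(\state,\action) \leq \rewardci_\episode^\timestep(\state,\action) + \sum_{\state'} \EstTransitionModel(\state'|\state,\action) g(\state'),
\]
and then add and subtract the true transition kernel:
\[
\sum_{\state'} \EstTransitionModel(\state'|\state,\action) g(\state') = \sum_{\state'} \bigl(\EstTransitionModel(\state'|\state,\action) - \TransitionModel(\state'|\state,\action)\bigr) g(\state') + \sum_{\state'} \TransitionModel(\state'|\state,\action) g(\state').
\]

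The crux is to bound the cross term by $\rewardci_\episode^\timestep(\state,\action)$. Since $g$ is bounded by $(\Horizon-\timestep)\Rmax$ and does not depend on $\action'$, exactly the same Hoeffding-based argument used for genuine value functions in \Cref{lemma:good_event} applies to $g$, giving, under the good event $\GoodEvent$,
\[
\Bigl| \sum_{\state'} \bigl(\EstTransitionModel(\state'|\state,\action) - \TransitionModel(\state'|\state,\action)\bigr) g(\state') \Bigr| \leq (\Horizon-\timestep)\Rmax \sqrt{\tfrac{2\ell_\episode^\timestep(\state,\action)}{(n_\episode^\timestep)^+(\state,\action)}} \leq \rewardci_\episode^\timestep(\state,\action),
\]
where the final step uses the factor-of-$2$ slack built into the definition of $\rewardci_\episode^\timestep(\state,\action)$ (\Cref{def:rewardci}). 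Combining the pieces yields the claim.

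The only delicate point, and hence the main obstacle, is justifying that the good-event Hoeffding bound applies to $g$. Formally, $g$ is neither a $V$-function of a fixed policy nor of an optimal policy for a fixed reward, but a pointwise-maximum over the policy confidence set $\policyci_{\episode-1}$. Because $\policyci_{\episode-1}$ is determined by data from strictly earlier iterations (through $\rewardci_{\episode-1}$ and $\hat\reward_{\episode-1}$), the standard martingale/union-bound construction in \Cref{lemma:good_event} still goes through: $g$ is a bounded, previously-measurable, state-indexed function, which is all Hoeffding requires. Once this measurability remark is in place, the remainder of the argument is a line-by-line adaptation of \Cref{lemma:ucrl_error_real_transitions}.
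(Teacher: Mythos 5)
Your proposal is correct and follows essentially the same route as the paper: the paper proves this lemma by declaring it a line-by-line copy of the proof of \Cref{lemma:ucrl_error_real_transitions} (add and subtract the true kernel, bound the cross term by $\rewardci_\episode^\timestep$ under the good event, using that the term multiplying the kernel difference is a nonnegative state-only function bounded by $(\Horizon-\timestep)\Rmax$). Your extra remark on why the good-event bound applies to the pointwise maximum over $\policyci_{\episode-1}$ is a reasonable piece of additional care that the paper itself elides.
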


\begin{proof}
The proof is the same as for \Cref{lemma:ucrl_error_real_transitions_dependent}.
\end{proof}

Finally, we can combine these results to analyze the algorithm's sample complexity.

\SampleComplexityProblemDependent*

\begin{proof}
First note that the analysis of \Cref{thm:simple_sample_complexity} still applies; so, in the worst case we get the same sample complexity. The key difference is that we no longer use the overall greedy policy w.r.t $E_\episode^\timestep$, but restrict ourselves to policies in $\policyci_\episode$.

Again, we consider the error
\[
e_\episode^{\Policy,\timestep}(\state, \action) \coloneqq | \Qfun{\MDP\cup\reward}{\Policy^*}{\timestep}(\state, \action) - \Qfun{\MDP\cup\reward}{\hat{\Policy}^*}{\timestep}(\state, \action) |
\]
where $\Policy^*$ is the true optimal policy in $\MDP\cup\reward$, and $\hat{\Policy}^*$ is the optimal policy in $\EstMDP\cup\hat{\reward}$, i.e., in the estimated MDP using the inferred reward function.

Similar, to the proof of \Cref{aceirl_correctness}, we can use \Cref{lemma:ucrl_error_estmdp_dependent} and \Cref{lemma:ucrl_error_estreward_dependent} to show for all policies $\Policy\in\policyci_\episode^\timestep$, that:
\[
e_\episode^{\Policy,\timestep}(\state, \action) \leq 4 \hat{E}_\episode^{\timestep}(\state, \action)
\]
which implies the correctness of the algorithm according to \Cref{cor:uniform_sampling_stopping} when stopping at
\begin{align}
\hat{E}_\episode^0(\state_0, \Policy_{\episode+1}(\state_0)) \leq \frac{\epsilon}{4}
\label{stopping_condition_1}
\end{align}
Now, consider the following condition for all $\state,\action,\timestep$:
\begin{align}
\rewardci^\timestep_\episode(\state, \action) \leq - A_{\MDP\cup\tilde{\reward}}^{*,\timestep}(\state, \action) \frac{\epsilon}{48 \epsilon_{\episode-1}}, \label{stopping_condition_2}
\end{align}
where $\tilde{\reward} \in \argmin_{\reward\in\ExactFeasibleSet} \max_{\timestep,\state,\action} (\reward_\timestep(\state,\action) - \hat{\reward}_{\episode,\timestep}(\state,\action))$.
We will (a) show that when this condition holds the previous stopping condition also holds, and (b) analyze after how many iterations this condition will certainly hold. Together this will yield the result.

To show that \Cref{stopping_condition_2} implies \Cref{stopping_condition_1}, we assume that \Cref{stopping_condition_2} holds. Then, we get by applying \Cref{lemma:ucrl_error_real_transitions_dependent} recursively:
\begin{align*}
\hat{E}_\episode^0(\state_0, \Policy_{\episode+1}(\state_0))
&\leq 2 \max_{\Policy \in \policyci_{\episode-1}} \max_{\action} \sum_{\timestep=0}^{\Horizon} \sum_{\state',\action'}  \Occupancy{\MDP}{\Policy}{0}{\timestep}(\state',\action';\state_0,\action) \rewardci^{\timestep}_\episode(\state',\action') \\
&\leq 2 \max_{\Policy \in \policyci_{\episode-1}} \max_{\action} \sum_{\timestep=0}^{\Horizon} \sum_{\state',\action'}  \Occupancy{\MDP}{\Policy}{0}{\timestep}(\state',\action';\state_0,\action) \left( - A_{\MDP\cup\tilde{\reward}}^{*,\timestep}(\state', \action') \frac{\epsilon}{48 \epsilon_{\episode-1}} \right) \\
&\overset{(a)}{\leq} 2 \max_{\Policy \in \policyci_{\episode-1}} (\Valfun{\MDP\cup\reward}{*}{0}(\state_0) - \Valfun{\MDP\cup\reward}{\Policy}{0}(\state_0) ) \frac{\epsilon}{48 \epsilon_{\episode-1}} \overset{(b)}{\leq} \frac{\epsilon}{4}
\end{align*}
where (a) uses \Cref{lemma:sum_of_losses} and (b) uses \Cref{policyci_lemma_3}.

Next, we analyze after how many iterations \Cref{stopping_condition_2} holds, which will give a lower bound on the sample complexity result. The argument proceeds similar to the proof of \Cref{thm:simple_sample_complexity}.

Before the algorithm terminates at $\tau$, we have for all $\episode < \tau$:
\begin{align*}
\min_{\state,\action,\timestep} (- A_{\EstMDP\cup\tilde{\reward}}^{*,\timestep}(\state, \action)) \frac{\epsilon}{48 \epsilon_{\episode-1}} < \max_{\state,\action,\timestep} \rewardci_\episode^\timestep(\state,\action) \leq \Horizon \Rmax \sqrt{\frac{2 \ell^\timestep_\episode(\state, \action)}{\max({N_\episode^\timestep}(\state,\action), )}}
\end{align*}
Using similar argument to the proof of \Cref{thm:simple_sample_complexity}, using the same pseudo-counts, we arrive at:
\begin{align*}
\min_{\state,\action,\timestep} (- A_{\MDP\cup\tilde{\reward}}^{*,\timestep}(\state, \action)) \frac{\epsilon}{48 \epsilon_{\tau-1}} \sqrt{\tau+1} \leq \Horizon \Rmax \sqrt{8 \StateSpaceSize\ActionSpaceSize \log(12 \StateSpaceSize \ActionSpaceSize \Horizon \tau^2 / \delta)}
\end{align*}
Again, we can use Lemma 15 by \cite{kaufmann2021adaptive} to find that
\[
\tau \leq \tilde{\mathcal{O}}\left(
\frac{\Horizon^3 \Rmax^2 \StateSpaceSize \ActionSpaceSize \epsilon_{\tau-1}^2}{\min_{\state,\action,\timestep} (A_{\MDP\cup\tilde{\reward}}^{*,\timestep}(\state, \action))^2 \epsilon^2} \right)
\]
\end{proof}

\subsection{Computing the Exploration Policy}
\label{app:optimization_problem}

To run \AlgNameShort, we need to solve the optimization problem:
\[
\pi_\episode^\timestep = \min_{\Policy} \max_{\hat{\Policy} \in \policyci_{\episode-1}} \sum_{\timestep=0}^{\Horizon} \sum_{\state',\action'} \Occupancy{\EstMDP}{\hat{\Policy}}{0}{\timestep}(\state',\action';\state_0) \hat{\rewardci}^{\timestep}_\episode(\state',\action' | \Policy)
\]

For simplicity let us denote the state visitation frequencies by
\begin{align*}
\mu_\timestep(\state,\action) &\coloneqq \Occupancy{\EstMDP}{\Policy}{0}{\timestep}(\state,\action;\state_0) \\
\hat{\mu}_\timestep(\state,\action) &\coloneqq \Occupancy{\EstMDP}{\hat{\Policy}}{0}{\timestep}(\state,\action;\state_0)
\end{align*}

Let us introduce the following matrix notation
\begin{align*}
\tilde{A} = \begin{bmatrix}
\identity & 0 & 0 & 0 & \dots & 0 \\
\EstTransitionModel & -\identity & 0 & 0 & \dots & 0 \\
0 & \EstTransitionModel & -\identity & 0 & \dots & 0 \\
& & \dots & & & \\
0 & 0 & \dots & 0 & \EstTransitionModel & -\identity \\
1 & 0 & 0 & \dots & 0 & 0 \\
0 & 1 & 0 & \dots & 0 & 0 \\
& & \dots & & & \\
0 & 0 & 0 & \dots & 1 & 0 \\
0 & 0 & 0 & \dots & 0 & 1
\end{bmatrix},
\hspace{1cm}
a = \begin{bmatrix}
\hat{\reward}_{\episode-1}^0 \\
\hat{\reward}_{\episode-1}^1 \\
\dots \\
\hat{\reward}_{\episode-1}^\Horizon
\end{bmatrix},
\hspace{1cm}
A = \begin{bmatrix}
A & 0 \\
a^T & -1
\end{bmatrix},
\end{align*}

\begin{align*}
x = \begin{bmatrix}
\mu_0 \\
\mu_1 \\
\dots \\
\mu_\Horizon \\
t
\end{bmatrix},
\hspace{1cm}
\hat{x} = \begin{bmatrix}
\hat{\mu}_0 \\
\hat{\mu}_1 \\
\dots \\
\hat{\mu}_\Horizon
\end{bmatrix},
\hspace{1cm}
b = \begin{bmatrix}
\bar{\mu}_0 \\
0 \\
\dots \\
0 \\
1 \\
\dots \\
1 \\
-10 \epsilon_{\episode-1}
\end{bmatrix},
\hspace{1cm}
c = \begin{bmatrix}
\rewardci_0 \\
\rewardci_1 \\
\dots \\
\rewardci_\Horizon \\
1
\end{bmatrix},
\hspace{1cm}
\end{align*}
where $\bar{\mu}_0$ is the actual initial state distribution of the environment (which we assume to know). We can now write the inner maximization problem above as a linear program:

\begin{align*}
\max_x ~~ c^T x
~~\text{ s.t. }~~
A x = b, ~~
x \geq 0
\end{align*}

The corresponding dual problem is:

\begin{align*}
\min_y ~~ b^T y
~~\text{ s.t. }~~
A^T y \geq c
\end{align*}

Using this we can write the full min-max problem as:

\begin{align*}
\min_{\hat{x},y} ~~ b^T y
~~\text{ s.t. }~~
A^T y \geq c(x), ~~
\tilde{A} x = b, ~~
x \geq 0
\end{align*}
which is a convex optimization problem, if we use:
\[
\rewardci_\timestep(\state, \action) = 2 (\Horizon-\timestep) \Rmax \sqrt{\frac{2 \log\left( 24 \StateSpaceSize \ActionSpaceSize \Horizon (\max(1, {n^\timestep_\episode}(\state, \action)))^2 / \delta \right)}{\max(1, {\hat{n}^\timestep_{\episode+1}}(\state,\action)}})
\]
where $\hat{n}^\timestep_{\episode+1}(\state,\action) = n^\timestep_{\episode}(\state,\action) + \mu_\timestep(\state,\action) * N_E$ is the number of times we expect $\timestep, \state, \action$ to be visited at the next iteration.

Solving this optimization problem yields the state-visitation frequencies $\hat{\mu}_\episode(\state,\action)$. We can then find the exploration policy that induces these state-visitations simply as:
\[
\Policy_{\episode,\timestep}(\action|\state) \coloneqq \frac{\hat{\mu}_\episode^\timestep(\state,\action)}{\sum_{\action'} \hat{\mu}_\episode^\timestep(\state,\action')}.
\]

\section{Experimental Details}
\label{app:experiment_details}

In this section, we provide more details on our experiments. We discuss the environments in detail (\Cref{app:environment_details}), provide some information on the implementation and the libraries and computational resources we used (\Cref{app:implementation_details}), and we provide more full plots of all experiments we discussed in the main paper (\Cref{app:additional_results}).

\subsection{Details on the Environments}
\label{app:environment_details}

\textbf{Four Paths.} The four paths environment has $41$ states and $4$ actions:
\[\StateSpace = \{ c, l_1, \dots, l_{10}, u_1, \dots, u_{10}, r_1, \dots, r_{10}, d_1, \dots, d_{10} \}, \qquad \ActionSpace = \{ \action_1, \action_2, \action_3, \action_4 \},
\]
and a time horizon of $\Horizon=20$. The agent starts in the center state $c$, from which can move in four directions: left ($\action_1$), up ($\action_2$), right ($\action_3$), or down ($\action_4$). Each action $\action_i$ has a probability $p_i$ of failing. If an action fails it moves in the opposite direction. $p_1, \dots, p_4$ are sampled uniformly from $(0, 0.3)$. One of the states $(l_{10}, u_{10}, r_{10}, d_{10})$ is chosen as the goal state at random. The reward in the goal state is $1$, all other rewards are $0$.

\textbf{Double Chain.}
The \emph{Double Chain} MDP, proposed by \citet{kaufmann2021adaptive}, consists of $L$ states $\StateSpace = \{\state_0, \dots, \state_{L-1}\}$, and two actions $\ActionSpace = \{\text{left}, \text{right}\}$,  which correspond to a transition to the left or to the right. When the agent takes an action, there is a $0.1$ probability of moving to the other direction. The state $\state_{L-1}$ has reward $1$, all other states have reward $0$, and the agent starts in the center of the chain at $\state_{(L - 1)/2}$. We choose $L=31$, similar to \citet{kaufmann2021adaptive}. The environment has horizon $\Horizon=20$.

\textbf{Chain.}
The \emph{Chain} MDP, proposed by \citet{metelli2021provably} has $6$ states
$\StateSpace=\{\state_1,\state_2,\state_3,\state_4,\state_5,\state_u\}$ and $10$ actions $\ActionSpace=\{\action_1,\dots,\action_{10}\}$. The agent starts in a random initial state. Taking action $\action_{10}$ moves it right along the chain with probability $0.7$ and to state $\state_u$ with probability $0.3$. Any other action moves the agent right with probability $0.3$ and to state $\state_u$ with probability $0.7$. If the agent is in state $\state_u$, action $\action_{10}$ moves it back to state $\state_1$ with probability $0.05$. Any other action moves it to $\state_1$ with probability $0.01$. The reward is $1$ in all states except $\state_u$ where the reward is $0$. \citet{metelli2021provably} provide an illustration of the environment in Figure 3. We choose $\Horizon=10$ for the chain.

\textbf{Gridworld.}
The \emph{Gridworld}, proposed by \citet{metelli2021provably}, is a $3\times3$ gridworld with an obstacle in the center cell $(2,2)$ and a goal cell at the right center cell $(2, 1)$. The agent starts in a random non-goal cell, and it has $4$ action one to move in each direction. If the agent takes an action with probability $0.3$ the action fails and the agent moves in a random direction instead. If the agent is in the center cell $(2,2)$ which has the obstacle, if the agent would move right it instead stays in the center cell with probability $0.8$. The reward in the goal cell is $1$, all other rewards are $0$. \citet{metelli2021provably} provide an illustration of the gridworld in Figure 6. We choose $\Horizon=10$ for the gridworld.

\textbf{Random MDPs.} We generate random MDPs by uniformly sampling an initial state distribution and transition matrix and normalizing them. The rewards are sampled uniformly between $0$ and $1$. Our random MDPs have $9$ states, $4$ actions and horizon $10$.

\subsection{Implementation Details}
\label{app:implementation_details}

We provide a full implementation of \AlgNameShort in Python, using multiple open sources libraries, including \texttt{cvxpy} and the SCS optimizer \citep{diamond2016cvxpy,scs} for solving the optimization problem in \Cref{app:optimization_problem}, and standard libraries for numerical computing, including \texttt{numpy}, and \texttt{scipy}. We choose Maximum Entropy IRL \citep{ziebart2008maximum} as an IRL algorithm, but \AlgNameShort is agnostic to this choice.

We ran experiments in parallel on a server with two 64 Core AMD EPYC 7742 2.25GHz processors. We estimate a total wall-clock time of less than 48 hours for running all experiments presented in this paper, including $50$ random seeds each.

\subsection{Additional Results}
\label{app:additional_results}

We provide full learning curves for all experiments discussed in the main paper in \Cref{fig:app_results}.

\begin{figure}
    \newlength{\appplotwidth}
    \setlength{\appplotwidth}{0.33\linewidth}
    \centering
    \includegraphics[width=\appplotwidth]{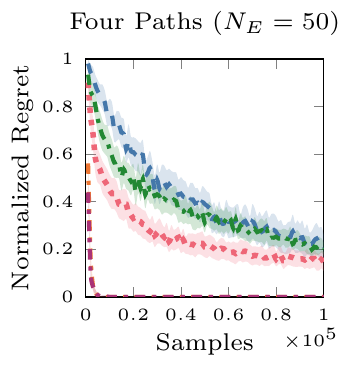}\hfill
    \includegraphics[width=\appplotwidth]{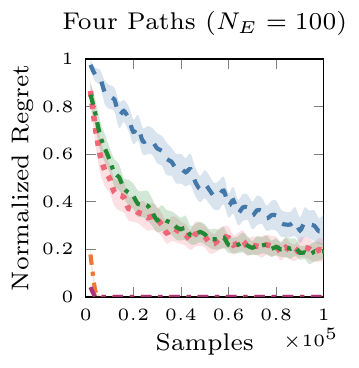}\hfill
    \includegraphics[width=\appplotwidth]{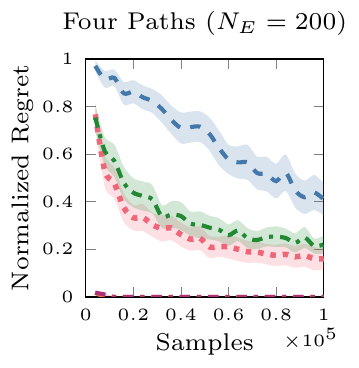} \\
    \includegraphics[width=\appplotwidth]{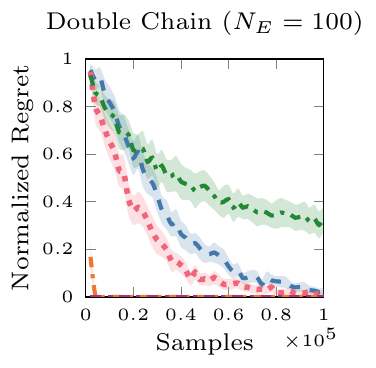}\hfill
    \includegraphics[width=\appplotwidth]{figures/appendix/double_chain_100ep.pdf}\hfill
    \includegraphics[width=\appplotwidth]{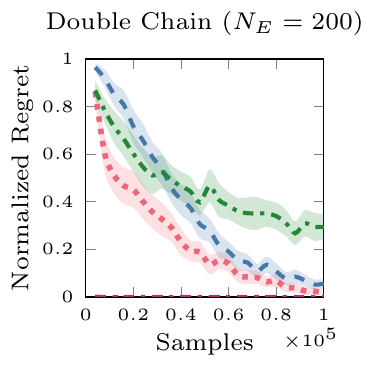} \\
    \includegraphics[width=\appplotwidth]{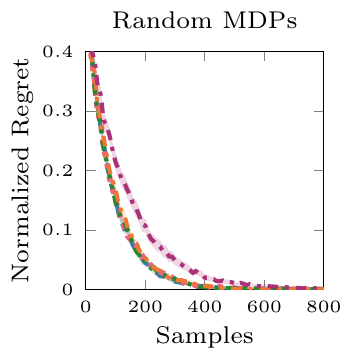}\hfill
    \includegraphics[width=\appplotwidth]{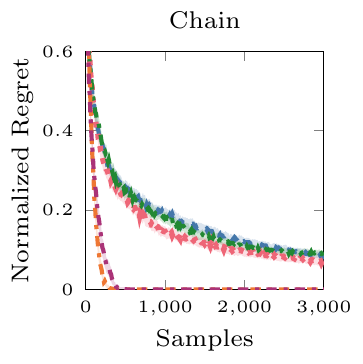}\hfill
    \includegraphics[width=\appplotwidth]{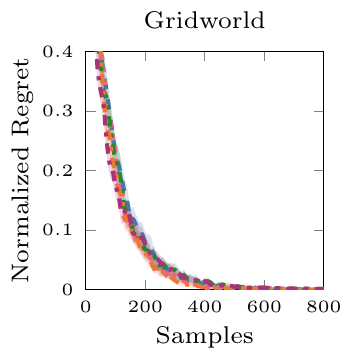} \\
    \includegraphics[width=0.9\linewidth]{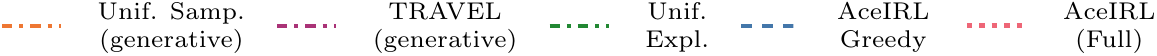}
    \caption{Full learning curves for all experiments shown in \Cref{tab:results}. Similar to \Cref{fig:results}, we show the mean and $95\%$ confidence intervals computed over $50$ random seeds. In addition to the exploration algorithms, we also show uniform sampling and TRAVEL which are much faster in most cases because they have access to a generative model.}
    \label{fig:app_results}
\end{figure}

\section{Connection to Reward-free Exploration}\label{app:reward-free-exploration}

In the \emph{reward-free exploration} problem, introduced by \citet{jin2020reward}, the agent explores an \MDPnoR to learn a transition model. In each iteration it chooses a new exploration policy based on previous data. The goal is to ensure that if the agent is given a reward function $\reward$ after the exploration phase it can find a good policy using its transition model. \citet{jin2020reward} formalize this goal as reducing the error:
\[
\Valfun{\MDP\cup\reward}{\Policy^*}{0}(\state_0) - \Valfun{\MDP\cup\reward}{\hat{\Policy}^*}{0},
\]
where $\hat{\Policy}^*$ is the optimal policy in the estimated MDP $\EstMDP\cup\reward$. Note the striking similarity between this problem, and the active IRL problem, we study in this paper. We want to reduce a similar error (cf. \Cref{def:correct}), but we have additional information about the reward in form of the expert policy.

The \emph{Reward-free UCRL} algorithm, proposed by \citet{kaufmann2021adaptive}, is essentially analogous to \AlgNameShort Greedy (\Cref{sec:stopping-condition}). Reward-free UCRL explores greedily with respect to an upper bound on the value function error. However, the exploration policy needs to be updated after each episode to adapt to the new uncertainty estimates. This might be expensive or not possible in practice. Instead, we could consider a \emph{batched} version of reward-free exploration, where in each iteration the agent explores for $N_E$ episodes, similar to our Active IRL problem. In this setting, a greedy policy w.r.t. uncertainty is suboptimal because it does not adapt to the reduced uncertainty over the $N_E$ episodes.

Instead, we can consider reducing the expected uncertainty at the next iteration, similar to our discussion in \Cref{sec:stopping-condition-dep}. If our error estimate is denoted by $E_\episode(\state,\action)$, we do no longer act greedily w.r.t. $E_\episode$. Instead we try to estimate the error at the next iteration $\hat{E}_{\episode+1}(\state,\action | \Policy)$ as a function of the policy and try to select the policy that reduces this error. In the tabular case, we can formulate this as a convex optimization problem, analogous to \Cref{app:optimization_problem}. We call this adaptation of \AlgNameShort to the reward-free exploration problem \emph{Ace-RF}.

\Cref{fig:rfe_results} shows illustrative results of this algorithm in the batched reward-free exploration setting in the \emph{Double Chain} environment. We find that for larger batch sizes, choosing an exploration policy that reduces future uncertainty is significantly better than reward-free UCRL.

\begin{figure}
    \newlength{\rfeplotwidth}
    \setlength{\rfeplotwidth}{0.2\linewidth}
    \centering
    \includegraphics[width=\rfeplotwidth]{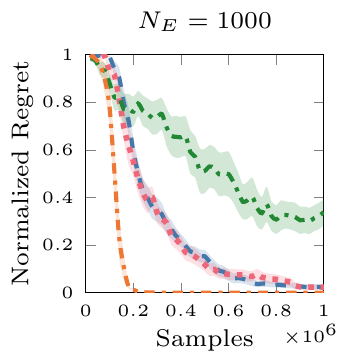}\hfill
    \includegraphics[width=\rfeplotwidth]{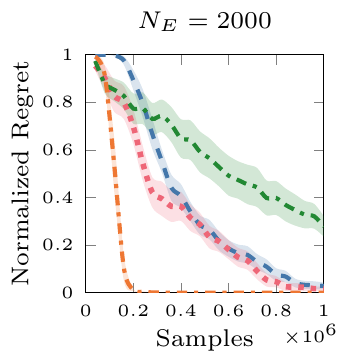}\hfill
    \includegraphics[width=\rfeplotwidth]{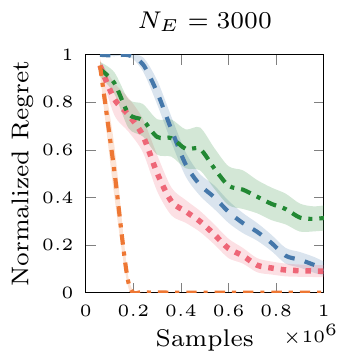}\hfill
    \includegraphics[width=\rfeplotwidth]{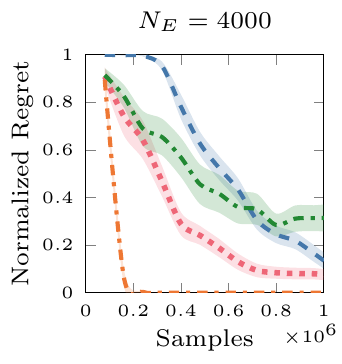}\hfill
    \includegraphics[width=\rfeplotwidth]{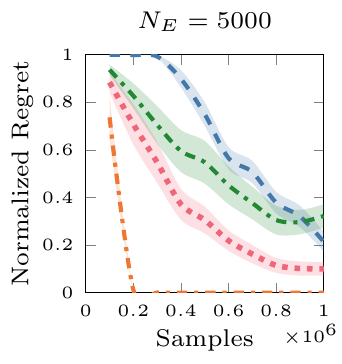} \\
    \includegraphics[width=0.7\linewidth]{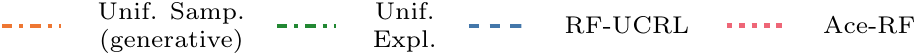}
    \caption{Illustrative experiments for reward-free exploration in the \emph{Double Chain} environment proposed by \citet{kaufmann2021adaptive}. The difference to our Active IRL setting is that the agent does not have access to the expert policy during exploration, but still tries to learn a good model of the environment. During testing it then gets access to the reward function, and the regret measures the suboptimality of the policy trained in the agent's transition model. We find that the ideas used in \AlgNameShort are also useful for batched reward-free exploration with larget $N_E$.}
    \label{fig:rfe_results}
\end{figure}

\end{document}